\def\1{\bm{1}}
\def\rd{{\textnormal{d}}}
\DeclareMathAlphabet{\mathsfit}{\encodingdefault}{\sfdefault}{m}{sl}
\SetMathAlphabet{\mathsfit}{bold}{\encodingdefault}{\sfdefault}{bx}{n}
\newcommand{\rank}{\mathrm{rank}}
\DeclareMathOperator*{\argmin}{arg\,min}
\DeclareMathOperator{\Tr}{Tr}
\DeclareMathOperator{\diag}{diag}
\newcommand{\trinorm}[1]{{\left\vert\kern-0.25ex\left\vert\kern-0.25ex\left\vert #1 
   \right\vert\kern-0.25ex\right\vert\kern-0.25ex\right\vert}}
\newcommand{\fb}{\bm}
\newcommand{\ba}{\fb{a}}
\newcommand{\be}{\fb{e}}
\newcommand{\bg}{\fb{g}}
\newcommand{\bh}{\fb{h}}
\newcommand{\bn}{\fb{n}}
\newcommand{\bu}{\fb{u}}
\newcommand{\bv}{\fb{v}}
\newcommand{\bx}{\fb{x}}
\newcommand{\by}{\fb{y}}
\newcommand{\bz}{\fb{z}}
\newcommand{\bA}{\fb{A}}
\newcommand{\bB}{\fb{B}}
\newcommand{\bD}{\fb{D}}
\newcommand{\bE}{\fb{E}}
\newcommand{\bF}{\fb{F}}
\newcommand{\bI}{\fb{I}}
\newcommand{\bU}{\fb{U}}
\newcommand{\bV}{\fb{V}}
\newcommand{\bmu}{\bm{\mu}}
\newcommand{\btheta}{\fb{\theta}}
\newcommand{\blambda}{\bm{\lambda}}
\newcommand{\bxi}{\bm{\xi}}
\newcommand{\bLambda}{\bm{\Lambda}}
\newcommand{\bGamma}{\fb{\Gamma}}
\newcommand{\cC}{\mathcal{C}}
\newcommand{\cL}{\mathcal{L}}
\newcommand{\cM}{\mathcal{M}}
\newcommand{\cN}{\mathcal{N}}
\newcommand{\cO}{\mathcal{O}}
\newcommand{\cP}{\mathcal{P}}
\newcommand{\cQ}{\mathcal{Q}}
\newcommand{\cS}{\mathcal{S}}
\newcommand{\cT}{\mathcal{T}}
\newcommand{\bbB}{\mathbb{B}}
\newcommand{\bbE}{\mathbb{E}}
\newcommand{\bbN}{\mathbb{N}}
\newcommand{\bbP}{\mathbb{P}}
\newcommand{\bbR}{\mathbb{R}}
\newcommand{\RR}{\mathbb{R}}
\newcommand{\bbS}{\mathbb{S}}
\newcommand{\bbU}{\mathbb{U}}
\newcommand{\bzero}{\mathbf{0}}
\newcommand{\bone}{\mathbf{1}}
\newcommand{\pll}{\kern 0.56em/\kern -0.8em /\kern 0.56em}
\newcommand{\norm}[1]{\ensuremath{\left\| #1 \right\|}}
\newcommand{\bracket}[1]{\ensuremath{\left( #1 \right)}}
\renewcommand{\vec}{\textup{\mbox{vec}}}
\newcommand{\<}{\left\langle}
\renewcommand{\>}{\right\rangle}
\theoremstyle{plain}
\newtheorem{theorem}{Theorem}[section]
\newtheorem{lemma}[theorem]{Lemma}
\newtheorem{example}[theorem]{Example}
\theoremstyle{definition}
\newtheorem{definition}[theorem]{Definition}
\newtheorem{assumption}[theorem]{Assumption}
\newtheorem{setting}[theorem]{Setting}
\newtheorem{remark}[theorem]{Remark}
\newtheorem*{main result}{Main Theorem}
\definecolor{thistle}{rgb}{0.85, 0.75, 0.85}
\renewcommand{\geq}{\geqslant}
\renewcommand{\leq}{\leqslant}
\title{Improving Generalization and Convergence by Enhancing Implicit Regularization}
\author{%
  Mingze Wang$^{1,3,\dag}$
  \quad
  Jinbo Wang$^{1,3}$
  \quad
  Haotian He$^{1,3}$
  \quad
  Zilin Wang$^{1}$
  \quad
  Guanhua Huang$^{5,6}$
  \\
  {\bf Feiyu Xiong}$^{3}$
  \quad
  {\bf Zhiyu Li}$^{3}$
  \quad
  {\bf Weinan E}$^{1,2,3,4}$
  \quad
  {\bf Lei Wu}$^{1,2,\dag}$
  \vspace{.3cm}
  \\
  $^1$School of Mathematical Sciences, Peking University
  \\
  $^2$Center for Machine Learning Research, Peking University
  \\
  $^3$Institute for Advanced Algorithms Research (Shanghai)
  \quad
  $^4$AI for Science Institute
  \\
  $^5$School of Data Science, University of Science and Technology of China \quad
  $^6$ByteDance Research
  \vspace{.3cm}
  \\
  {\small\texttt{$\{$mingzewang, wangjinbo, haotianhe, wangzilin$\}$@stu.pku.edu.cn}}
  \\{\small\texttt{guanhuahuang@mail.ustc.edu.cn}}\quad\ 
  {\small\texttt{$\{$xiongfy, lizy$\}$@iaar.ac.cn}}
  \\
  {\small\texttt{$\{$weinan, leiwu$\}$@math.pku.edu.cn}}
}
\begin{document}

\maketitle

\begin{abstract}
    In this work, we propose an Implicit Regularization Enhancement (IRE) framework to accelerate the discovery of flat solutions in deep learning, thereby improving generalization and convergence. 
    Specifically, IRE decouples the dynamics of flat and sharp directions, which boosts the sharpness reduction along flat directions while maintaining the training stability in sharp directions. We show that IRE can be practically incorporated with {\em generic base optimizers} without introducing significant computational overload. Experiments show that IRE consistently improves the generalization performance for image classification tasks across a variety of benchmark datasets (CIFAR-10/100, ImageNet) and models (ResNets and ViTs). 
    Surprisingly, IRE also  achieves a $2\times$ {\em speed-up} compared to AdamW in the pre-training of Llama models (of sizes ranging from 60M to 229M) on datasets including Wikitext-103, Minipile, and Openwebtext. Moreover, we provide theoretical guarantees, showing that IRE can substantially accelerate the convergence towards flat minima in sharpness-aware minimization (SAM).
\end{abstract}

\vspace{-2.em}~\footnote{\dag\ Correspondence to: Mingze Wang and Lei Wu.}

\section{Introduction}
Deep learning has achieved remarkable success across a variety of fields, including computer vision, scientific computing, and artificial intelligence. The core challenge in deep learning lies in how to train deep neural networks (DNNs) efficiently to achieve superior  performance. Understanding and improving the generalization and convergence of commonly-used optimizers, such stochastic gradient descent (SGD) \citep{robbins1951stochastic,rumelhart1986learning}, in deep learning is crucial for both theoretical research and practical applications. 

Notably, optimizers often exhibit a preference for certain solutions in training DNNs. For instance, SGD and its variants consistently converge to solutions that generalize well, even when DNNs are highly over-parameterized and there are many solutions that generalize poorly. This phenomenon is referred to as \textit{implicit regularization} in the literature~\citep{neyshabur2014search, zhang2017understanding}.



The most popular explanation for implicit regularization is that SGD and its variants tend to converge to flat minima~\citep{keskar2016large, wu2017towards}, and flat minima generalize better~\citep{hochreiter1997flat, jiang2019fantastic}.
However, the process of this \textit{implicit sharpness regularization} occurs at a very slow pace, as demonstrated in works such as \cite{blanc2020implicit}, \cite{li2021happens}, and \cite{ma2022beyond}. 
Consequently, practitioners often use a large learning rate (LR) and extend the training time even when the loss no longer decreases, ensuring the convergence to flatter minima~\citep{he2016deep,goyal2017accurate,hoffer2017train}. 
Nevertheless, the largest allowable LR  is constrained by the need to maintain training stability. In addition, \cite{foret2020sharpness} proposed SAM, which aims to explicitly regularize sharpness during training and has achieved superior performance across a variety of tasks.

\textbf{Our contributions} can be summarized as follows:
\begin{itemize}[leftmargin=2em]
    \item We propose an Implicit Regularization Enhancement (IRE) framework to speed up the convergence towards flatter minima. As suggested by works like \cite{blanc2020implicit}, \cite{li2021happens} and \cite{ma2022beyond}, the implicit sharpness reduction often occurs at a very slow pace, along flat directions. Inspired by this picture, 
    IRE particularly accelerates the dynamics along flat directions, while keeping  sharp directions' dynamics unchanged. 
    As such, IRE can boost the implicit sharpness reduction substantially without hurting  training stability.  For a detailed illustration of this mechanism, we refer to Section \ref{section: example}.

    \item We then provide a practical IRE framework, which can be efficiently incorporated with generic base optimizers. We evaluate the  performance of this practical IRE in both vision and language tasks. For vision tasks, IRE consistently improves the generalization performance of popular optimizers like SGD, Adam, and SAM in classifying the CIFAR-10/100 and ImageNet datasets with ResNets~\citep{he2016deep} and vision transformers (ViTs)~\citep{dosovitskiy2020image}. For language modelling, we consider the  pre-training of Llama models~\citep{touvron2023llama} of various sizes, finding that IRE surprisingly  can accelerate the pre-training convergence. Specifically, we observe a remarkable $2.0\times$ speedup compared to AdamW in the scenarios we examined, despite IRE being primarily motivated to speed up the convergence to flat solutions.

    \item Lastly, we provide theoretical guarantees showing that   IRE can achieves a $\Theta(1/\rho)$-time acceleration over the base SAM algorithm in minimizing the trace of Hessian, where $\rho\in (0,1)$ is a small hyperparameter in SAM. 

\end{itemize}

\subsection{Related works}
\label{section: related works}

\textbf{Implicit sharpness regularization.} There have been  extensive attempts to explain the mystery of implicit regularization in deep learning (see the survey by \cite{vardi2023implicit} and references therein). Here, we focus on  works related to implicit sharpness regularization. 
\cite{wu2018sgd,wu2022does} and \cite{ma2021linear} provided an explanation of implicit sharpness regularization from a dynamical stability perspective. Moreover, in-depth analysis of SGD dynamics near global minima shows that the SGD noise \citep{blanc2020implicit,li2021happens,ma2022beyond,damian2021label} and the edge of stability (EoS)-driven~\citep{wu2018sgd,cohen2021gradient} oscillations~\citep{even2024s} can drive SGD/GD towards flatter minima. Additional studies explored how  training components, including learning rate and batch size~\citep{jastrzkebski2017three}, 
normalization~\citep{lyu2022understanding}, cyclic LR~\citep{wang2023noise},  influence this sharpness  regularization.
Furthermore, some works have provided theoretical evidence explaining the superior generalization of flat minima for neural networks~\citep{ma2021linear,mulayoff2021implicit,wu2023implicitstability,gatmiry2023inductive,wen2023sharpness}. Our work is inspired by this line of research, aiming to boost implicit sharpness regularization by decoupling the dynamics along flat and sharp directions.

\textbf{Sharpness-aware minimization.}  IRE shares the same motivation as SAM in enhancing sharpness regularization, although their specific approaches differ significantly. It is worth noting that the per-step computational cost of SAM is twice that of base optimizers. Consequently, there have been numerous attempts to reduce the computational cost of SAM~\citep{kwon2021asam, liu2022towards, du2021efficient, mi2022make, mueller2024normalization}. In contrast, the per-step computational cost of IRE is only approximately 1.1 times that of base optimizers (see Table \ref{table: running time}). Moreover, we provide both theoretical and experimental evidence demonstrating that the mechanism of IRE in boosting sharpness regularization is nearly orthogonal to that of SAM.

{\bf Optimizers for large language model (LLM) pre-training.} 
(Momentum) SGD~\citep{sutskever2013importance,nesterov1983method} and its adaptive variants like Adagrad \citep{duchi2011adaptive}, RMSProp~\citep{tieleman2012lecture}, and Adam~\citep{kingma2014adam} have been widely used in DNN training. Despite the efforts in designing better adaptive gradient methods~\citep{liu2019variance,luo2019adaptive,heo2020adamp,zhuang2020adabelief,xie2022adaptive,xie2022adan}, AdamW(Adam+decoupled weight decay)~\citep{loshchilov2017decoupled} has become the default optimizer in LLM pre-training. Recently,~\cite{chen2024symbolic} discovered Lion by searching the space of adaptive first-order
optimizers;~\cite{liu2023sophia} introduced Sophia, a scalable second-order optimizer. 
In this paper, we instead empirically demonstrate that IRE can accelerate the  convergence of AdamW in the pre-training of Llama models. 
\subsection{Notations} 
Throughout this paper, let $\cL:\RR^p\mapsto\RR_{\geqslant 0}$ be the function of total loss, where $p$ denotes the number of model parameters.
For a $\cC^2$-submanifold $\cM$ in $\bbR^p$, we denote the tangent space of $\cM$ at $\btheta\in\cM$ as $\cT_{\btheta}\cM$, which is a linear subspace in $\bbR^p$. For $f\in \cC^1(\cM)$ and $\btheta\in\cM$, let  $\nabla_{\cM}f(\btheta) := \mathbb{\cQ}_{\cT_{\btheta}\cM}\nabla f(\btheta)$ denote the Riemannian gradient, where $\cQ_{\cT_{\btheta}\cM}:\RR^p\mapsto\RR^p$ denotes the orthogonal projection to $\cT_{\btheta}\cM$.
For a symmetric matrix $A\in\bbR^{p\times p}$, its eigen pairs are denoted as $\{(\lambda_i,\bu_i)\}_{i\in[p]}$ with the order $\lambda_1\geq\cdots\geq\lambda_p$. We use $P_{i:j}(A)=\sum_{k=i}^j \bu_k \bu_k^\top$  to denote the projection operator onto  $\mathrm{span}\{\bu_i,\dots,\bu_j\}$. 
Denote $\cN(\bmu,\Sigma)$ as the Gaussian distribution with mean $\bmu$ and covariance matrix $\Sigma$, and $\bbU(\cS)$ as the uniform distribution over a set $\cS$. Given a vector $\bh=(h_1,\dots,h_p)$, let $|\bh|=(|h_1|,\dots,|h_p|)$. We denote by $\bm{1}$ the all-ones vector.
We will use standard big-O notations like $\cO(\cdot)$, $\Omega(\cdot)$, and $\Theta(\cdot)$ to hide constants.

\section{An Illustrative Example Motivating IRE}\label{section: example}
In this section, we provide an illustration of how the dynamics along flat directions can {\em reduce the sharpness} (curvatures along sharp directions) and how IRE can accelerate this sharpness reduction.
To this end, we consider the following phenomenological problem:
\begin{equation}\label{equ: toy}
\min_{\btheta\in\bbR^p}\cL(\btheta):=\bv^\top H(\bu)\bv/2,
\end{equation}
where $\bv\in\bbR^{m}$, $\bu\in\bbR^{p-m}$, and $\btheta=\vec(\bu,\bv)\in\bbR^p$. We assume $H(\cdot)\in\cC^2(\bbR^{p-m})$ and $\inf_{\bu}\lambda_{\min}(H(\bu))>0$. Then, the minimizers of $\cL(\cdot)$ form a $m$-dim manifold $\cM=\{(\bu,\bv):\bv=\bzero\}$ and the Hessian at  any $\btheta\in\cM$ is given by $\nabla^2\cL(\btheta)=\begin{pmatrix}
    \bzero & \bzero \\ \bzero & H(\bu)
\end{pmatrix}$. For clarity, we shall call $\bu$ and $\bv$ the flat and sharp directions, respectively.

\begin{example}
The loss landscape of fitting zero labels with two-layer neural networks (2LNNs) exhibits exactly the form \eqref{equ: toy}. Let $f(\bx;\btheta)=\ba^\top\phi(\bx;W)$ be a 2LNN with $\btheta=\vec(W,\ba)$. Then  $\cL(\btheta)=\bbE_{(\bx,y)}[(f(\bx;\btheta)-y)^2]/2=\ba^\top\bbE_{\bx}[\phi(\bx;W)\phi(\bx;W)^\top]\ba/2=:\ba^\top H(W)\ba/2$.
\end{example}

For breviety,
we further assume $H(\bu)=\diag(\blambda(\bu))$ with $\blambda(\bu)=(\lambda_1(\bu),\cdots,\lambda_m(\bu))$. In this case, the GD dynamics can be naturally decomposed into the flat and sharp directions as follows
\begin{equation}\label{equ: toy gd dynamics}
\begin{aligned}
    \bu_{t+1}&=\bu_t-\frac{\eta}{2}\sum_{i=1}^m v_{t,i}^2\nabla\lambda_i(\bu_t),
    \\
    \bv_{t+1}&=(\bone-\eta\blambda(\bu_t))\odot \bv_{t},
\end{aligned}
\end{equation}
where $\odot$ denotes the element-wise multiplication of two vectors. 

\textbf{The implicit sharpness regularization.}
From Eq.~\eqref{equ: toy gd dynamics}, we can see that 1) the flat direction $\bu_t$'s dynamics  monotonically reduces the sharpness $\blambda(\bu)$ as long as $\bv_t$ is nonzero; 2) the sharp direction $\bv_t$'s dynamics determines the speed of sharpness reduction. The larger $|\bv_t|$ is, the faster the curvature $\blambda(\bu)$ decreases. Particularly, when near convergence, we have $|\bv_t|=o(1)$ and thus the implicit sharpness reduction is {\em very slow} during the late phase of GD. Figure \ref{fig: toy, gd converge} provides a visualization of this slow implicit sharpness reduction.

\begin{figure}[!ht]
    \hspace{-.4cm}
    \centering
    \subfloat[\small GD with $\eta=1$.]{\label{fig: toy, gd converge}
    \includegraphics[width=3.8cm]{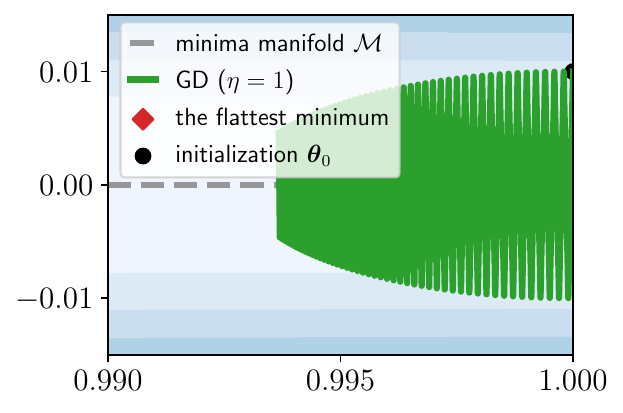}}
    \hspace{-.3cm}
    \subfloat[\small GD with $\eta=2$.]{\label{fig: toy, gd diverge}
    \includegraphics[width=3.7cm]{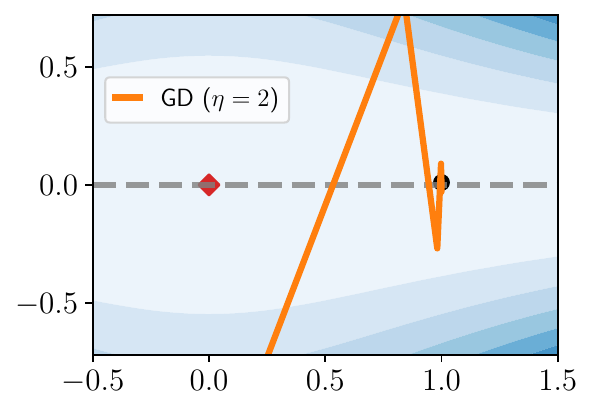}}
    \hspace{-.3cm}
    \subfloat[\small IRE with various $\kappa$'s v.s. GD with $\eta=1$.]{\label{fig: toy, ire}
    \includegraphics[width=6.6cm]{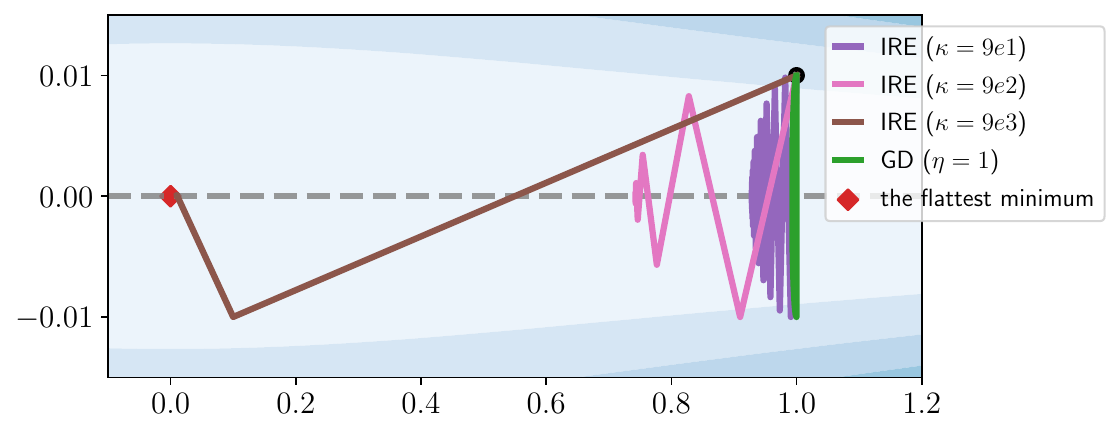}}
    \caption{\small A $2$-d example of~\eqref{equ: toy}: $\cL(u,v)=(1+u^2)v^2/2$. The {\color{gray}{gray arrows}} denote to the minima manifold $\cM=\{(u,v): v=0\}$, where 
    the smaller the $|u|$, the flatter the minimizer. 
    The {\color{red!80!black}{red marker}} highlights the flattest minimizer $(0,0)$.
    (a) The dynamics of {\color{green!50!black}{GD ($\eta=1$)}}, which moves {\em slowly} towards flatter minima as it converges.
    (b)  The dynamics of {\color{orange}{GD ($\eta=2$)}}, which diverges due to the excessively large $\eta$.
    (c) The behavior of our IRE approach with varying $\kappa$'s v.s. {\color{green!50!black}{GD ($\eta=1$)}}. Is is shown that IRE can significantly accelerate the $u_t$'s dynamics, almost reaching the flattest minimum $(0,0)$ when taking a very large $\kappa$.}
    \label{fig: toy}
    \vspace*{-1.em}
\end{figure}



\paragraph*{Accelerating the sharpness reduction.} Inspired by the above analysis, we can accelerate the sharpness reduction by speeding up the flat directions' dynamics. 
To this end, there are two approaches:
\begin{itemize}[leftmargin=2em]
\item {\bf Naively increasing the global learning rate $\eta$ {\color{orange}{(fail)}}.} 
Increasing $\eta$ accelerates the dynamics of $\bu_t$, but the largest allowed $\eta$ is constrained by curvatures of sharpest directions. In GD \eqref{equ: toy gd dynamics}, to maintain training stability, $\eta$ must be smaller than $2/\max_{i}\lambda_i(\bu_t)$. Otherwise, $\bv_t$'s dynamics will blow up. 
As illustrated in Figure~\ref{fig: toy, gd diverge},  setting $\eta=2$ leads to divergence, whereas $\eta=1$ ensures convergence.

\item {\bf Increasing only the flat directions' learning rate {\color{violet}{(our approach, IRE)}}}. Specifically, for GD \eqref{equ: toy gd dynamics}, the GD-IRE dynamics is given by 
\vspace{-.2cm}
\begin{equation}\label{equ: toy ire dynamics}
\begin{aligned}
    \bu_{t+1}&=\bu_t-{\color{violet}{(1+\kappa)}}\frac{\eta}{2}\sum_{i=1}^m v_{t,i}^2\nabla\lambda_i(\bu_t),
    \\
    \bv_{t+1}&=(\bone-\eta\blambda(\bu_t))\odot \bv_{t},
\end{aligned}
\end{equation}
where $\kappa>0$ controls the enhancement strength.  In GD-IRE~\eqref{equ: toy ire dynamics},  $\bu_t$'s dynamics is {\bf\color{violet} $(1+\kappa)$ faster} than that of GD~\eqref{equ: toy gd dynamics}. Notably, the sharp directions' dynamics ($\bv_t$) are unchanged. The choice of $\kappa$ only needs to maintain the stability of flat directions' dynamics, for which, we can always take a significantly large $\kappa$ to enhance the sharpness regularization.
As demonstrated in Figure~\ref{fig: toy, ire}, IRE with larger $\kappa$ always find  flatter minima.
\end{itemize}

\begin{remark}[The generality]
It is worth noting that similar implicit sharpness regularization also holds for SGD~\citep{ma2022beyond,li2021happens} and SAM~\citep{wen2023how}. 
In this section, we focus on the above toy model and GD mainly for illustration. In Appendix \ref{appendix: example}, we provide an analogous illustrative analysis of how IRE accelerates the sharpness reduction of SGD. In Section~\ref{section: theory}, we further provide  theoretical evidence to show that IRE can boost the implicit sharpness regularization of SAM.
\end{remark}

\section{A Practical Framework of Implementing IRE}
\label{section: algorithm}



Although the preceding illustration of IRE is for GD, in practice, we can incorporate IRE with {\em any base optimizers}. Specifically, 
for a generic update: $\btheta_{t+1}=\btheta_t - \eta \bg_t$, the corresponding IRE modification is given by
\begin{equation}\label{eqn: ire-genetric}
    \btheta_{t+1} = \btheta_t - \eta (\bg_t+\kappa \cP_{t}\bg_t),
\end{equation}

where $\kappa$ denotes the enhancement strength and $\cP_t:\bbR^p\to\bbR^p$ projects $\bg_t$ into the {\em flat directions} of the landscape. 
The flat directions and corresponding projection operator $\cP_t$ can be estimated using the Hessian information.

However, estimating the full Hessian matrix $\nabla^2 \cL(\btheta_t)\in\RR^{p\times p}$ is computationally infeasible. Consequently, we propose to use only the {\em diagonal Hessian} $\diag(\nabla^2 \cL(\btheta_t))\in\bbR^{p}$ to estimate $\cP_t$. 
Let $\bh_t\in\bbR^p$ be an estimate of the diagonal Hessian. Then, we perform the projection  as follows
\begin{align}\label{eqn: proj}
\cP_t \bg_t &= \bn_t \odot \bg_t, \text{ with } 
    (\bn_t)_i = \begin{cases}
    1 & \text{ if } (|\bh_t|)_i\leqslant \texttt{Top}_{\rm small}(|\bh_t|,\gamma)\\ 
    0 & \text{ otherwise }
    \end{cases},
\end{align}
where $\gamma\in (0,1)$ and  $\texttt{Top}_{\rm small}(|\bh_t|,\gamma)$  returns the $\lfloor p\cdot \gamma\rfloor$-th smallest value in $|\bh_t|$. Note that  $\bn_t\in\RR^p$ denotes a mask vector and the above approximate projection essentially  masks the top-$(1-\gamma)$ sharp coordinates out.  As such, the projection  \eqref{eqn: proj} will retain the {\em top-$\gamma$ flat coordinates}. Noticing that in DNNs, there are much more flat directions  than sharp directions ~\citep{yao2020pyhessian}, we thus often use $\gamma>0.5$ in practice.

\begin{algorithm}[!ht]
	\caption{{\bf Practical IRE} (A practical framework of implementing IRE)}
	\label{alg: ire}
	\KwIn{\text{$\btheta_0$, $T$, $K$, learning rate $\{\eta_t\}_{t}$, warm-up time $T_{\rm w}$, \colorbox{thistle}{\!IRE hyperparams: $\kappa\geq 0$ and $\gamma\in (0.5,1)$\!};}}  
    \For{$t=T_{\rm w},\cdots,T-1$}{
        Compute the original update direction $\bg_t$ according to the base optimizer;
        \\
        \If {$(t-T_{\rm w})\!\!\mod K=0$}
            {
            Estimate the diagonal Hessian $\bh_t\in\RR^p$ using Eq.~\eqref{equ: fisher estimate};\\
            Update the mask $\bn_t\in\RR^p$ using Eq.~\eqref{eqn: proj};
            } 
        \Else{$\bn_{t}=\bn_{t-1}$}
        $\btheta_{t+1}=\btheta_t-\eta_t\big(\bg_t+$\colorbox{thistle}{$\kappa \bn_t\odot \bg_t$}$\big)$;
        }
    \KwOut{$\btheta_T$.}
\end{algorithm}
\vspace{-.2cm}

{\bf A light-weight estimator of the diagonal Hessian.}
Let $\ell(\cdot,\cdot)$ be the  cross-entropy loss. 
Given an input data $\bx\in\RR^{d_x}$ and label $\by\in\RR^{d_y}$, let the model's prediction be $f(\bx;\btheta)\in\RR^{d_y}$.
The Fisher (Gauss-Newton) matrix $F(\btheta)$ is widely acknowledged to be a good approximation of the Hessian, particularly near  minima. Thus, we can estimate the diagonal Hessian by $\bh_t={\rm diag}(F(\btheta_t))$, which has been widely used in deep learning optimization~\citep{martens2015optimizing,grosse2016kronecker,george2018fast,mi2022make,liu2023sophia}.
Given an input batch $\{(\bx_b,\by_b)\}_{b=1}^B$, the empirical diagonal Fisher is given by 
$
{\rm diag}(\hat{F}(\btheta))=\frac{1}{B}\sum_{b=1}^B\nabla\ell(f(\bx_b;\btheta);\hat{\by}_b)\odot\nabla\ell(f(\bx_b;\btheta);\hat{\by}_b), \text{ where }\hat{\by}_b\sim{\rm softmax}(f(\btheta;\bx_b)).
$ 
However, as noted by~\cite{liu2023sophia},
implementing this estimator is computationally expensive due to the need to calculate $B$ single-batch gradients.
\cite{liu2023sophia} proposed a more convenient estimator ${\rm diag}(\hat{F}_{\rm eff}(\btheta))$,  only requires computing the mini-batch gradient $\nabla\hat{\cL}_B(\btheta)=\frac{1}{B}\sum_{b=1}^B\nabla\ell(f(\bx_b;\btheta);\hat{\by}_b) \text{ with }\hat{\by}_b\sim{\rm softmax}(f(\bx_b;\btheta))$:
\begin{equation}\label{equ: fisher estimate}
\begin{aligned}
   \bh_t = {\rm diag}(\hat{F}_{\rm eff}(\btheta))=B\cdot\nabla\hat{\cL}_B(\btheta)\odot\nabla\hat{\cL}_B(\btheta).
\end{aligned}
\end{equation}
According to~\cite{liu2023sophia}, this estimator is an unbiased estimate of the empirical diagonal Fisher, i.e., $\bbE_{\hat{\by}}[{\rm diag}(\hat{F}_{\rm eff}(\btheta))]=\bbE_{\hat{\by}}[{\rm diag}(\hat{F}(\btheta))]$. For more discussions on the efficiency of this estimator, please refer to \cite[Section 2]{liu2023sophia}.
Additionally, for squared loss, one can simply use Fisher as the estimator~\citep{liu2023sophia}.

{\bf The practical IRE and computational efficiency.} The practical IRE is summarized in Algorithm~\ref{alg: ire}, which is notably lightweight. The estimation of $\bh_t$ using \eqref{equ: fisher estimate} requires computational resources roughly equivalent to one back-propagation.
Consequently, by setting $K=10$ in Algorithm~\ref{alg: ire} (estimating the projection every 10 steps), the average per-step computational load of IRE is {\em only $1.1$ times} that of the base optimizer. 
This claim can be empirically validated as shown in Table \ref{table: running time}.


\section{Experiments}
\label{section: experiments}

In this section, we evaluate how IRE performs when incorporating with various base optimizers. 
Specifically, we examine the incorporation of IRE with SGD (SGD-IRE),  SAM (SAM-IRE), and  AdamW (AdmIRE) across vision and language tasks. 

\subsection{Image classification}
\label{section: exp: CV}

\subsubsection{Validating our motivation} 
\label{section: exp: CV: validate theory}

To show that IRE can accelerate the sharpness reduction, 
we train WideResNet-16-8~\citep{zagoruyko2016wide} on CIFAR-10 dataset~\citep{krizhevsky2009learning} by SAM-IRE (with $K=10$, varying $\kappa$ and $\gamma$). 
Here, we incoporate IRE into SAM starting from the $30$-th epochs.
We vary $\gamma\in \{0.8,0.9,0.95\}$ and $\kappa\in \{0, 2,5,10\}$.
Regarding the learning rate (LR),  both constant LR  and decayed LR are considered.
The sharpness is measured by  $\Tr(\nabla^2\cL(\btheta))$.
Further experimental details can be found in Appendix~\ref{appendix: exp}.

As depicted in Fig.~\ref{fig: validation, cifar}(a), SAM-IRE (with constant LR) consistently finds flatter solutions compared to SAM and higher  $\kappa$ always leads to flatter minima. Additionally, SAM-IRE also shows robustness to variations of $\gamma$.
For SAM-IRE with decayed LR (Fig.~\ref{fig: validation, cifar}(b)), SAM-IRE still consistently finds flatter solutions than SAM.
Notably, flatter solutions correlate positively with lower training loss and higher test accuracy (Fig.~\ref{fig: validation, cifar}(c,d)).

\vspace{-.4cm}
\begin{figure}[!ht]
    \centering
    \subfloat[\small sharpness, constant LR.]{
    \includegraphics[width=3.4cm]{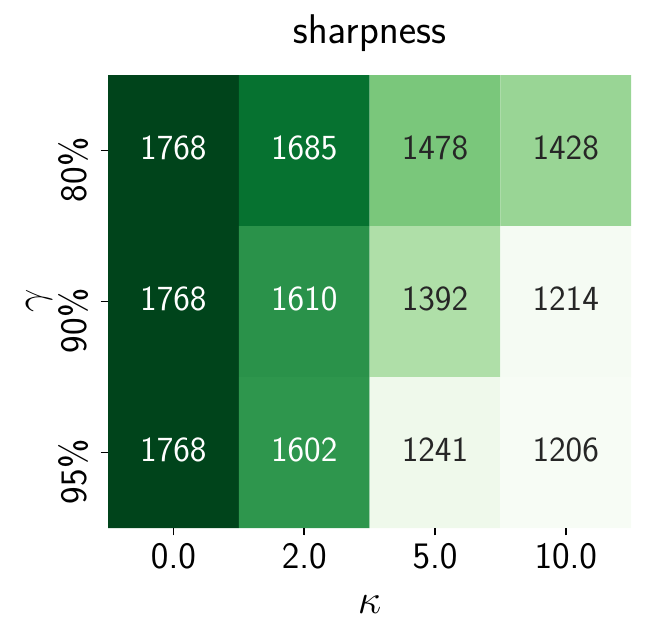}}
    \subfloat[\small sharpness, decayed LR.]{
    \includegraphics[width=3.4cm]{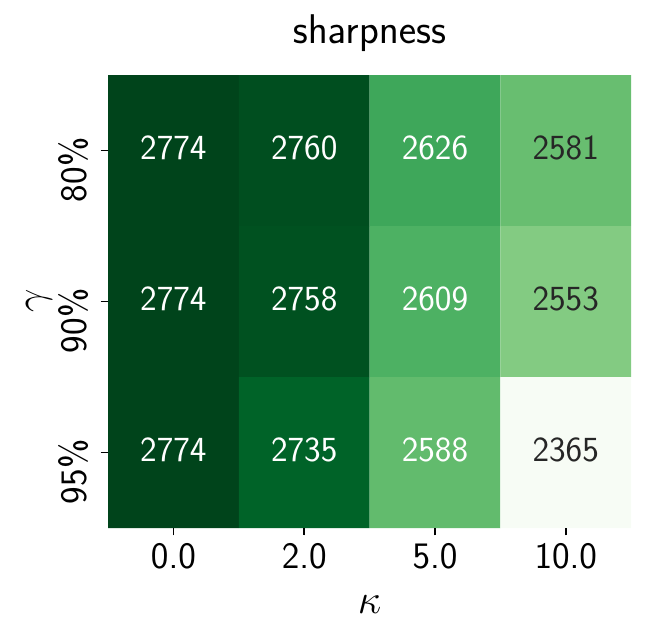}}
    \subfloat[\small train loss, decayed LR.]{
    \includegraphics[width=3.4cm]{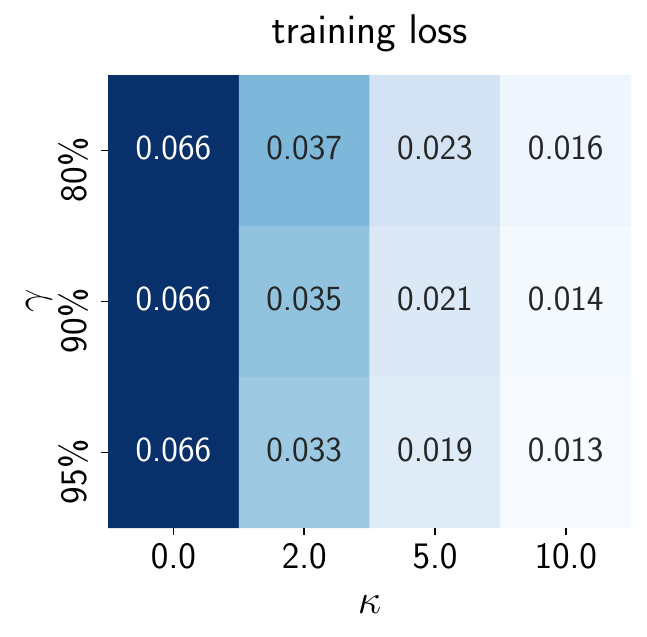}}
    \subfloat[\small test acc, decayed LR.]{
    \includegraphics[width=3.4cm]{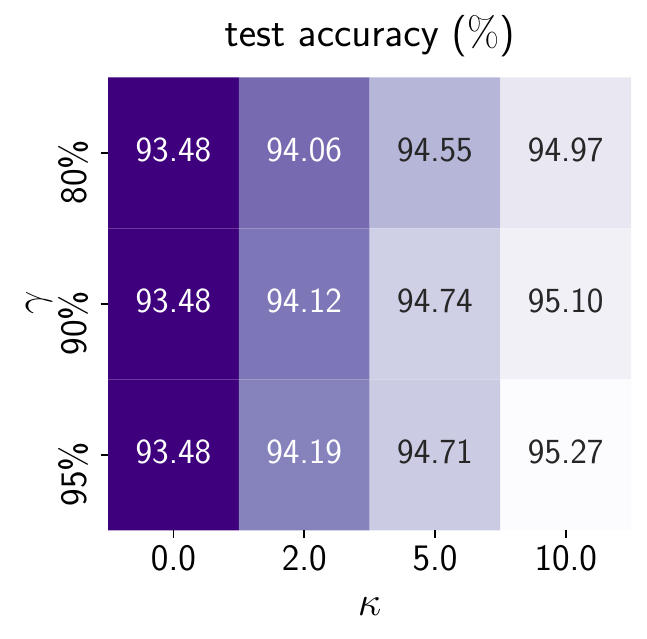}}
    \vspace{-.1cm}
    \caption{\small Training WRN-16-8 on CIFAR-10 by SAM-IRE with varying $\gamma,\kappa$. Particularly, the case of $\kappa=0$ correspond to the standard SAM.
    }
    \label{fig: validation, cifar}
\end{figure}

\subsubsection{IRE can consistently improve generalization}
\label{section: exp: CV: default results}

{\bf Convolutional Neural Networks (CNNs).} In this experiment, we first consider the classification of   CIFAR-\{10,100\} with WideResNet-28-10~\citep{zagoruyko2016wide} and ResNet-56~\citep{he2016deep}. Both SGD and SAM optimizers are adopted. All the experiments use base data augmentation and label smoothing.
For SGD-IRE/SAM-IRE, we fix $K=10$, and tune hyperparameters $\gamma$ and $\kappa$ via a grid search over
$\gamma\in\{0.99, 0.9, 0.8\}$ and $\kappa\in\{1,2\}$. 
The total epochs are set to 100 for CIFAR-10 and 200 for CIFAR-100, and we switch from SGD/SAM to SGD-IRE/SAM-IRE when the training loss approaches 0.1.
The other experimental details are deferred to Appendix~\ref{appendix: exp} and the results are shown in Table~\ref{table: resnet cifar}.

Secondly, we evaluate IRE for training ResNet-50 on ImageNet~\citep{deng2009imagenet}. The experimental details are deferred to Appendix~\ref{appendix: exp} and the results are shown in Table~\ref{table: resnet imagenet}.

{\bf Vision Transformers (ViTs).} We also examine the impact of IRE on generalization of ViT-T and ViT-S~\citep{dosovitskiy2020image} on CIFAR100. 
The default optimizers used are AdamW and SAM~\citep{mueller2024normalization}.
Strong data augmentations (basic + AutoAugment) are utilized.
The total epochs are set to $200$ and we switch from AdamW/SAM to AdmIRE/SAM-IRE when the training loss approaches $0.5$. 
For AdmIRE/SAM-IRE, we fix $K=10$, and tune hyperparameters $\gamma$ and $\kappa$ via a grid search over $\gamma\in\{0.99, 0.9, 0.8\}$ and $\kappa\in\{20,50\}$. 
Other experimental details are deferred to Appendix~\ref{appendix: exp}. 
The results are shown in Table~\ref{table: vit cifar}.

Additionally, we evaluate IRE for training ViT-S on ImageNet. The experimental details are deferred to Appendix~\ref{appendix: exp} and the results are shown in Table~\ref{table: vit imagenet}.

As demonstrated in Table~\ref{table: resnet cifar},~\ref{table: resnet imagenet},~\ref{table: vit cifar}, and~\ref{table: vit imagenet}, IRE {\em consistently improves} generalization of  SGD, AdamW and  SAM across all settings examined.

\vspace{-.1cm}
\begin{table}[!ht]
    \centering
    \hspace{-1.2cm}
    \begin{minipage}[t]{0.7\textwidth}
    \small
    \caption{\small WRN-28-10/ResNet-56 on CIFAR-10/100.}
    \begin{tabular}{c||c|c||c|c}
    \hline\hline
    \multirow{2}*{~} & \multicolumn{2}{c||}{ WRN-28-10} & \multicolumn{2}{c}{ ResNet-56} \\ \cline{2-5}
     &   CIFAR-10  &   CIFAR-100  &   CIFAR-10  &   CIFAR-100\\\hline
      SGD &   95.84 &    80.81 &   93.49 &   72.81
    \\ 
     SGD-IRE &  {\bf 96.24} ({\bf +0.40}) &  {\bf 81.49} ({\bf +0.68}) &  {\bf 93.78} ({\bf +0.29}) &  {\bf 73.78} ({\bf +0.97})
    \\
    \hline
     SAM &  96.58 &  83.05 &  94.05 &  75.54
    \\ 
     SAM-IRE &  {\bf 96.70} ({\bf +0.12}) &  {\bf 83.50} ({\bf +0.45}) &  {\bf 94.46} ({\bf +0.41}) &  {\bf 75.86} ({\bf +0.32})
    \\ \hline\hline
    \end{tabular}
    \label{table: resnet cifar}
    \end{minipage}
\end{table}

\vspace{-.2cm}
\begin{table}[!ht]
    \centering
    \begin{minipage}[t]{0.5\textwidth}
    \small
    \caption{\small ResNet-50 on ImageNet.}
    \begin{tabular}{c||c|c}
    \hline\hline
     \multirow{2}*{~}
     &  Top-1  &  Top-5\\\hline
     SGD &   76.81 &  93.31 
    \\ 
     SGD-IRE &  {\bf 77.04} ({\bf +0.23}) &  {\bf 93.58} ({\bf +0.27}) 
    \\
    \hline
     SAM &  77.47  &   93.90
    \\ 
     SAM-IRE &  {\bf 77.92} ({\bf +0.45}) &  {\bf 94.12} ({\bf +0.22})
    \\ \hline\hline
    \end{tabular}
    \label{table: resnet imagenet}
    \end{minipage}
    \hspace{-.5cm}
    \begin{minipage}[t]{0.5\textwidth}
    \small
    \caption{\small ViT-T/S on CIFAR-100.}
    \begin{tabular}{c||c|c}
    \hline\hline
     \multirow{2}*{~}
     &  ViT-T  &  ViT-S\\\hline
     AdamW &   63.90 &  65.43
    \\ 
     AdmIRE &  {\bf 67.05} ({\bf +3.15}) &  {\bf 68.39} ({\bf +2.96}) 
    \\
    \hline
     SAM &  64.25  &   66.93
    \\ 
     SAM-IRE &  {\bf 67.33} ({\bf +3.08}) &  {\bf 70.47} ({\bf +3.54})
    \\ \hline\hline
    \end{tabular}
    \label{table: vit cifar}
    \end{minipage}
\end{table}

\newpage

\begin{table}[!ht] 
    \small
    \caption{\small ViT-S on ImageNet.}
    \centering
    \vspace{.1cm}
    \begin{tabular}{c||c|c}
        \hline\hline
          & Top-1 & Top-5 \\ \hline
          AdamW &  78.7 &  94.0  \\ 
          AdmIRE ($\kappa=2,\gamma=0.6$) & {\bf 79.0} ({\bf +0.3})  & {\bf 94.3} ({\bf +0.3})   \\ 
          AdmIRE ($\kappa=2,\gamma=0.8$) & {\bf 79.1} ({\bf +0.4}) &  {\bf 94.2} ({\bf +0.2})    \\\hline\hline
    \end{tabular}
    \label{table: vit imagenet}
\end{table}

\subsection{Large language model pre-training}
\label{section: exp: NLP}

We now evaluate IRE in the pre-training of decoder-only large language models (LLMs).
Following the training protocol of Llama models, we employ the AdamW optimizer with hyperparameters $\beta_1=0.9,\beta_2=0.95$ and weight decay $\lambda=0.1$~\citep{touvron2023llama}. The learning rate strategy includes a warm-up phase followed by a cosine decay scheduler, capped at \texttt{lr\_max}.
In each experiment, we tune \texttt{lr\_max} only for AdamW and use  it also for AdmIRE, for which the IRE is activated at the end of warm-up phase.



\vspace{-.2cm}
\subsubsection{Computational efficiency and hyperparameter robustness}
\label{section: exp: NLP: robustness}
\vspace{-.1cm}

The first experiment is conducted to verify both the computational efficiency and the robustness of hyperparameters ($\gamma,\kappa$) in IRE for pre-training tasks.
Specifically, we train a 2-layer decoder-only Transformer (8M) on the Wikitext-2 dataset (4.3M)~\citep{merity2016pointer} by AdamW and AdmIRE (with $K=10$ and varying $\gamma,\kappa$).
The total training duration is 100k steps, including a 3k-step warm-up phase.

\begin{wraptable}{r}{0.35\linewidth}
    \centering
    \caption{\small Wall-clock time on 1 A800.}
    \vspace{-.1cm}
    \begin{tabular}{c|c}
    \hline\hline
    \small
    Algorithm &  \small time (/step) \\\hline
    \small AdamW &  \small 0.165s \\\hline
    \small AdmIRE &  \small 0.185s \\ \hline\hline
    \end{tabular}
    \label{table: running time}
\end{wraptable}
\vspace{.1cm}
First, we tune \texttt{lr\_max} in AdamW, identifying the optimal \texttt{lr\_max}=\texttt{6e-4}. 
Subsequently, we train both AdamW and AdmIRE using this  \texttt{lr\_max}.

\vspace{-.25cm}
\paragraph{Computational efficiency.} As shown in Table~\ref{table: running time}, AdmIRE with $K=10$ (estimating the projection mask every 10 steps) is computationally efficient: the average time per step of AdmIRE is only $1.12$ times that of AdamW, corresponding to the theoretical estimation ($1.1$ times). 

\begin{wrapfigure}{r}{0.35\linewidth}
    \vspace{-1.cm}
    \includegraphics[width=5cm]{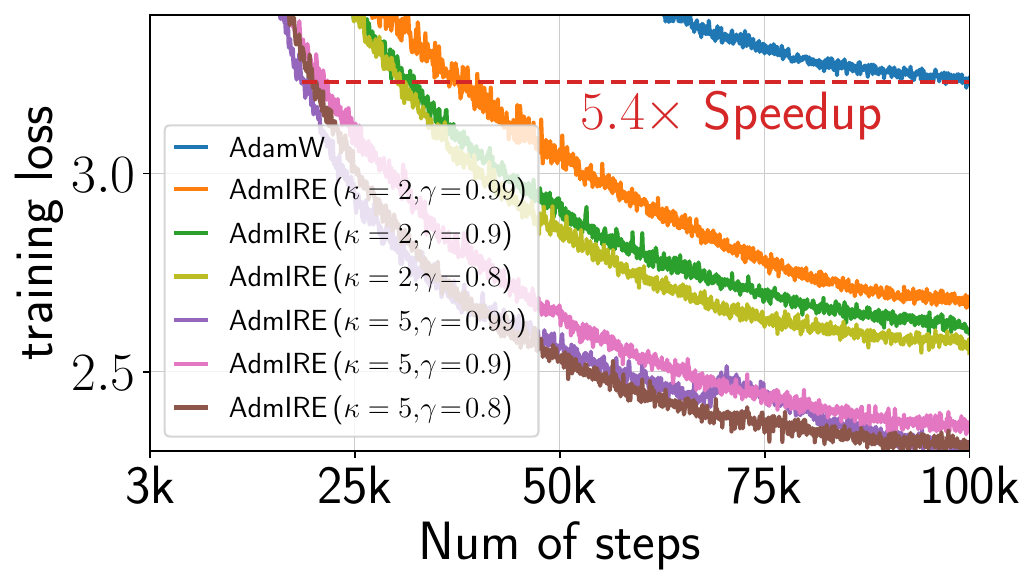}
    \vspace{-.55cm}
    \caption{\small Transformer on wikitext-2.}
    \label{fig: wiki2}
    \vspace{-.6cm}
\end{wrapfigure}
\vspace{-.25cm}
\paragraph{Robustness to hyperparameters.} Figure~\ref{fig: wiki2} shows that AdmIRE, with varying $\gamma$ and $\kappa$, consistently speeds up the pre-training. 
Remarkably, with the best configuration, AdmIRE can achieves {\color{violet}{\bf 5.4$\times$ speedup}} than well-tuned AdamW.

More experimental details and results are deferred to Appendix~\ref{appendix: exp}.




\subsubsection{Experiments on Llama models}
\label{section: exp: NLP: Llama}
\vspace{-.1cm}

Llama~\citep{touvron2023llama}, a popular open LLM, exhibits remarkable capabilities across general domains. In this section, we examine the performance of AdmIRE in training Llama models of various sizes across various  datasets:

\vspace{-.1cm}
\begin{itemize}[leftmargin=2em]
\item {\bf Llama (60M) on wikitext-103 (0.5G).}
{Wikitext-103}~\citep{merity2016pointer} serves as a standard language modeling benchmark for pre-training, which contains 103M training tokens from 28K articles, with an average length of 3.6K tokens per article. 

\vspace{-.05cm}
\item {\bf Llama (119M) on minipile (6G).}
{Minipile}~\citep{kaddour2023minipile}, a 6GB subset of the deduplicated Pile (825GB)~\citep{gao2020pile} presents a highly diverse text corpus. 
Given its diversity, training on minipile poses more challenges and potential instabilities for optimizers compared to Wikitext-103.

\vspace{-.05cm}
\item  {\bf Llama (229M) on openwebtext (38G).}
{Openwebtext}~\citep{Gokaslan2019OpenWeb}, an open-source recreation of the WebText corpus, is extensively utilized for LLM  pre-training  such as RoBERTa~\citep{liu2019roberta} and GPT-2~\citep{radford2019language}. 

\end{itemize}

\vspace{-.1cm}
Additionally, gradient clipping is adopted  to maintain the training stability~\citep{pascanu2012understanding}. 
First, we tune \texttt{lr\_max} in AdamW for each of the three experiments, separately.
The optimal \texttt{lr\_max} identified for these three experiments is all \texttt{6e-4}.
Then, both AdamW and AdmIRE are trained using this optimal \texttt{lr\_max}. For more details, please refer to Appendix~\ref{appendix: exp}.

{\bf AdmIRE is {\color{violet}{$2\times$ faster}} than AdamW}. The results are reported in Figure~\ref{fig: llama}. We can see that AdmIRE consistently achieves a $2.1\times$ speedup compared with well-tuned AdamW for all three cases.

\begin{figure}[!ht]
\hspace*{-.5em}
\subfloat{
    \includegraphics[width=4.8cm]{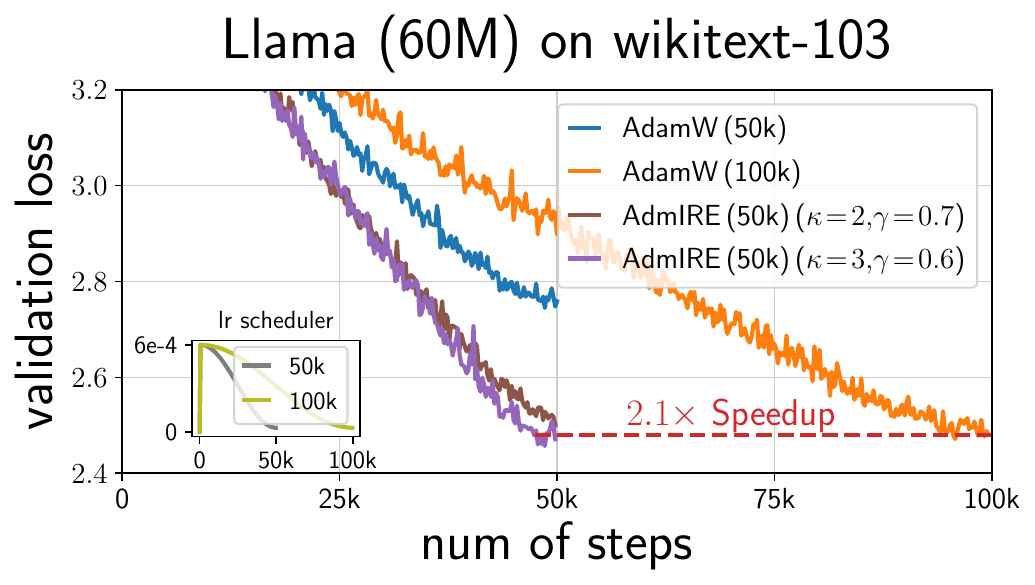}
}
\hspace*{-.5em}
\subfloat{
    \includegraphics[width=4.8cm]{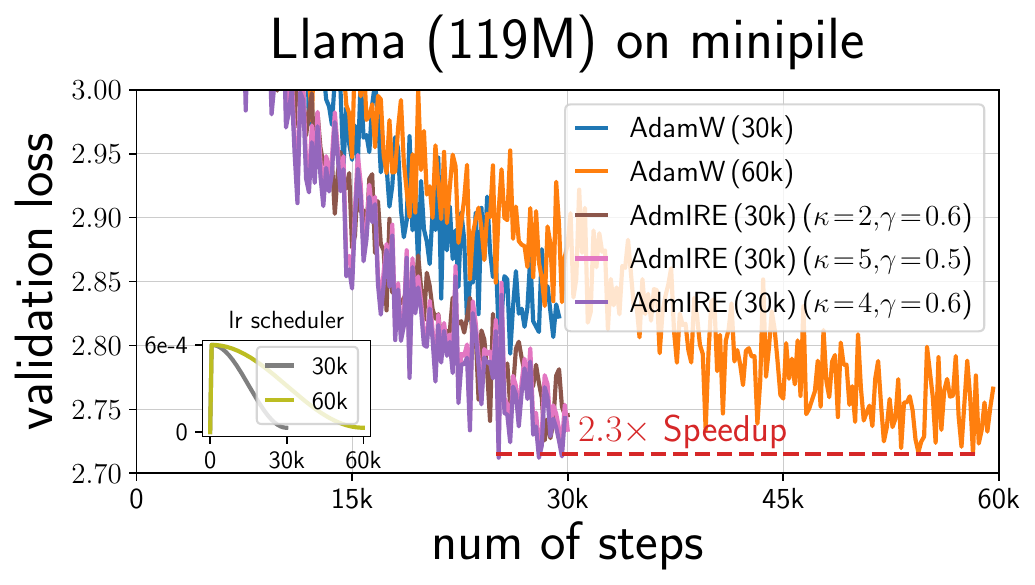}
}
\hspace*{-.5em}
\subfloat{
    \includegraphics[width=4.8cm]{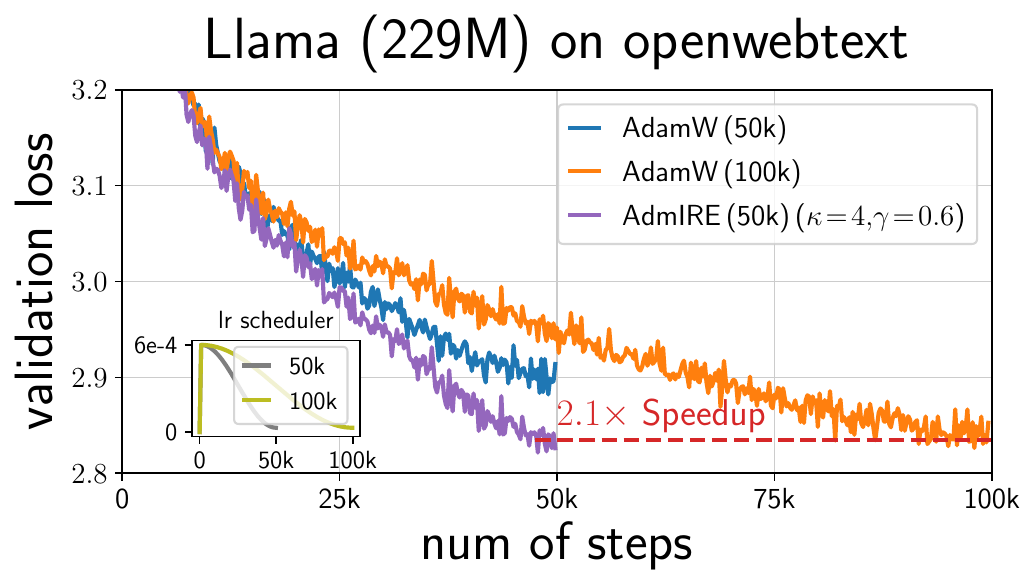}
}
\caption{AdmIRE outperforms AdamW in the pre-training of Llama models.}
\label{fig: llama}
\end{figure}

Notice that the primary motivation behind IRE is to speed up the sharpness reduction, which only requires to increase learning rate along completely flat (zero-curvature) directions. 
However, practical implementation may also increase the learning rate along directions with small but non-zero curvatures, which can further speed up loss convergence. 
A thorough explanation for the significant acceleration provided by this approach is left for future research.


\begin{wraptable}{r}{0.45\linewidth}
    \centering
    \caption{\small Comparison of the sharpness of the solutions found by AdamW/AdmIRE.}
    \begin{tabular}{c|c|c}
        \hline\hline
          & \small AdamW & \small AdmIRE \\ \hline
          \small training steps & \small 100k  &  \small 50k \\ \hline
        \small final $\cL(\btheta)$  & \small 2.47 & \small 2.47 \\ \hline
        \small final $\Tr(\nabla^2\cL(\btheta))$ & \small 120.41 & \small 88.86
            \\\hline\hline
    \end{tabular}
    \label{table: flatness NLP}
\end{wraptable}
We further assess the sharpness reduction capability of IRE for LLM pre-training.
Specifically, we compare the sharpness of solutions, $\Tr(\nabla^2\cL(\btheta))$, found by AdamW/AdmIRE during pre-training of Llama (60M) on wiki-103 dataset (corresponding to Figure~\ref{fig: llama} (left)).
The results shown in Table~\ref{table: flatness NLP} demonstrate that AdamIRE not only achieves the same loss in {\em only half the iterations} required by AdamW, but also the solutions found by AdmIRE are {\em significantly flatter} than that found by AdamW.

Recently,~\cite{liu2023same} revealed a strong correlation between the sharpness and downstream task performance, suggesting that for models with the same pre-training loss, flatter solutions yield better performance on downstream tasks. 
Based on this, we hypothesize that the solutions found by IRE may also have better performance in downstream tasks, which we leave to future work.

\section{Theoretical Guarantees for IRE on  SAMs}\label{section: theory}

\vspace{-.2cm}
Both empirical~\citep{foret2020sharpness} and theoretical~\citep{wen2023how} studies have validated that SAM algorithms exhibit superior sharpness regularization compared to (S)GD.
In this section, we provide a theoretical analysis demonstrating that IRE can further enhance the sharpness regularization of SAM algorithms substantially.


\subsection{Theoretical setups}

Recall that  $\cL(\btheta):=\frac{1}{n}\sum_{i=1}^n\cL_i(\btheta)$ denote the total loss, where $\cL_i(\btheta)$ is the loss on the $i$-th data. Without loss of generality, we assume $\min_{\btheta}\cL(\btheta)=0$. We further make the  following assumption:

\begin{assumption}[Manifold of minimizers]\label{ass: minima manifold}
    Assume that $\cL\in \cC^4(\RR^p)$, $\cM:=\argmin_{\btheta}\cL(\btheta)$ is a ($p-m$)-dim $\cC^2$-submanifold in $\bbR^p$ for some  $m\in[p]$, and   $\rank(\nabla^2\cL(\btheta))=m$ for any $\btheta\in\cM$.
\end{assumption}

\vspace{-.1cm}
The above connectivity assumption on the manifold of minimizers $\cM$ has been empirically verified in works such as \cite{draxler2018essentially} and \cite{garipov2018loss}, and theoretically supported in~\cite{cooper2018loss}.
This assumption is also widely used in the theoretical analysis of implicit regularization~\citep{fehrman2020convergence,li2021happens,arora2022understanding,wen2023how}.

Besides, we introduce the following definitions to characterize the dynamics of gradient flow (GF) near the minima manifold $\cM$, which is also used in the related works above.

\begin{definition}[Limiting map of GF]\label{def: limit gradient flow}
     Consider the GF: $\frac{\rd\btheta(t)}{\rd t}=-\nabla\cL(\btheta(t))$ starting from $\btheta(0)=\btheta$. Denote by  $\Phi(\btheta):=\lim_{t\to\infty}\btheta(t)$ the limiting map of this GF.
\end{definition}

\begin{definition}[Attraction set of $\cM$]\label{def: attraction set}
    Let $U$ be the attraction set of $\cM$ under GF, i.e., GF starting in $U$ converges to some point in $\cM$. Formally, $U:=\{\btheta\in\bbR^p:\Phi(\btheta)\in\cM\}$.    
\end{definition}

\vspace{-.1cm}
As proven in \cite[Lemma B.15]{arora2022understanding}, Assumption~\ref{ass: minima manifold} ensures that $U$ (in Definition~\ref{def: attraction set}) is open and $\Phi(\cdot)$ (in Definition~\ref{def: limit gradient flow}) is $\cC^2$ on $U$.

\subsection{Theoretical results}


The  stochastic SAM~\citep{foret2020sharpness} is given by 
\begin{equation}\label{equ: SAM, worst}
    \text{standard SAM:}\quad \btheta_{t+1}=\btheta_t-\eta\nabla\cL_{i_t}\left(\btheta_t+\rho\frac{\nabla\cL_{i_t}(\btheta_t)}{\norm{\nabla\cL_{i_t}(\btheta_t)}}\right),\text{ where }i_t\sim\bbU([n]).
\end{equation}
The generalization capability of standard SAM can be bounded by the average sharpness, $\cL^{\rm avg}(\btheta):=\bbE_{\bxi\sim\cN(\bzero,I)}\cL\left(\btheta+\rho{\bxi}/{\norm{\bxi}}\right)$~\citep{foret2020sharpness}. 
This leads researchers to also explore average SAM~\citep{wen2023how,zhu2023decentralized,ujvary2022rethinking}, which minimizes $\cL^{\rm avg}$:
\begin{equation}\label{equ: SAM, average}
    \text{average SAM:}\quad\btheta_{t+1}=\btheta_t-\eta\nabla\cL\left(\btheta_t+\rho{\bxi_t}/{\norm{\bxi_t}}\right),\text{ where }\bxi_t\sim\cN(\bzero,I).
\end{equation}

\textbf{Two-phase algorithms.} Our theoretical focus is on how IRE accelerates the sharpness reduction of SAM~\eqref{equ: SAM, worst} and \eqref{equ: SAM, average} {\em near the minima manifold} $\cM$. Thus, we analyze the two-phase algorithms.
Specifically, let the initialization $\btheta_0\in U$. 
In {\bf Phase I} ($t\leq T_{\rm I}$), we employ GF $\frac{\rd \btheta_t}{\rd t}=-\nabla\cL(\btheta_t)$ to ensure that the loss decreases sufficiently;
then in {\bf Phase II} ($T_{\rm I}<t\leq T_{\rm I}+T_{\rm II}:=T$), we incorporate IRE into the standard / average SAM.

\begin{wrapfigure}{r}{0.3\linewidth}
    \centering
    \vspace*{-.cm}
    \includegraphics[width=4.5cm]{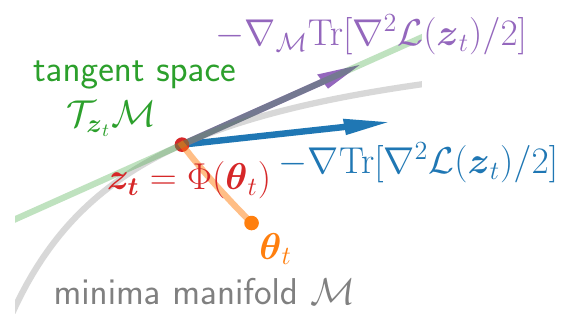}
    \vspace*{-2em}
\end{wrapfigure}
\textbf{Effective dynamics: sharpness regularization.} The implicit regularization of SAMs can be modeled  using effective dynamics. In Phase II, $\btheta_t$ are close the manifold of minimizers $\cM$ and let $\bz_t:=\Phi(\btheta_t)\in\cM$.
Then, the effective dynamics is given by $\{\bz_{t}\}_{t=T_{\rm I}+1}^T$, revealing how SAMs explore the manifold of minimizers $\cM$. Particularly, \cite{wen2023how} showed that the effective dynamics of standard/average SAM are both  
\begin{equation}\label{eqn: eff-dy}
\bbE\left[\bz_{t+1}\right]=\bz_t -\eta_{\rm eff} \nabla_{\cM} \mathrm{Tr}\left[\nabla^2 \cL(\bz_t)/2\right]+o(\eta_{\rm eff}), 
\end{equation}
which minimizes the trace of Hessian on $\cM$. 
The difference between the standard SAM \eqref{equ: SAM, worst} and average SAM~\eqref{equ: SAM, average} lies in the effective learning rate (LR) $\eta_{\rm eff}$'s. A visual illustration of some quantities in~\eqref{eqn: eff-dy} is provided in the figure above.


\textbf{Summary of our theoretical results.} 
In this section, we show that incorporating IRE into SAMs can significantly increase the effective LR $\eta_{\rm eff}$ in \eqref{eqn: eff-dy} while maintaining the same training stability as SAMs.
In Table \ref{tab: theoretical results}, we present the effective LR for SAMs and the SAM-IREs. 
We see clearly that IRE can accelerate the sharpness reduction by a non-trivial factor for both standard and average SAM.

\begin{table}[!ht]
\centering
\vspace{-0.1cm}
    \caption{Comparison of the implicit regularization strength of SAMs w/o IRE.}
    \begin{tabular}{c|c}
    \hline\hline
    Algorithm & Effective LR: $\eta_{\rm eff}$ \\\hline
    average SAM~\eqref{equ: SAM, average} & $\eta^2/p$ (Thm~\ref{thm: Phase II, ave SAM-IRE}) \\\hline
    IRE + average SAM~\eqref{equ: SAM, average} & $\eta^{1.5}/p$ (Thm~\ref{thm: Phase II, ave SAM-IRE})\\\hline
    standard SAM~\eqref{equ: SAM, worst} &$\eta\rho^2$ (Thm~\ref{thm: Phase II, std SAM-IRE}; \cite{wen2023how}) \\\hline
    IRE + standard SAM~\eqref{equ: SAM, worst} &$\eta\rho$ (Thm~\ref{thm: Phase II, std SAM-IRE}) \\ \hline\hline
    \end{tabular}
    \label{tab: theoretical results}
\end{table}

\begin{remark}[The mechanism of IRE's success]
The success of SAM-IRE follows  the same mechanism  illustrated in Section \ref{section: example}. 
The key fact that IRE only  increases the LR along flat directions has two implications: 
1) It does not change the trend of implicit regularization in Eq.~\eqref{eqn: eff-dy} but accelerates SAMs' effective dynamics by a factor of $(1+\kappa)$; 
2) Since the LR is only increased along flat directions, $\kappa$ can be set substantially large without hurting the training stability, because the dynamics in sharp directions remain unchanged.  Specifically, we theoretically justify in SAM-IRE, $\kappa$ can be selected as large as $1/\rho$.
\end{remark}



\subsubsection{IRE on average SAM: An \texorpdfstring{$\Omega(1/\eta^{0.5})$}{} acceleration}
\label{subsec: theory: average SAM}

We first consider IRE on average SAM. Let  $T_{\rm I}$ be the hitting time: $T_{\rm I}:=\inf\{t\geq0:\norm{\btheta_t-\Phi(\btheta_t)}=\cO(\sqrt{\eta}\rho)\}$. 
When running GF in Phase I, Definition~\ref{def: attraction set} guarantees $T_{\rm I}<\infty$. 
Thus, at the starting of Phase II, $\norm{\btheta_t-\Phi(\btheta_t)}=\cO(\sqrt{\eta}\rho)$. 
Furthermore, the following result holds for Phase II.

\begin{theorem}[IRE on average SAM]\label{thm: Phase II, ave SAM-IRE} Suppose Assumption~\ref{ass: minima manifold} holds. If $\eta=\cO(1)$ and $\rho=\cO(\sqrt{\eta})$ in SAM~\eqref{equ: SAM, average}, \colorbox{thistle}{\!$\kappa\leq 1/\rho$\!}, and $\cP_t=P_{m+1:p}(\nabla^2\cL(\btheta_t))$ in IRE \eqref{eqn: ire-genetric}, then with high probability at least $1-T_{\rm II}^2\exp\left(-\Omega\left(1/\left(\eta+p^{-1}\right)\right)\right)$, $\norm{\btheta_t-\Phi(\btheta_t)}=\cO(\sqrt{\eta}\rho)$ holds for all $T_{\rm I}\leq t\leq T$. 
Furthermore, the effective dynamics of $\bz_t:=\Phi(\btheta_t)\in\cM$ satisfies:
\begin{align*}
    \bbE_{\bxi_t}[\bz_{t+1}]=\bz_{t}-\frac{{\color{violet}(1+\kappa)}\eta\rho^2}{p}\nabla_{\cM}\Tr\left[\nabla^2\cL(\bz_{t})/2\right]+\cO(\eta^{3/2}\rho^2).
\end{align*}
\end{theorem}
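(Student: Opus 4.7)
The plan is to decompose each iterate $\btheta_t$ in Phase II into its projection $\bz_t := \Phi(\btheta_t) \in \cM$ and the residual $\br_t := \btheta_t - \bz_t \in \cN_{\bz_t}\cM$ (the normal space), and to track these two components separately. The central observation is that $\cP_t = P_{m+1:p}(\nabla^2\cL(\btheta_t))$ projects onto the null eigenspace of the Hessian at $\btheta_t$, which for $\btheta_t$ close to $\cM$ coincides with $\cT_{\bz_t}\cM$ up to an $O(\|\br_t\|)$ perturbation. Consequently IRE only amplifies the tangential component of the update, leaving the contractive normal dynamics of average SAM essentially intact.

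First I would establish the invariant $\|\br_t\| = O(\sqrt{\eta}\rho)$ by induction on $t$. Expanding the SAM-IRE update at $\btheta_t$ and using $\nabla\cL(\bz_t) = 0$, the normal component evolves as $\br_{t+1} = (I - \eta\nabla^2\cL(\bz_t))\br_t - \eta\rho\,\nabla^2\cL(\bz_t)\bxi_t/\|\bxi_t\| + \text{higher-order terms}$, where IRE contributes nothing at leading order (because $\cP_t$ annihilates the sharp subspace of $\nabla^2\cL(\btheta_t)$). Since Assumption \ref{ass: minima manifold} gives a positive spectral gap on the sharp subspace and $\eta = \cO(1)$ is chosen below the stability threshold, $I - \eta\nabla^2\cL(\bz_t)$ contracts $\br_t$ on this subspace, while the driving noise has size $\cO(\eta\rho \cdot \|P_{1:m}\bxi_t\|/\|\bxi_t\|)$. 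Gaussian concentration of $\|\bxi_t\|$ around $\sqrt{p}$ and of $\|P_{1:m}\bxi_t\|$ around $\sqrt{m}$ gives per-step failure probability $\exp(-\Omega(\min(p,1/\eta)))=\exp(-\Omega(1/(\eta + p^{-1})))$; a union bound over the $T_{\rm II}$ steps, combined with a second concentration pass to control the quadratic martingale term, yields the stated $T_{\rm II}^2$ prefactor.

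Next I would compute the effective tangential dynamics of $\bz_t$. Taylor expanding $\nabla\cL(\btheta_t + \rho\bxi_t/\|\bxi_t\|)$ to third order and using the identity $\mathbb{E}_{\bxi\sim\cN(0,I)}[\bxi\bxi^\top/\|\bxi\|^2] = I/p$ gives
\begin{equation*}
\mathbb{E}_{\bxi_t}\bigl[\nabla\cL(\btheta_t + \rho\bxi_t/\|\bxi_t\|)\bigr] = \nabla\cL(\btheta_t) + \frac{\rho^2}{2p}\nabla\Tr\bigl[\nabla^2\cL(\btheta_t)\bigr] + \cO(\rho^4).
\end{equation*}
Applying the differential $D\Phi(\btheta_t)$ to the update, the first term contributes only $\cO(\|\br_t\|^2) = \cO(\eta\rho^2)$ since $\nabla\cL$ is normal to $\cM$ and $D\Phi$ kills normal directions to leading order. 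The second term becomes $\frac{\rho^2}{2p}\nabla_\cM\Tr[\nabla^2\cL(\bz_t)]$ up to $\cO(\sqrt{\eta}\rho^3)$. With IRE, since $\cP_t\,\bg_t$ lies in the (approximate) tangent space $\cT_{\bz_t}\cM$, the Riemannian projection of the extra term $\kappa\cP_t\bg_t$ acts as $\kappa$ times the tangential component of $\bg_t$, yielding the overall multiplicative factor $(1+\kappa)$ in the effective dynamics.

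The main obstacle is handling the mismatch between $\mathrm{Range}(\cP_t)$ and $\cT_{\bz_t}\cM$, which produces cross-leakage of the IRE boost into the normal direction of size $\cO(\kappa\,\|\br_t\|\,\|\bg_t\|) = \cO(\kappa\cdot\sqrt{\eta}\rho\cdot\eta\rho)$ per step. For this leakage to remain negligible against the normal contraction rate of $O(1)$ per step, one needs $\kappa\rho = \cO(1)$, which is precisely the hypothesis $\kappa \leq 1/\rho$; this is what keeps the Phase II invariant $\|\br_t\|=\cO(\sqrt{\eta}\rho)$ closed under the amplified dynamics. Once the normal stability and the conditional expectation of the tangential increment are controlled, the final statement follows by summing the per-step expansions and absorbing the $\cO(\sqrt{\eta}\rho^3) + \cO(\eta\rho^2 \cdot \sqrt{\eta}) = \cO(\eta^{3/2}\rho^2)$ accumulated errors using a discrete Grönwall argument along $\cM$.
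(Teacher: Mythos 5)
Your plan hits all the key conceptual points of the result — the fact that $\cP_t$ coincides with the tangent projection $\partial\Phi$ on $\cM$ up to $\cO(\|\br_t\|)$ error, so IRE multiplies only the tangential drift by $(1+\kappa)$ and leaves the contractive normal dynamics intact at leading order; the role of $\kappa\le 1/\rho$ in controlling the cross-leakage into the normal direction; and the identity $\bbE[\bxi\bxi^\top/\|\bxi\|^2]=I/p$ feeding into the Hessian-trace drift. The effective-dynamics half of your argument is essentially the paper's (compute $\partial\Phi(\btheta_t)\nabla\cL(\btheta_t+\rho\bv_t)$ and $\partial\Phi(\btheta_t)\cP_t\nabla\cL(\btheta_t+\rho\bv_t)$ by Taylor expansion, exploit $\partial\Phi(\btheta_t)\nabla\cL(\btheta_t)=\bzero$ and $\partial\Phi=\cP$ on $\cM$, then take expectations), and the closing Gr\"onwall step is superfluous — the theorem is a per-step expansion.

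Where you differ from the paper is the stability invariant. You track the vector residual $\br_t=\btheta_t-\Phi(\btheta_t)$ and argue contraction through the linearized normal recursion $\br_{t+1}\approx(I-\eta\nabla^2\cL(\bz_t))\br_t-\eta\rho\nabla^2\cL(\bz_t)\bv_t$. The paper instead tracks the scalar $\cL(\btheta_t)$, uses a local PL inequality in place of the eigenvalue contraction, and constructs an explicit stopped super-martingale $X_\tau$ (which tracks the loss only while it sits in the band $[C_1\eta\rho^2,2C_1\eta\rho^2]$ and drifts down deterministically otherwise) with $\cO(\eta^{3/2}\rho^2/\sqrt{p}+\eta^2\rho^2)$-sub-Gaussian increments. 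The scalar formulation buys two things your sketch leaves vague. First, the $T_{\rm II}^2$ prefactor: it comes from a double sum over the current index $t$ and the last index $s$ at which the loss was below threshold, not from a naive union bound over $T_{\rm II}$ steps — your phrase ``a second concentration pass to control the quadratic martingale term'' gestures at this but does not actually produce it. Second, the scalar super-martingale lets one apply Azuma--Hoeffding directly without a vector-valued concentration inequality or a decomposition of $\br_t$ by direction; working with $\br_t$ as a vector requires extra care because $\br_t$ is only approximately normal and the martingale inequalities are cleanest for scalars. If you want to pursue the $\br_t$ route, the honest thing to do is pass to $\|\br_t\|$ (or, equivalently, back to the loss), verify the super-martingale property in the intermediate band, and set up the first-exit-time decomposition explicitly; otherwise your probability bound is not actually established.
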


Note that $\rho=\cO(\sqrt{\eta})$ and  $\kappa$ can be as large as $1/\rho$. Consequently, the effective LR of minimizing the trace of Hessian can be selected as large as $\eta_{\rm eff}=(\kappa+1)\eta\rho^2/p=\cO(\eta^{1.5}/p)$. In contrast, that of average SAM is at most $\cO(\eta^2/p)$. The proof of Theorem \ref{thm: Phase II, ave SAM-IRE} can be found in Appendix \ref{appendix: average SAM}.



\subsubsection{IRE on standard SAM: An \texorpdfstring{$\Omega(1/\rho)$}{} acceleration}
\label{subsec: theory: standard SAM}

This subsection delves into IRE on standard SAM~\eqref{equ: SAM, worst}, which is more widely used and often yields better performance than average SAM~\eqref{equ: SAM, average}. 
However, since standard SAM~\eqref{equ: SAM, worst} requires stochastic gradients $\nabla\cL_i(\btheta)$ ($i\in[n]$), we need an additional assumption regarding the features on the manifold (see Setting~\ref{setting: erm problem}), which is commonly used in the literature~\citep{du2018gradient,du2019gradient,li2021happens,arora2022understanding,wen2023how}. 
We defer it to Appendix~\ref{appendix: standard SAM} due to space constraints.
Under this Setting, Assumption~\ref{ass: minima manifold} holds naturally with $m=n$.

During Phase I of GF, Definition~\ref{def: attraction set} ensures that there exists $t<\infty$ such that $\norm{\btheta_t-\Phi(\btheta_t)}=\cO(\eta^{1-\alpha}\rho)$ for any $\alpha\in[0,1)$.
We define $T_{\rm I}$ as the hitting time: $T_{\rm I}:=\inf\{t\geq0:\norm{\btheta_t-\Phi(\btheta_t)}=\cO(\eta^{1-\alpha}\rho)\}$. 
Then the following result holds for Phase II, whose proof can be founded in Appendix~\ref{appendix: standard SAM}.

\begin{theorem}[IRE on standard SAM]\label{thm: Phase II, std SAM-IRE} Under Setting~\ref{setting: erm problem}, if $\eta,\rho=\cO(1)$ in SAM~\eqref{equ: SAM, worst},  \colorbox{thistle}{\!$\kappa\leq 1/\rho$\!}, and $\cP_t=P_{n+1:p}(\nabla^2\cL(\btheta_t))$ in IRE \eqref{eqn: ire-genetric}, then with high probability at least $1-T_{\rm II}^2\exp\left(-\Omega(1/\eta^\alpha)\right)$, $\norm{\btheta_t-\Phi(\btheta_t)}=\cO(\eta^{1-\alpha}\rho)$ holds for all $T_{\rm I}\leq t\leq T$. Moreover, the effective dynamics $\bz_t=\Phi(\btheta_t)\in\cM$ satisfies:
\begin{align*}
    \bbE_{i_t}[\bz_{t+1}]=\bz_t-{\color{violet}(1+\kappa)}\eta\rho^2\nabla_{\cM}\Tr\left[\nabla^2\cL(\bz_t)/2\right]+\cO\left((\kappa+1)\eta\rho^2(\rho+\eta^{1-\alpha})\right).
\end{align*}
\end{theorem}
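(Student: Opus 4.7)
My plan is to mirror the effective-dynamics analysis of standard SAM in \cite{wen2023how} while carefully tracking the contribution of the IRE projection $\cP_t = P_{n+1:p}(\nabla^2\cL(\btheta_t))$. The core idea is the same decomposition used in Section~\ref{section: example}: the update splits into a \emph{normal} component (along the top-$n$ Hessian eigendirections, which points off $\cM$) and a \emph{tangent} component (along the bottom-$(p-n)$ eigendirections, i.e.\ $\cT_{\bz_t}\cM$ up to $\cO(\|\btheta_t-\bz_t\|)$). Because $\cP_t$ annihilates the top-$n$ sharp eigendirections of $\nabla^2\cL(\btheta_t)$, the IRE term contributes \emph{nothing} to the normal dynamics and exactly multiplies the tangent dynamics by $1+\kappa$. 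This is what lets stability persist while the effective learning rate along $\cM$ is boosted.

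\textbf{Step 1: Stability (normal component).} Write $\btheta_t=\bz_t+\bu_t$ with $\bz_t=\Phi(\btheta_t)\in\cM$ and $\bu_t\perp\cT_{\bz_t}\cM$. Taylor expanding $\nabla\cL_{i_t}$ at $\bz_t$ and using Setting~\ref{setting: erm problem} (which gives $\nabla^2\cL(\bz_t)=\frac{1}{n}\sum_i \nabla\cL_i(\bz_t)\nabla\cL_i(\bz_t)^\top$ structure on $\cM$), the SAM gradient $\nabla\cL_{i_t}(\btheta_t+\rho\nabla\cL_{i_t}(\btheta_t)/\|\nabla\cL_{i_t}(\btheta_t)\|)$ becomes, to leading order, a contractive linear operator on $\bu_t$ with factor $\mathbf{I}-\eta\nabla^2\cL(\bz_t)$ plus a drift term of order $\eta\rho$. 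Since $\cP_t \bg_t$ lives in the flat subspace, it contributes only $\cO(\eta\rho\|\bu_t\|)$ to the normal component via the $\cO(\|\bu_t\|)$ mismatch between $\cP_t$ and $\cQ_{\cT_{\bz_t}\cM}$ (which follows from $\cC^2$-smoothness of the eigenprojection, Assumption~\ref{ass: minima manifold}). Hence the one-step normal-component recursion is identical to that of standard SAM up to lower-order corrections, and iterating gives $\|\bu_t\|=\cO(\eta^{1-\alpha}\rho)$ throughout Phase II.

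\textbf{Step 2: Effective dynamics (tangent component).} Project the update onto $\cT_{\bz_t}\cM$. Again expanding the SAM perturbation and taking $\bbE_{i_t}$, the leading contribution is $\rho^2 \nabla_{\cM}\Tr[\nabla^2\cL(\bz_t)/2]$, exactly as in \cite{wen2023how}. The IRE term $\kappa\cP_t\bg_t$ is essentially the same vector restricted to the flat subspace, which \emph{is} the tangent direction up to $\cO(\|\bu_t\|)=\cO(\eta^{1-\alpha}\rho)$ error. So the tangent projection picks up an extra factor $(1+\kappa)$, yielding $\bbE_{i_t}[\bz_{t+1}-\bz_t]=-(1+\kappa)\eta\rho^2\nabla_{\cM}\Tr[\nabla^2\cL(\bz_t)/2]$ plus remainders. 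The remainders come from (i) the $\cO(\eta^{1-\alpha}\rho)$ mismatch between $\cP_t$ and $\cQ_{\cT_{\bz_t}\cM}$, giving an $\cO((\kappa+1)\eta\rho^2\cdot\eta^{1-\alpha})$ error, and (ii) higher-order Taylor terms of size $\cO((\kappa+1)\eta\rho^3)$, matching the claimed $\cO((\kappa+1)\eta\rho^2(\rho+\eta^{1-\alpha}))$ bound.

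\textbf{Step 3: Probabilistic control.} The high-probability statement requires controlling the stochasticity of $i_t\sim\bbU([n])$ across $T_{\rm II}$ steps. I would write the normal-component recursion $\bu_{t+1}=(\mathbf{I}-\eta\nabla^2\cL(\bz_t))\bu_t + \eta\bxi_t + (\text{drift})$ with $\bxi_t$ a martingale difference of variance $\cO(\rho^2)$ and apply a matrix-Azuma/Freedman bound at radius $\eta^{1-\alpha}\rho$. A union bound over $t\in\{T_{\rm I},\ldots,T\}$ yields failure probability $T_{\rm II}^2 e^{-\Omega(1/\eta^\alpha)}$.

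\textbf{Main obstacle.} The delicate point is handling $\kappa$ as large as $1/\rho$ without breaking stability. One must show that the mismatch between $\cP_t=P_{n+1:p}(\nabla^2\cL(\btheta_t))$ and the true tangent projection $\cQ_{\cT_{\bz_t}\cM}$ is small enough that the $\kappa\cP_t\bg_t$ leakage into sharp directions does not destabilize $\bu_t$. This needs the eigengap of $\nabla^2\cL$ on a neighborhood of $\cM$ (from Assumption~\ref{ass: minima manifold} via Davis--Kahan) together with the already-established $\|\btheta_t-\bz_t\|=\cO(\eta^{1-\alpha}\rho)$, making the leakage of order $\kappa\eta\rho\cdot\eta^{1-\alpha}\rho=\cO(\eta^{2-\alpha}\rho)$, which is absorbed into the contractive normal dynamics. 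The rest of the proof is a careful but routine Taylor-plus-concentration argument.
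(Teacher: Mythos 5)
Your proposal follows the same high-level strategy as the paper's proof: (i) control the normal distance $\|\btheta_t-\Phi(\btheta_t)\|$ by showing the IRE correction cannot destabilize the sharp directions, and (ii) derive the effective dynamics by projecting the update through $\partial\Phi$ (equivalently, the tangent projection) and Taylor-expanding the SAM perturbation to second order in $\rho$. The observation that $\cP_t$ is $\cO(\|\btheta_t-\Phi(\btheta_t)\|)$-close to the true tangent projection (paper's Lemma~\ref{lemma: projection gap estimate, std SAM}, proved via Davis--Kahan exactly as you say) is the key mechanism in both treatments, and your accounting of the two remainder sources $\cO((\kappa+1)\eta\rho^3)$ and $\cO((\kappa+1)\eta^{2-\alpha}\rho^2)$ matches the paper's error budget.

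The one place where your sketch diverges from (and is looser than) the paper's argument is Step~1 and Step~3. You model the normal component with a linear recursion $\bu_{t+1}=(\mathbf{I}-\eta\nabla^2\cL(\bz_t))\bu_t+\eta\bxi_t+(\text{drift})$. Two caveats. First, the Hessian that actually appears is the stochastic rank-one $\nabla^2\cL_{i_t}(\bz_t)$, not the deterministic $\nabla^2\cL(\bz_t)$; there is no per-step deterministic contraction of the form $\mathbf{I}-\eta\nabla^2\cL(\bz_t)$. Second, and more importantly, the multiplicative contraction by itself only yields an equilibrium scale $\|\bu_t\|\sim\sqrt{\eta}\rho$ (this is what happens for average SAM). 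What makes standard SAM concentrate at $\cO(\eta^{1-\alpha}\rho)$ is the SAM-specific \emph{additive} drift of size $\Theta(\eta\rho)$ pulling $\bu_t$ toward zero, coming from evaluating $\nabla\cL_{i_t}$ at a point pushed distance $\approx\rho$ off $\cM$; you do mention ``plus a drift term of order $\eta\rho$,'' but the argument hinges on that drift being the dominant effect, not the contraction. The paper encapsulates precisely this drift in the cited Lemmas H.9/H.10 of \cite{wen2023how}: $\bbE_{i_t}\|\btheta_{t+1/2}-\Phi(\btheta_{t+1/2})\|\le\|\btheta_t-\Phi(\btheta_t)\|-C_2\eta\rho$ with step increments bounded by $C_3\eta\rho$, then shows the IRE correction moves the normal distance by at most $\tfrac{C_2}{2}\eta\rho$ (using $\kappa\le1/\rho$ and $\|\cP_t\nabla\cL_{i_t}(\btheta_t+\rho\bv_t)\|=\cO(\rho^2)$), so the net super-martingale drift survives. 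Your ``leakage'' estimate $\cO(\kappa\eta^{2-\alpha}\rho^2)=\cO(\eta^{2-\alpha}\rho)$ is a finer bound than the paper actually needs and would also suffice, but to complete the argument you would still need to supply (or cite) the SAM drift lemma and then run a scalar Azuma on the distance, which is what the paper does; a matrix Freedman bound is unnecessary here.
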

Taking $\kappa=0$ and $\alpha=0$ recovers the result established in \cite{wen2023how}. However, $\kappa$ can be as large as $1/\rho$, where IRE provides a $\Theta(1/\rho)$-time acceleration over the standard SAM.

\section{Conclusion}
\label{section: discussion}

In this work, we propose a novel IRE framework to enhance the implicit sharpness regularization of base optimizers. Experiments demonstrate that IRE not only consistently improves generalization but also accelerates loss convergence in the pre-training of Llama models of various sizes. 
The code is available at \texttt{\href{https://github.com/wmz9/IRE-algorithm-framework}{https://github.com/wmz9/IRE-algorithm-framework}}.

For future work, there are two urgent directions: 1) understanding why IRE can accelerate convergence, which may require studying the interplay between IRE and the Edge of Stability (EoS)~\citep{wu2018sgd, jastrzkebski2017three, cohen2021gradient}; and 2) conducting a larger-scale investigation into the acceleration of AdmIRE compared to AdamW in LLM pre-training, as well as the downstream performance of the LLMs pre-trained by AdmIRE.

\section*{Acknowledgments}
Lei Wu is supported by the National Key R\&D Program of China (No.~2022YFA1008200) and National Natural Science Foundation of China (No.~12288101).
Mingze Wang is supported in part by the National Key Basic Research Program of China (No.~2015CB856000).  
We thank Dr. Hongkang Yang, Liu Ziyin, Liming Liu, Zehao Lin, Hao Wu, and Kai Chen for helpful discussions and anonymous reviewers for their valuable suggestions.

\newpage

\appendix

\begin{center}
    \noindent\rule{\textwidth}{4pt} \vspace{-0.2cm}
    \LARGE \textbf{Appendix} 
    \noindent\rule{\textwidth}{1.2pt}
\end{center}

\startcontents[sections]
\printcontents[sections]{l}{1}{\setcounter{tocdepth}{2}}

\vspace{1.cm}

\section{Other Related Works}
\label{appendix: works}

{\bf Other Implicit Biases.} 
Beyond the implicit sharpness regularization, numerous other attempts have explored implicit biases in deep learning algorithms~\cite{vardi2023implicit}.
Among these, a popular research line is the {\em max-margin bias}, which suggests that optimizers favor the solutions with large margin, which generalizes well.
\cite{soudry2018implicit} showed that GD converges to max-margin solutions under exponentially-tailed loss on linearly separable data, albeit with an extremely slow rate $\cO(1/\log t)$. 
Furthermore,~\citet{nacson2019stochastic} studied this bias for SGD, \citet{ji2018risk} investigated linearly non-separable data, and~\citet{ji2020gradient} analyzed the effects of the tail behavior of loss functions.

To achieve faster margin maximization,
\citet{nacson2019convergence,ji2021characterizing} demonstrated that GD with aggressively loss-scaled step sizes can achieve a faster polynomial rate of $\cO(1/t)$.
Building on this,~\citet{ji2021fast, wang2022accelerated} proposed momentum-based gradient methods which achieve a rate of $\cO(1/t^2)$.
Recently,~\cite{wang2023achieving} established that the polynomial rates for most previous algorithms are tight, and proposed a progressive rescaling gradient descent method that achieves margin maximization exponentially fast $\cO(e^{-\Omega(t)})$.

The margin-maximization bias has also been studied for nonlinear models.
~\citet{ji2018gradient, gunasekar2018implicit} examined deep linear networks, while~\citet{chizat2020implicit} focused on wide two-layer ReLU networks. 
Notably,~\citet{nacson2019lexicographic,lyu2019gradient,ji2020directional} demonstrated that for general homogeneous networks, Gradient Flow (GF) and GD converge to solutions corresponding the KKT point of the max-margin problem. Recently,~\citet{kunin2022asymmetric} extended this analysis to quasi-homogeneous networks. 
For two-layer (leaky-)ReLU neural networks, \citet{lyu2021gradient,vardi2022margin,wang2023understanding} examined whether the convergent KKT point of GF correspond to global optima of the max-margin problem.

Future work could investigate whether the IRE framework can also enhance the margin maximization bias, although it is primarily designed for enhancing the implicit sharpness regularization.

Additionally,~\cite{woodworth2020kernel,pesme2021implicit,nacson2022implicit,pesme2023saddle,even2023s} conducted fine-grained analyses of training dynamics, examining how initialization and step size impact (S)GD's minima selection in linear diagonal networks.

\vspace{1.cm}

\section{Proofs for SGD in Section~\ref{section: example}}
\label{appendix: example}

For the example in Section~\ref{section: example}, we have studied the implicit sharpness regularization of GD dynamics and how IRE enhances the implicit regularization of GD.
In this Section, we illustrate that, for this example, similar results hold for SGD dynamics.


{\bf SDE Modelling of SGD.} We consider SGD approximated by SDE~\citep{li2017stochastic,li2019stochastic,li2021happens} with noise covariance $\Sigma$: $\rd \btheta_t=-\nabla\cL(\btheta_t)\rd t+\sqrt{\eta\Sigma(\btheta_t)}\rd W_t$. 
We consider that the noise covariance aligns with the Hessian near minima, i.e.,  $\Sigma(\btheta)=\begin{pmatrix}
    \bzero_d & 0 \\ 0 & \sigma^2 h(\bu)
\end{pmatrix}$ (where $\sigma>0$ is a scalar), such as the label noise~\citep{damian2021label,li2021happens}.  Then, the SDE above can
be rewritten as
\begin{equation}\label{equ: sgd SDE}
\begin{aligned}
    \rd\begin{pmatrix}
    \bu_t \\ v_t
\end{pmatrix} = - \begin{pmatrix}
    v_t^2\nabla h(\bu_t)/2 \\ v_t h(\bu_t)
\end{pmatrix} \rd t +  \begin{pmatrix}
    0 \\ \sqrt{\eta\sigma^2 h(\bu_t)} \rd W_t
\end{pmatrix}.
\end{aligned}
\end{equation}

{\bf Implicit Sharpness Regularization.}
Intuitively,
when $v_t$ is close to $0$, the speed of $\bu_t$ is much slower than $v_t$ due to $v_t^2\ll v_t$. 
Following~\cite{ma2022beyond}, when this speed separation is large, $v_t$ is always at equilibrium given $\bu_t$. 
Solving the Ornstein–Uhlenbeck process about $v_t$, we know the equilibrium distribution of $v_t$ is $v_{\infty}\sim\cN(0,\eta\sigma^2)$,
and hence the dynamics $\bu_t$ (along the manifold) is
\begin{equation}\label{equ: sgd effective}
\begin{aligned}
  \rd\bu_t/\rd t=-\bbE_{v_\infty}\left[{v_{\infty}^2\nabla h(\bu_t)/2}\right]=-\nabla h(\bu_t)/2.
\end{aligned}
\end{equation}
This derivation clearly shows the slow ``effective dynamics'' $\bu_t$ along the manifold is a {\em gradient flow minimizing
the sharpness $h(\cdot)$}.
When SGD minimizes the loss, it also minimzes the sharpness implicitly, that is to say,
SGD has the following {\em implicit sharpness regularization}: $\min_{\btheta}\Tr(\nabla^2\cL(\btheta))\ {\rm s.t. \cL(\btheta)\approx0}$.

{\bf Generalization and Optimization benefits} of the sharpness regularization.
In terms of generalization, as discussed in related works, a common view is that {\em flat minima generalize well}, which has been proved in a large number of previous works. 
In addition, in terms of optimization, after SGD reaches the equilibrium $v_{\infty}\sim\cN(0,\eta\sigma^2)$, the loss near the flat minimum is smaller because $\bbE_{v_\infty}[\cL(\btheta)]=\bbE_{v_\infty}[h(\bu)v_{\infty}^2/2]=\eta\sigma^2 h(\bu)/2\propto\Tr(\nabla^2\cL(\bu))$.

\begin{center}
    {\em {\bf Q.} How can we enhance the implicit sharpness regularization of SGD?
    \\ {\bf A.} Accelerating SGD's slow ``effective dynamics'' $\bu_t$ along the minima manifold.}
\end{center}

{\bf Implicit Regularization Enhancement (IRE)} by accelerating the effective dynamics along minima manifold. 
First, it is worth noting that naively increasing the learning rate $\eta$ cannot achieve our aim, because increasing $\eta$ will influence the dynamic stability of $v_t$ and the equilibrium $v_{\infty}$. Therefore, we need to accelerate the effective dynamics $\bu_t$ without affecting the dynamics of $v_t$. 
Another main point is that SGD's effective dynamics $\bu_t$ can naturally minimize the sharpness implicitly, so we only need to enhance this property. To achieve this, we only need to correct the update direction in~\eqref{equ: sgd SDE} from $-\nabla\cL(\btheta_t)$ to $-(\nabla\cL(\btheta_t)+{\color{violet}{\kappa P_{\cM}\nabla\cL(\btheta_t))}}$, where $P_\cM$ is the projection matrix to the manifold $\cM$ and $\kappa$ is a scalar. Using this new algorithm, the SDE dynamics corresponds to
\begin{equation}\label{equ: sgd ire SDE}
\begin{aligned}
    \rd\begin{pmatrix}
    \bu_t \\ v_t
\end{pmatrix} = - \begin{pmatrix}
    {\color{violet}{(1+\kappa)}}v_t^2\nabla h(\bu_t)/2 \\ v_t h(\bu_t)
\end{pmatrix} \rd t +  \begin{pmatrix}
    0 \\ \sqrt{\eta\sigma^2 h(\bu_t)} \rd W_t
\end{pmatrix}.
\end{aligned}
\end{equation}
Comparing~\eqref{equ: sgd SDE} and~\eqref{equ: sgd ire SDE}, the dynamics of $v_t$ are the same, so they attain the same equilibrium distribution $v_{\infty}\sim\cN(0,\eta\sigma^2)$. As for the effective dynamics along manifold,~\eqref{equ: sgd SDE} corresponds to the form:
\begin{equation}\label{equ: sgd ire effective}
\begin{aligned}
  \rd\bu_t/\rd t=-\bbE_{v_\infty}\left[(1+\kappa){v_{\infty}^2\nabla h(\bu_t)/2}\right]=-(1+\kappa)\nabla h(\bu_t)/2.
\end{aligned}
\end{equation}

{\bf $(1+\kappa)$ times Enhancement.} Comparing~\eqref{equ: sgd ire effective} and~\eqref{equ: sgd effective}, it is clear that our new  algorithm can enhance implicit sharpness regularization $(1+\kappa)$ times faster than the original SGD.

\vspace{1.cm}
\section{Experimental Details}
\label{appendix: exp}

This section describes the experimental details in Section~\ref{section: experiments}.

\subsection{Experimental details in Section~\ref{section: exp: CV}}

\subsubsection{Experimental details in Section~\ref{section: exp: CV: validate theory}}

We train WideResNet-16-8 on CIFAR-10 dataset by SAM-IRE (with $K=10$, varying $\kappa$ and $\gamma$). 
The experiments employ basic data augmentations and 0.1 label smoothing, as outlined by~\cite{foret2020sharpness}. 
The mini-batch size is set to 128, the weight decay is set to 5e-4, and the $\rho$ in SAM is to 0.05, as in~\cite{foret2020sharpness}.
To evaluate the implicit flatness regularization of SAM itself, the momentum is set to 0.0.
Regarding the learning rate (lr), we evaluate for both constant lr (within our theoretical framework) and decayed lr (common in practice though not covered by our theory).
In the experiment in Fig~\ref{fig: validation, cifar} (a), a fixed lr $0.1$ is used.
In the experiment in Fig~\ref{fig: validation, cifar} (b)(c)(d), a step-decayed lr schedule is employed, starting at $0.1$ and reducing lr by a factor of $5$ at epoch 20, 50, 80.
We transit from SAM to SAM-IRE at the 30th epoch out of 100 total epochs.
We test $\gamma$ in $0.8,0.9,0.95$, and $\kappa$ in $0$ (original SAM), $2,5,10$.

The flatness measure, $\Tr(\nabla^2\cL(\btheta))$, is approximated by the trace of Fisher $\Tr(\bF(\btheta))$.
Specifically, we use ${\rm diag}(\hat{F}_{\rm eff}(\btheta))$ in~\eqref{equ: fisher estimate} for the estimate 
because $\bbE_{\hat{\by}}[{\rm diag}(\hat{F}_{\rm eff}(\btheta))]=\bbE_{\hat{\by}}[{\rm diag}(\hat{F}(\btheta))]$ and thus,
\begin{align*}
\Tr(\nabla^2\cL(\btheta))\approx\bbE_{\hat{\by}}[\Tr(\hat{F}(\btheta))]=\bbE_{\hat{\by}}[\Tr(\diag(\hat{F}(\btheta)))]=\bbE_{\hat{\by}}[{\rm diag}(\hat{F}_{\rm eff}(\btheta))].
\end{align*}
Moreover, the first ``$\approx$'' above takes ``$=$'' when $\cL(\btheta)=0$.

In this section, all experiments were conducted using a single A800 GPU.

\subsubsection{Experimental details in Section~\ref{section: exp: CV: default results}}

{\bf Experiments for CNNs on CIFAR-10/CIFAR-100.} 
We first evaluate the impact of IRE on generalization of baseline models (WideResNet-28-10 and ResNet-56) and default optimizers (SGD and SAM) on CIFAR-\{10,100\}.
For the base optimizers, SGD and SAM, cosine learning rate decay is adopted with an initial lr $0.1$. 
For other training components, we follow the settings in~\cite{foret2020sharpness}: basic data augmentations and $0.1$ label smoothing;
for both SGD and SAM, the momentum is set to $0.9$, the batch size is set to $128$, and the weight decay is set to 5e-4;
for SAM, $\rho$ is set to $0.05$ for CIFAR-10 and $0.1$ for CIFAR-100.
The total epochs is set to 100 for CIFAR-10 and 200 for CIFAR-100, and we switch from SGD/SAM to SGD-IRE/SAM-IRE when the training loss approaches 0.1.
For SGD-IRE/SAM-IRE, we fix $K=10$, and tune hyperparameters $\gamma$ and $\kappa$ via a grid search over
$\gamma\in\{0.99, 0.9, 0.8\}$ and $\kappa\in\{1,2\}$. 
The results are reported in Table~\ref{table: resnet cifar}.

{\bf Experiments without finely tuned hyperparameters.} 
A high sensitivity to the choice of hyperparameters would make a method less practical. 
To demonstrate that our IRE performs {\em even when $\kappa,\gamma$ are not finely tuned}, we conduct experiments using fixed $\gamma=0.99,\kappa=1$, under the same settings as described above..
The results (averaged over 3 random seeds) are reported in Table~\ref{table: resnet cifar, fix}. 

\begin{table}[!ht]
    \caption{\small Results for SGD-IRE and SAM-IRE on \{WideResNet-28-10, ResNet-56\} on CIFAR-\{10, 100\}, using fixed $\gamma=0.99,\kappa=1$ in IRE.}
    \centering
    \small
    \begin{tabular}{c||c|c||c|c}
    \hline\hline
     \multirow{2}*{~} & \multicolumn{2}{c||}{WideResNet-28-10} & \multicolumn{2}{c}{ResNet-56} \\ \cline{2-5}
     & CIFAR-10  &  CIFAR-100  &  CIFAR-10  & CIFAR-100\\\hline
    SGD & 95.93 &  80.77 &  93.80 & 72.72
    \\ 
    SGD-IRE & {\bf 96.13} ({\bf +0.20}) &  {\bf 81.12} ({\bf +0.35}) &  {\bf 93.94}({\bf +0.14}) & {\bf 72.93}({\bf +0.21})
    \\
    \hline
    SAM & 96.73 & 83.22 &  94.58 & 75.25
    \\ 
    SAM-IRE &  {\bf 96.75} ({\bf +0.02}) & {\bf 83.40} ({\bf +0.19})  &{\bf 94.65} ({\bf +0.07})   &{\bf 75.49} ({\bf +0.24})
    \\ \hline\hline
    \end{tabular}
    \label{table: resnet cifar, fix}
\end{table}

{\bf Experiments for CNNs on ImageNet.} 
We also examine the impact of IRE on generalization of ResNet-50 and default optimizers (SGD and SAM) on on ImageNet. 
Following~\cite{foret2020sharpness} and \cite{kwon2021asam}, we use basic data augmentations and $0.1$ label smoothing.
For the base optimizers, SGD and SAM, we also follow the settings in~\cite{kwon2021asam}: the momentum is set to $0.9$; cosine learning rate decay is adopted with an initial lr $0.2$; the batch size is set to 1024; the weight decay is set to 1e-4; for SAM, $\rho$ is set to $0.05$.
The total epochs is set to 200, and we switch from SGD/SAM to SGD-IRE/SAM-IRE when the training loss approaches 1.5.
For SGD-IRE/SAM-IRE, we fix $K=10$, and tune hyperparameters $\gamma$ and $\kappa$ via a grid search over
$\gamma\in\{0.8,0.6\}$ and $\kappa\in\{2,4\}$. 
The results are reported in Table~\ref{table: resnet imagenet}.

{\bf Experiments for ViTs on CIFAR-100.} 
We examine the impact of IRE on generalization of ViT-T and ViT-S on CIFAR-100. 
We follow the settings in~\cite{mueller2024normalization}: the default optimizers used are AdamW and SAM, with cosine lr decay to $0$ starting from an initial lr 1e-4; the weight decay is 5e-4; batch size is $64$; strong data augmentations (basic + AutoAugment) are utilized; $\rho=0.1$ for SAM.
The total epochs are set to $200$, and we switch from AdamW/SAM to AdmIRE/SAM-IRE when the training loss approaches $0.5$. 
For AdmIRE/SAM-IRE, we fix $K=10$, and tune hyperparameters $\gamma$ and $\kappa$ via a grid search over $\gamma\in\{0.99, 0.9, 0.8\}$ and $\kappa\in\{20,50\}$. The results are reported in Table~\ref{table: vit cifar}.

{\bf Experiments for ViTs on ImageNet.} 
We also examine the impact of IRE on generalization of ViT-S/16 on ImageNet. 
We follow the settings in~\cite{chen2024symbolic}: RandAugment and Mixup with $\alpha=0.5$ are utilized;  the default optimizer used is AdamW with hyperparameters $\beta_1=0.9,\beta_2=0.999$ and weight decay $\lambda=1.0$; the learning rate strategy integrates a warm-up phase followed by a cosine decay scheduler with \texttt{lr\_max}=3e-3; batch size is $4096$; the total training duration is 300 epochs, including 30 warm-up epochs; 
For AdmIRE, we switch from AdamW to AdmIRE at epoch 100 and examine different $\gamma,\kappa$. The results are reported in Table~\ref{table: vit imagenet}.

In this section, the experiments on CIFAR-10/CIFAR-100 were conducted using a single A800 GPU, and the experiments on ImageNet were conducted using 4 A800 GPUs.

\subsection{Experimental details in Section~\ref{section: exp: NLP}}

\subsubsection{Experimental details in Section~\ref{section: exp: NLP: robustness}}

We train a 2-layer decoder-only Transformer (8M parameters) using absolute positional encodings~\citep{vaswani2017attention}, with 8 heads in each layer and a hidden size of 128, on the \texttt{wikitext-2} dataset (4.3M) by AdamW and AdmIRE (with $K=10$ and varying $\gamma,\kappa$). 
The (max) sequence length is set to 512, and the batch size is set to 32. 
The experiments in this section are conducted on 1 A800.

For the optimizer AdamW, we use the hyperparameters $\beta_1=0.9,\beta_2=0.95$ and weight decay $\lambda=0.1$, as suggested in~\cite{touvron2023llama}.
The total training duration is 100,000 steps, including 3,000 warm-up steps followed by a cosine decay scheduler with \texttt{lr\_max} and \texttt{lr\_min}=\texttt{lr\_max}$/20$.

First, we tune \texttt{lr\_max} in AdamW from $\{$\texttt{1.5e-4}, \texttt{3e-4}, \texttt{6e-4}, \texttt{1.2e-3}, \texttt{1.8e-3}, \texttt{3e-3}$\}$.
The results, shown in Figure~\ref{fig: tune adamw} (left), identify the optimal \texttt{lr\_max}=\texttt{6e-4}.
We also use the optimal \texttt{lr\_max} of \texttt{6e-4} for AdmIRE, for which the IRE is enable at the end of warm-up phase.

The results are reported in Figure~\ref{fig: wiki2}.

\begin{figure}[!ht]
    \centering
    \begin{minipage}[t]{0.49\textwidth}
    \centering
    \includegraphics[width=5.0cm]{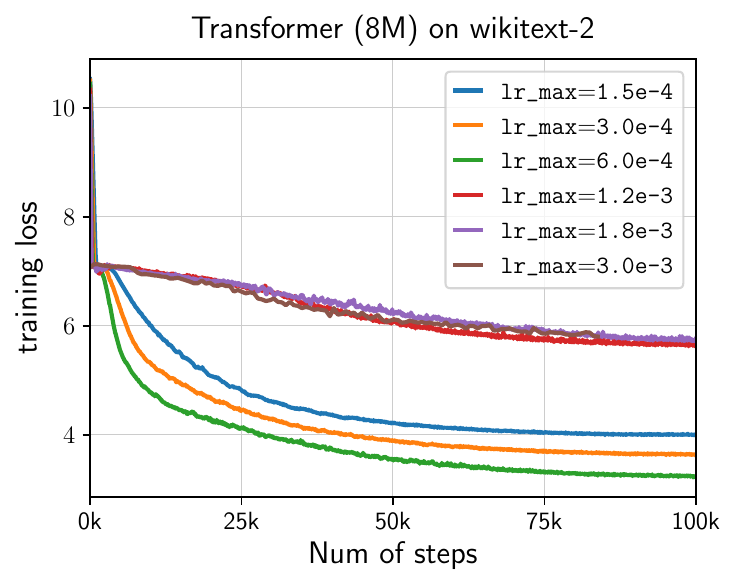}
    \end{minipage}
    \hspace{-1.cm}
    \begin{minipage}[t]{0.49\textwidth}
    \centering
    \includegraphics[width=5.0cm]{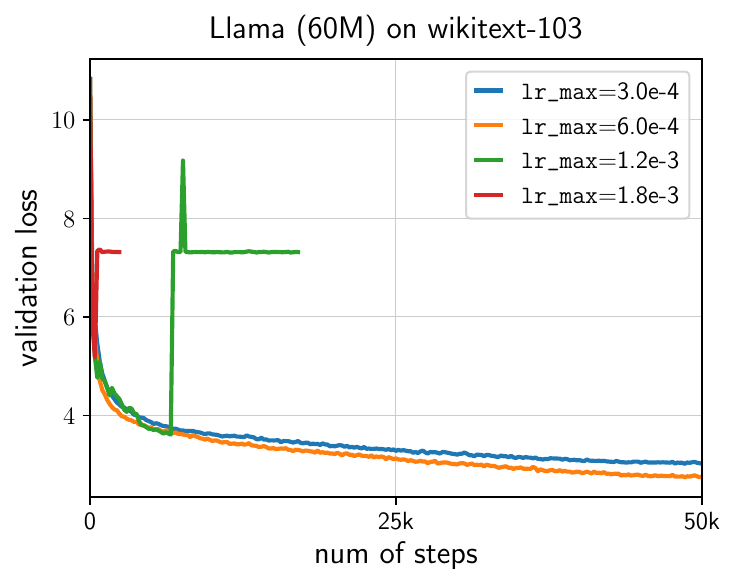}
    \end{minipage}
    \caption{ The results for tuning \texttt{lr\_max} in AdamW.}
    \label{fig: tune adamw}
\end{figure}

\subsubsection{Experimental details in Section~\ref{section: exp: NLP: Llama}}
Llama~\citep{touvron2023llama} is a decode-only Transformer architecture using Rotary Positional Encoding (RoPE)~\citep{su2024roformer}, Swish-Gated Linear Unit (SwiGLU)~\citep{shazeer2020glu}, and Root mean square layer normalization (RMSNorm)~\citep{zhang2019root}. 
The experiments in this section examine the performance of AdmIRE in training Llama models with different sizes.
For implementation, we utilize the Llama code available on~\href{https://huggingface.co/docs/transformers/model_doc/llama}{huggingface}.
The experiments are conducted on 4 H800.

For the optimizer AdamW, we use the well-tuned hyperparameters $\beta_1=0.9,\beta_2=0.95$ and weight decay $\lambda=0.1$ for LLama~\citep{touvron2023llama}. The learning rate strategy integrates a warm-up phase followed by a cosine decay scheduler with \texttt{lr\_max} and \texttt{lr\_min}=\texttt{lr\_max}$/20$.
Additionally, it is used with gradient clipping 1.0 to maintain the training stability. 

In each experiment, we tune the optimal \texttt{lr\_max} for AdamW and then use it also for AdmIRE, for which the IRE is enable at the end of warm-up phase.

{\bf Llama (60M) on wikitext-103 (0.5G).}
We train a 16-layer Llama model, with 10 heads in each layer and a hidden size of 410, on the wikitext-103 dataset.
The (max) sequence length is set to 150, and the batch size is set to 240, following~\cite{dai2019transformer}. 
The total training duration is 50,000 or 100,000 steps, including 500 warm-up steps.
First, we tune \texttt{lr\_max} in AdamW from $\{$\texttt{3e-4}, \texttt{6e-4}, \texttt{1.2e-3}, \texttt{1.8e-3}$\}$, identifying the optimal \texttt{lr\_max} \texttt{6e-4}.
The experimental results are very similar to Figure~\ref{fig: tune adamw} (right), so we will not show them again.
Then, both AdamW and AdmIRE are trained using the optimal \texttt{lr\_max}.

{\bf Llama (119M) on minipile (6G).}
We train a 6-layer Llama model, with 12 heads in each layer and a hidden size of 768, on the minipile dataset.
The (max) sequence length is set to 512, and the batch size is set to 300.
The total training duration is 30,000 or 60,000 steps, including 300 warm-up steps.
First, we tune \texttt{lr\_max} in AdamW from $\{$\texttt{3e-4}, \texttt{6e-4}, \texttt{1.2e-3}, \texttt{1.8e-3}$\}$, identifying the optimal \texttt{lr\_max} \texttt{6e-4}.
(The results are very similar to Figure~\ref{fig: tune adamw} (right), so we do not show them repeatly.)
Then, both AdamW and AdmIRE are trained using the optimal \texttt{lr\_max}.

{\bf Llama (229M) on openwebtext (38G).}
We train a 16-layer Llama model, with 12 heads in each layer and a hidden size of 768, on the openwebtext dataset.
The (max) sequence length is set to 1024, and the batch size is set to 480, following~\href{https://github.com/karpathy/nanoGPT/}{nanoGPT} and~\cite{liu2023sophia}. 
The total training duration is 50,000 or 100,000 steps, including 1,000 warm-up steps.
First, we tune \texttt{lr\_max} in AdamW from $\{$\texttt{3e-4}, \texttt{6e-4}, \texttt{1.2e-3}, \texttt{1.8e-3}$\}$, identifying the optimal \texttt{lr\_max} \texttt{6e-4}.
(The results are very similar to Figure~\ref{fig: tune adamw} (right), so we do not show them repearly.)
Then, both AdamW and AdmIRE are trained using the optimal \texttt{lr\_max}.


\vspace{1.cm}
\section{Proofs in Section~\ref{subsec: theory: average SAM}}
\label{appendix: average SAM}

{\bf Additional Notations.} For the proofs in Section~\ref{section: theory}, some additional notations are used.
For any set $K\subset\bbR^p$ and a constant $R>0$, we denote
$\bbB(K;R):=\{\btheta\in\bbR^p:{\rm dist}(\btheta;K)\leq R\}$.
$\<\cdot,\cdot\>$ represents the standard
Euclidean inner product between two vectors. $\norm{\cdot}$ denotes the $\ell_2$ norm of a vector or the spectral norm of a matrix, whereas $\|\cdot\|_{\rm F}$ denotes the Frobenius norm of a matrix.

\subsection{Preliminary Lemmas}

\begin{lemma}[\cite{arora2022understanding}, Lemma B.2]\label{lemma: local PL}
Under Assumption~\ref{ass: minima manifold}, for any compact set $K\subset\Gamma$, there exist absolute constants $R_1,\mu>0$ such that
\begin{itemize}[leftmargin=2em]
    \item (i) ${\bbB(K;R_1)}\subset U$;
    \item (ii) $\cL(\cdot)$ is $\mu$-PL (defined in Def~\ref{def: PL}) on ${\bbB(K;R_1)}$;
    \item (iii) $\inf\limits_{\btheta\in{\bbB(K;R_1)}}\lambda_m\left(\nabla^2\cL(\btheta)\right)\geq \mu$.
\end{itemize}
We further define the following absolute constants on $\bbB(K;R_1)$:
\begin{gather*}
    \beta_2:=\sup_{\btheta\in{\bbB(K;R_1)}}\norm{\nabla^2\cL(\btheta)};\quad
    \beta_3:=\sup_{\btheta\in{\bbB(K;R_1)}}\norm{\nabla^3\cL(\btheta)};\quad\beta_4:=\sup_{\btheta\in{\bbB(K;R_1)}}\norm{\nabla^4\cL(\btheta)};
    \\
    \nu:=\inf_{\btheta\in{\bbB(K;R_1)}}\lambda_m\left(\nabla^2\cL(\btheta)\right);\quad
    \zeta_{\Phi}:=\sup_{\btheta\in{\bbB(K;R_1)}}\norm{\nabla^2\Phi(\btheta)}
    .
\end{gather*}
\end{lemma}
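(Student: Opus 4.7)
The plan is to combine a pointwise normal-form analysis of $\cL$ at points of $\cM$ with a compactness argument on $K$, then shrink to a single tubular radius $R_1$ serving all three conclusions simultaneously. Throughout, I read ``$K \subset \Gamma$'' as $K \subset \cM$.

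Step~1 (property (iii)): For any $\btheta_0\in\cM$, Assumption~\ref{ass: minima manifold} gives $\nabla^2\cL(\btheta_0)\succeq 0$ with rank exactly $m$, so its spectrum is $\lambda_1(\btheta_0)\geq\cdots\geq\lambda_m(\btheta_0)>0=\lambda_{m+1}(\btheta_0)=\cdots=\lambda_p(\btheta_0)$. Because $\cM$ is a $(p-m)$-dim manifold locally cut out by $\nabla\cL=0$, the implicit function theorem identifies $T_{\btheta_0}\cM=\ker\nabla^2\cL(\btheta_0)$, so the $m$ positive eigendirections span $N_{\btheta_0}\cM$. Continuity of $\btheta\mapsto\lambda_m(\nabla^2\cL(\btheta))$ on $\cM$ and compactness of $K$ yield $\inf_K\lambda_m(\nabla^2\cL)\geq 2\mu_0>0$, and then continuity of $\nabla^2\cL$ on $\bbR^p$ supplies some $r_1>0$ with $\lambda_m(\nabla^2\cL(\btheta))\geq\mu_0$ throughout $\bbB(K;r_1)$.

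Step~2 (property (ii)): By the tubular-neighborhood theorem applied to the compact $C^2$-submanifold $K\subset\cM$, there exists $r_2\in(0,r_1)$ such that each $\btheta\in\bbB(K;r_2)$ has a unique nearest point $\btheta^*\in\cM$ with $\bu:=\btheta-\btheta^*\in N_{\btheta^*}\cM$. Taylor expanding $\cL$ around $\btheta^*$, using $\cL(\btheta^*)=0$, $\nabla\cL(\btheta^*)=0$, and the eigenvalue gap on the normal subspace from Step~1, one obtains
\begin{align*}
\cL(\btheta)&\leq \tfrac{1}{2}\bu^\top\nabla^2\cL(\btheta^*)\bu+C_3\|\bu\|^3\leq \tfrac{\beta_2}{2}\|\bu\|^2+C_3\|\bu\|^3,\\
\|\nabla\cL(\btheta)\|^2&\geq \bigl\|\nabla^2\cL(\btheta^*)\bu\bigr\|^2-C_3'\|\bu\|^3\geq \mu_0^2\|\bu\|^2-C_3'\|\bu\|^3,
\end{align*}
where $C_3,C_3'$ depend on $\sup_{\bbB(K;r_1)}\|\nabla^3\cL\|$, finite by compactness and $\cL\in C^4$. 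Choosing $R_1\in(0,r_2]$ small enough to absorb the cubic remainders then yields $\|\nabla\cL(\btheta)\|^2\geq 2\mu\,\cL(\btheta)$ for some $\mu\in(0,\mu_0]$.

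Step~3 (property (i)): For the gradient flow $\dot\btheta=-\nabla\cL(\btheta)$, PL gives $\tfrac{d}{dt}\sqrt{\cL(\btheta(t))}=-\|\nabla\cL\|^2/(2\sqrt{\cL})\leq -\sqrt{\mu/2}\,\|\nabla\cL\|$, so the total arc-length is bounded by $\sqrt{2\cL(\btheta(0))/\mu}$. Shrinking $R_1$ further so that $\sqrt{2\sup_{\bbB(K;R_1)}\cL/\mu}<r_2-R_1$ keeps the trajectory inside $\bbB(K;r_2)$; finiteness of arc-length gives a limit point, and $\cL(\btheta(t))\to 0$ (exponential decay from PL) places that limit in $\cM$, proving $\bbB(K;R_1)\subset U$. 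Finiteness of $\beta_2,\beta_3,\beta_4,\nu,\zeta_\Phi$ on $\bbB(K;R_1)$ is then automatic: the closure of $\bbB(K;R_1)$ is compact, $\cL\in C^4$ handles $\beta_2,\beta_3,\beta_4$, $\nu$ was produced in Step~1, and $\zeta_\Phi$ uses $\Phi\in C^2(U)$ (cited immediately below the lemma statement).

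The main obstacle is coupling the three radius constraints into a single shrinkage: $R_1$ must simultaneously lie within the tubular-neighborhood radius, absorb the cubic Taylor remainders in the PL estimate, and keep gradient-flow trajectories confined so that the attractor statement (i) holds. Each constraint individually is a routine consequence of compactness of $K$, continuity of $\nabla^2\cL, \nabla^3\cL$, and boundedness of $\cL$; the coupling only requires taking the minimum of three explicit bounds at the end.
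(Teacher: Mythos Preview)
The paper does not supply its own proof of this lemma; it is quoted verbatim as Lemma~B.2 of \cite{arora2022understanding} and used as a black box. So there is no in-paper argument to compare against, and your self-contained sketch is the right thing to produce here.

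Your outline is essentially sound and follows the standard Morse--Bott/\L{}ojasiewicz route: (1) compactness of $K$ plus continuity of $\lambda_m(\nabla^2\cL)$ gives a uniform positive gap on a tube; (2) a second-order Taylor expansion in the normal direction converts that gap into a PL inequality; (3) PL implies finite gradient-flow arc-length, hence convergence to $\cM$, hence $\bbB(K;R_1)\subset U$. Two small points to tighten. First, in Step~2 you invoke $\beta_2$ in the upper bound for $\cL$, but $\beta_2$ is by definition a supremum over $\bbB(K;R_1)$, which you have not yet fixed; use instead $\sup_{\bbB(K;r_1)}\|\nabla^2\cL\|$, which is already available. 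Second, the containment argument in Step~3 needs PL to hold on the \emph{larger} tube $\bbB(K;r_2')$ through which the trajectory may pass, not only on $\bbB(K;R_1)$; you should first fix a radius $r_2'\leq r_2$ on which the cubic remainders are absorbed (so PL holds there with constant $\mu$), and \emph{then} choose $R_1<r_2'$ small enough that $\sqrt{2\cL(\btheta(0))/\mu}<r_2'-R_1$. Since $\cL\leq C\,\mathrm{dist}(\cdot,\cM)^2$ on that tube, this reduces to $R_1\sqrt{2C/\mu}<r_2'-R_1$, which is satisfiable. With those adjustments the argument closes; the trajectory-stays-in-tube step is the usual open--closed continuity induction in $t$. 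Also note $K$ is merely a compact \emph{subset} of $\cM$, not itself a submanifold, but the tubular neighborhood theorem still applies by working on a compact piece of $\cM$ containing $K$.
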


\begin{lemma}[Key properties of $\Phi(\cdot)$~\citep{arora2022understanding}]\label{lemma: key property of phi}\ 
Under Assumption~\ref{ass: minima manifold}, 
\begin{itemize}[leftmargin=2em]
    \item For any $\btheta\in U$, $\partial\Phi(\btheta)\nabla\cL(\btheta)=\bzero$.
    \item For any $\btheta\in \Gamma$, $\partial\Phi(\btheta)=P_{m+1:p}(\nabla^2\cL(\btheta))$.
\end{itemize}
\end{lemma}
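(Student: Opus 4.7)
\textbf{Part (i): $\partial\Phi(\btheta)\nabla\cL(\btheta)=\bzero$ for $\btheta\in U$.}
The plan is to exploit the invariance of $\Phi$ along gradient flow trajectories. Let $\btheta(t)$ be the GF starting at $\btheta(0)=\btheta\in U$; by Definition~\ref{def: limit gradient flow}, $\Phi(\btheta(t))=\lim_{s\to\infty}\btheta(t+s)=\lim_{s\to\infty}\btheta(s)=\Phi(\btheta)$ for every $t\geq 0$, so $\Phi$ is constant along the trajectory. Since $\Phi$ is $\cC^2$ on $U$ (by the aforementioned consequence of Assumption~\ref{ass: minima manifold}), differentiating $\Phi(\btheta(t))\equiv \Phi(\btheta)$ in $t$ and evaluating at $t=0$ yields
\begin{equation*}
\bzero=\frac{\rd}{\rd t}\Phi(\btheta(t))\Big|_{t=0}=\partial\Phi(\btheta)\,\dot\btheta(0)=-\partial\Phi(\btheta)\nabla\cL(\btheta),
\end{equation*}
which is the desired identity.

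\textbf{Part (ii): $\partial\Phi(\btheta)=P_{m+1:p}(\nabla^2\cL(\btheta))$ for $\btheta\in\cM$.}
The plan is to use Part~(i) together with the fact that $\Phi$ restricted to $\cM$ is the identity, and then match the resulting operator with the spectral projector. Differentiating the identity $\partial\Phi(\btheta')\nabla\cL(\btheta')=\bzero$ (valid for $\btheta'\in U$) in $\btheta'$ gives, for every direction $\bv\in\RR^p$,
\begin{equation*}
\partial^2\Phi(\btheta')[\bv,\nabla\cL(\btheta')]+\partial\Phi(\btheta')\nabla^2\cL(\btheta')\bv=\bzero.
\end{equation*}
Evaluating at $\btheta\in\cM$, where $\nabla\cL(\btheta)=\bzero$, the first term vanishes, giving $\partial\Phi(\btheta)\nabla^2\cL(\btheta)=\bzero$. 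Since $\nabla^2\cL(\btheta)$ is symmetric with range $\mathrm{span}\{\bu_1,\dots,\bu_m\}$ (the nonzero-eigenvalue subspace, using $\rank(\nabla^2\cL(\btheta))=m$ from Assumption~\ref{ass: minima manifold}), this shows $\partial\Phi(\btheta)$ vanishes on $\mathrm{span}\{\bu_1,\dots,\bu_m\}$. Next, since $\cM$ is a $(p-m)$-dim submanifold and $\Phi\equiv\mathrm{id}$ on $\cM$, for any $\bv\in\cT_\btheta\cM$ one has, via a smooth curve $\btheta(s)\subset\cM$ with $\btheta(0)=\btheta$, $\dot\btheta(0)=\bv$, the identity $\partial\Phi(\btheta)\bv=\frac{\rd}{\rd s}\Phi(\btheta(s))|_{s=0}=\bv$. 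It remains to identify $\cT_\btheta\cM=\mathrm{span}\{\bu_{m+1},\dots,\bu_p\}=\ker(\nabla^2\cL(\btheta))$, which follows because $\cM$ is a level set of $\nabla\cL$ near $\btheta$ and $\partial(\nabla\cL)(\btheta)=\nabla^2\cL(\btheta)$, whose kernel has dimension $p-m=\dim\cT_\btheta\cM$. Thus $\partial\Phi(\btheta)$ acts as the identity on $\mathrm{span}\{\bu_{m+1},\dots,\bu_p\}$ and annihilates its orthogonal complement $\mathrm{span}\{\bu_1,\dots,\bu_m\}$, so $\partial\Phi(\btheta)=\sum_{k=m+1}^p \bu_k\bu_k^\top=P_{m+1:p}(\nabla^2\cL(\btheta))$.

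\textbf{Main obstacle.} The calculations themselves are short; the only subtle point is justifying the formal differentiations. I need to invoke $\Phi\in\cC^2(U)$ (guaranteed by Assumption~\ref{ass: minima manifold} via \cite[Lemma B.15]{arora2022understanding}, cited in the excerpt) to legitimately differentiate along GF in Part~(i) and to pass a second derivative through $\partial\Phi(\btheta')\nabla\cL(\btheta')\equiv\bzero$ in Part~(ii). The identification $\cT_\btheta\cM=\ker(\nabla^2\cL(\btheta))$ also deserves care: it follows from the constant-rank hypothesis in Assumption~\ref{ass: minima manifold} combined with the implicit function theorem applied to $\nabla\cL$ near $\cM$, but this is standard.
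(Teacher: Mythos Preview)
The paper does not supply its own proof of this lemma; it is stated with a citation to \cite{arora2022understanding} and used as a black box. Your proposal is correct and is essentially the standard argument one finds in that reference: invariance of $\Phi$ along gradient flow for Part~(i), and for Part~(ii) differentiating that identity to obtain $\partial\Phi(\btheta)\nabla^2\cL(\btheta)=0$ on $\cM$, then combining with $\Phi|_{\cM}=\mathrm{id}$ and the identification $\cT_\btheta\cM=\ker(\nabla^2\cL(\btheta))$ (which follows by differentiating $\nabla\cL\equiv 0$ along curves in $\cM$ and matching dimensions).
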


\begin{lemma}[Continuity of $P_{m+1:p}$]\label{lemma: projection gap estimate}
Under Assumption~\ref{ass: minima manifold}, there exists absolute constants $R_2,\zeta_{P}>0$ such that for any $\btheta\in{\bbB(K;R_2)}$,
    \begin{align*}
        \norm{P_{m+1:p}(\nabla^2\cL(\btheta))-P_{m+1:p}(\nabla^2\cL(\Phi(\btheta)))}\leq\zeta_{P}\norm{\btheta-\Phi(\btheta)}.
    \end{align*}
\end{lemma}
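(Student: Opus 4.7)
The plan is to reduce the statement to a standard Davis--Kahan (sin-$\Theta$) perturbation bound once a uniform spectral gap for $\nabla^2 \cL$ has been secured on a suitable neighborhood of $K$. The key structural input is that under Assumption~\ref{ass: minima manifold}, at every $\btheta_0 \in \cM$ the Hessian $\nabla^2\cL(\btheta_0)$ has rank exactly $m$, so its eigenvalues are $\lambda_1(\btheta_0)\geq\cdots\geq\lambda_m(\btheta_0)\geq \nu > 0$ (by Lemma~\ref{lemma: local PL}(iii)) and $\lambda_{m+1}(\btheta_0)=\cdots=\lambda_p(\btheta_0)=0$. Thus there is an intrinsic spectral gap of size at least $\nu$ between the $m$-th and $(m{+}1)$-th eigenvalues at every point of $\cM \cap \bbB(K;R_1)$.

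Next, I would propagate this gap to a full neighborhood of $K$. Since $\cL \in \cC^4$ and $\bbB(K;R_1)$ is compact, $\btheta\mapsto \nabla^2\cL(\btheta)$ is $\beta_3$-Lipschitz on $\bbB(K;R_1)$ (by the definition of $\beta_3$ in Lemma~\ref{lemma: local PL}). By Weyl's inequality applied coordinatewise to the eigenvalues, for any $\btheta\in\bbB(K;R_1)$ and $\btheta_0:=\Phi(\btheta)\in\cM$,
\begin{align*}
    |\lambda_i(\nabla^2\cL(\btheta))-\lambda_i(\nabla^2\cL(\btheta_0))|\leq \beta_3\,\|\btheta-\btheta_0\|, \qquad i=1,\dots,p.
\end{align*}
Setting $R_2:=\min\!\bigl(R_1,\; \nu/(4\beta_3)\bigr)$ then guarantees that for every $\btheta\in\bbB(K;R_2)$ the $m$-th eigenvalue of $\nabla^2\cL(\btheta)$ is at least $3\nu/4$ while the $(m{+}1)$-th is at most $\nu/4$; in particular a uniform gap $\delta\geq \nu/2$ separates the top-$m$ and bottom-$(p-m)$ spectra of both $\nabla^2\cL(\btheta)$ and $\nabla^2\cL(\btheta_0)$.

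With the gap in hand, I would invoke the Davis--Kahan sin-$\Theta$ theorem in its operator-norm form: if $A,B$ are symmetric with spectral gap $\geq \delta$ at the $m$--$(m{+}1)$ split, then the orthogonal projections onto their bottom-$(p-m)$ eigenspaces satisfy $\|P_{m+1:p}(A)-P_{m+1:p}(B)\|\leq 2\|A-B\|/\delta$. Applying this with $A=\nabla^2\cL(\btheta)$, $B=\nabla^2\cL(\Phi(\btheta))$ and $\delta=\nu/2$, combined with the Lipschitz estimate $\|\nabla^2\cL(\btheta)-\nabla^2\cL(\Phi(\btheta))\|\leq \beta_3\|\btheta-\Phi(\btheta)\|$, yields the claim with $\zeta_P:=4\beta_3/\nu$.

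The main obstacle is essentially bookkeeping rather than conceptual: I must ensure that both $\btheta$ and $\Phi(\btheta)$ lie in the neighborhood $\bbB(K;R_1)$ where the constants $\nu,\beta_3$ are valid. This is immediate because $\Phi(\btheta)\in\cM$ and GF is non-expanding in the loss sense, so taking $R_2\leq R_1$ suffices. One may alternatively prove the Davis--Kahan step directly via a contour-integral representation $P_{m+1:p}(A)=\frac{1}{2\pi i}\oint_{\Gamma}(zI-A)^{-1}\,dz$ on a small circle $\Gamma$ enclosing $0$ but not $\nu/2$; the resolvent identity and the uniform bound $\|(zI-A)^{-1}\|\leq 2/\nu$ on $\Gamma$ give the same Lipschitz conclusion and avoid invoking Davis--Kahan as a black box.
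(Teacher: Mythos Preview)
Your proposal is correct and follows essentially the same approach as the paper's proof: secure a uniform spectral gap via Weyl's inequality (shrinking the neighborhood so that $\lambda_m\geq 3\nu/4$ and $\lambda_{m+1}\leq \nu/4$), then apply Davis--Kahan together with the $\beta_3$-Lipschitz bound on $\nabla^2\cL$. The only cosmetic difference is that the paper routes Davis--Kahan through the Frobenius norm of $\bU_{\text{bottom}}^\top\bV_{\text{top}}$ and picks up an extra factor $m\wedge(p-m)$ in the constant, whereas your direct operator-norm version yields the cleaner $\zeta_P=4\beta_3/\nu$; both are valid and the argument is the same.
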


\begin{proof}[Proof of Lemma~\ref{lemma: projection gap estimate}]\ \\
Let the orthogonal decomposition of $\nabla^2\cL(\btheta)$ and $\nabla^2\cL(\Phi(\btheta))$ be $\nabla^2\cL(\btheta)=\sum_{k=1}^p\lambda_k\bu_k\bu_k^\top$ ($\lambda_1\geq\cdots\geq\lambda_p$) and $\nabla^2\cL(\Phi(\btheta))=\sum_{k=1}^p\mu_k\bv_k\bv_k^\top$ ($\mu_1\geq\cdots\geq\mu_p$), respectively.

By Lemma~\ref{lemma: local PL}, for any $\btheta\in{\bbB}(K;R_1)$, it holds that $\norm{\nabla^2\cL(\btheta)-\nabla^2\cL(\Phi(\btheta))}\leq\beta_3\norm{\btheta-\Phi(\btheta)}$. We choose 
$R_2:=R_1\wedge \frac{\mu}{4\beta_3}$. Then for any $\btheta\in\bbB(K;R_2)$,
\begin{align*}
     \norm{\nabla^2\cL(\btheta)-\nabla^2\cL(\Phi(\btheta))}\leq\beta_3\norm{\btheta-\Phi(\btheta)}\leq\frac{\mu}{4}.
\end{align*}

Consequently, by Lemma~\ref{lemma: Weyl thm}, we can bound the gap of eigenvalues: for any $k\in[p]$, \begin{align*}
    |\lambda_k-\mu_k|\leq\norm{\nabla^2\cL(\btheta)-\nabla^2\cL(\Phi(\btheta))}\leq\frac{\mu}{4}.
\end{align*}
Noticing $\Phi(\btheta)\in\Gamma$, by Lemma~\ref{lemma: local PL}, it holds that $\mu_1\geq\mu_m\geq\mu$ and $\mu_{m+1}=\cdots=\mu_p=0$. Thus, we can obtain the bounds of $\{\lambda_k\}_k$:
\begin{align*}
\text{for all $k\leq m$, }&\quad
    \lambda_k\geq\lambda_m\geq\mu_m-|\lambda_m-\mu_m|\geq\frac{3\mu}{4};
\\
\text{for all $k\geq m+1$, }&\quad
    \lambda_k\leq\lambda_{m+1}\leq\mu_{m+1}+|\lambda_{m+1}-\mu_{m+1}|\leq\frac{\mu}{4}.
\end{align*}

For simplicity, we denote $\bU_\text{top}:=(\bu_1,\cdots,\bu_m)$, $\bU_\text{bottom}:=(\bu_{m+1},\cdots,\bu_{p})$, $\bV_\text{top}:=(\bv_1,\cdots,\bv_m)$, $\bV_\text{bottom}:=(\bv_{m+1},\cdots,\bv_{p})$.

By Lemma~\ref{lemma: Davis-Kahan thm}, we can bound the gap between the subspaces:
\begin{align*}
    \norm{\bU_\text{bottom}^\top\bV_\text{top}}_{\rm F}\leq&\frac{\norm{\bU_\text{bottom}^\top(\nabla^2\cL(\btheta)-\nabla^2\cL(\Phi(\btheta)))\bV_\text{top}}_{\rm F}}{\frac{3\mu}{4}-\frac{\mu}{4}}
    \\\overset{\text{Lemma~\ref{lemma: F norm bounded by 2, F norm}}}{\leq}&\begin{cases}
        \frac{2}{\mu}\norm{\bU_\text{bottom}^\top}\norm{\nabla^2\cL(\btheta)-\nabla^2\cL(\Phi(\btheta))}\norm{\bV_\text{top}}_{\rm F}
        \\\frac{2}{\mu}\norm{\bU_\text{bottom}^\top}_{\rm F}\norm{\nabla^2\cL(\btheta)-\nabla^2\cL(\Phi(\btheta))}\norm{\bV_\text{top}}
    \end{cases}
    \\=&\frac{2\left(m\wedge(p-m)\right)}{\mu}\norm{\nabla^2\cL(\btheta)-\nabla^2\cL(\Phi(\btheta))}.
\end{align*}

According to the definition of $P_{m+1:p}(\cdot)$, it holds that
\begin{gather*}
    P_{m+1:p}(\nabla^2\cL(\btheta))=\sum_{k=m+1}^p\bu_k\bu_k^\top=\bU_\text{bottom}\bU_\text{bottom}^\top,
    \\
    P_{m+1:p}(\nabla^2\cL(\Phi(\btheta)))=\sum_{k=m+1}^p\bv_k\bv_k^\top=\bV_\text{bottom}\bV_\text{bottom}^\top.
\end{gather*}
Noticing the relationship 
\begin{align*}
    &\norm{P_{m+1:p}(\nabla^2\cL(\btheta))-P_{m+1:p}(\nabla^2\cL(\Phi(\btheta)))}^2\leq\norm{ P_{m+1:p}(\nabla^2\cL(\btheta))-P_{m+1:p}(\nabla^2\cL(\Phi(\btheta)))}_{\rm F}^2
    \\=&\norm{\bU_\text{bottom}\bU_\text{bottom}^\top-\bV_\text{bottom}\bV_\text{bottom}^\top}_{\rm F}^2
    =2(p-m)-2\Tr\left(\bU_\text{bottom}\bU_\text{bottom}^\top\bV_\text{bottom}\bV_\text{bottom}^\top\right)
    \\=&2\Tr\left(\bU_\text{bottom}\bU_\text{bottom}^\top\left(\bI-\bV_\text{bottom}\bV_\text{bottom}^\top\right)\right)=2\Tr\left(\bU_\text{bottom}\bU_\text{bottom}^\top\bV_\text{top}\bV_\text{top}^\top\right)
    \\=&2\norm{\bU_\text{bottom}^\top\bV_\text{top}}_{\rm F}^2,
\end{align*}
we obtain the bound:
\begin{align*}
    &\norm{P_{m+1:p}(\nabla^2\cL(\btheta))-P_{m+1:p}(\nabla^2\cL(\Phi(\btheta)))}\leq\sqrt{2}\norm{\bU_\text{bottom}^\top\bV_\text{top}}_{\rm F}
    \\\leq&\frac{2\sqrt{2}\left(m\wedge(p-m)\right)}{\mu}\norm{\nabla^2\cL(\btheta)-\nabla^2\cL(\Phi(\btheta))}\leq\frac{2\sqrt{2}\left(m\wedge(p-m)\right)\beta_3}{\mu}\norm{\btheta-\Phi(\btheta)}.
\end{align*}

To summarize, we only need to choose the constants $R_2=R_1\wedge \frac{\mu}{4\beta_3}$ and  $\zeta_P=\frac{2\sqrt{2}\left(m\wedge(p-m)\right)\beta_3}{\mu}$ to ensure this lemma holds.
\end{proof}

{\bf Proof Notations.} Now we introduce some additional useful notations in the proof in this section.

First, we choose $R:=(R_1\wedge R_2)/2$,
where $R_1$ is defined in Lemma~\ref{lemma: local PL} and $R_2$ is defined in Lemma~\ref{lemma: projection gap estimate}.
Let $\mu$ be the PL constant on $\bbB(K;R)$.
Moreover, we use the following notations:

\begin{equation}
\begin{gathered}
    \beta_2:=\sup_{\btheta\in{\bbB(K;R)}}\norm{\nabla^2\cL(\btheta)};\quad
    \beta_3:=\sup_{\btheta\in{\bbB(K;R)}}\norm{\nabla^3\cL(\btheta)};\quad\beta_4:=\sup_{\btheta\in{\bbB(K;R)}}\norm{\nabla^4\cL(\btheta)};
    \\
    \nu:=\inf_{\btheta\in{\bbB(K;R)}}\lambda_m\left(\nabla^2\cL(\btheta)\right);\quad
    \zeta_{\Phi}:=\sup_{\btheta\in{\bbB(K;R)}}\norm{\nabla^2\Phi(\btheta)};
    \\
    \zeta_P:=\sup_{\btheta\in{\bbB(K;R)}-\Gamma}\frac{\norm{P_{m+1:p}(\nabla^2\cL(\btheta))-P_{m+1:p}(\nabla^2\cL(\Phi(\btheta)))}}{\norm{\btheta-\Phi(\btheta)}}
    .
\end{gathered}
\end{equation}

Ensured by Lemma~\ref{lemma: local PL} and~\ref{lemma: projection gap estimate}, these quantities are all absolute constants in $(0,+\infty)$. Moreover, without loss of generality, we can assume that $\beta_1,\beta_2,\beta_3,\zeta_\Phi,\zeta_P>1$ and $\mu\leq\nu<1$.

\begin{lemma}[Connections between para norm, grad norm, and loss]\label{lemma: properties of PL}
For any $\btheta\in\bbB(K;R)>0$, it holds that:
\begin{itemize}[leftmargin=2em]
    \item (para norm v.s. grad norm) $\mu\norm{\nabla\cL(\btheta)}\leq\norm{\btheta-\Phi(\btheta)}\leq\beta_2\norm{\nabla\cL(\btheta)}$;
    \item (grad norm v.s. loss) $2\mu\cL(\btheta)\leq\norm{\nabla\cL(\btheta)}^2\leq\frac{2\beta_2^2}{\mu}\cL(\btheta)$;
    \item (loss v.s. para norm) $\frac{\mu}{2}\norm{\btheta-\Phi(\btheta)}^2\leq\cL(\btheta)\leq\frac{\beta_2^2}{2\mu}\norm{\btheta-\Phi(\btheta)}^2$.
\end{itemize}
    
\end{lemma}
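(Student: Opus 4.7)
My plan is to derive all six inequalities from three standard tools available on $\bbB(K;R)$ via Lemma~\ref{lemma: local PL}: the $\mu$-PL inequality $\norm{\nabla\cL(\btheta)}^2 \geq 2\mu\,\cL(\btheta)$, the uniform smoothness bound $\norm{\nabla^2\cL(\btheta)} \leq \beta_2$, and the facts $\cL(\Phi(\btheta)) = 0$ and $\nabla\cL(\Phi(\btheta)) = \bzero$ (since $\Phi(\btheta) \in \cM$). Among the six stated inequalities, only one is genuinely nontrivial---the quadratic growth bound $\cL(\btheta) \geq \frac{\mu}{2}\norm{\btheta - \Phi(\btheta)}^2$---and the remaining five will follow by short algebra.

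Two immediate consequences of these tools already yield half the lemma. Applying the mean value theorem to the $\beta_2$-Lipschitz map $\nabla\cL$ together with $\nabla\cL(\Phi(\btheta)) = \bzero$ gives $\norm{\nabla\cL(\btheta)} \leq \beta_2\norm{\btheta - \Phi(\btheta)}$, which (combined with the PL bound $\norm{\nabla\cL}^2 \geq 2\mu\cL$) handles the lower half of (para v.s. grad) and the lower half of (grad v.s. loss). A second-order Taylor expansion of $\cL$ around $\Phi(\btheta)$ using the smoothness bound yields $\cL(\btheta) \leq \frac{\beta_2}{2}\norm{\btheta - \Phi(\btheta)}^2$, which (after another invocation of PL) covers the upper half of (loss v.s. para) and the upper half of (grad v.s. loss).

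The key step is the quadratic growth bound, which I would obtain by integrating along the gradient flow $\frac{\rd}{\rd s}\btheta(s) = -\nabla\cL(\btheta(s))$ with $\btheta(0) = \btheta$. Since $\btheta \in \bbB(K;R) \subset U$, Definition~\ref{def: attraction set} guarantees $\btheta(s) \to \Phi(\btheta)$. Along this trajectory PL gives $\frac{\rd}{\rd s}\cL(\btheta(s)) = -\norm{\nabla\cL(\btheta(s))}^2 \leq -2\mu\cL(\btheta(s))$, while the chain rule gives $\frac{\rd}{\rd s}\sqrt{\cL(\btheta(s))} = -\frac{\norm{\nabla\cL}^2}{2\sqrt{\cL}}$. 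Dividing the speed $\norm{\dot\btheta} = \norm{\nabla\cL}$ by the PL-derived lower bound $\norm{\nabla\cL} \geq \sqrt{2\mu\cL}$ to eliminate one factor of $\norm{\nabla\cL}$ yields
\begin{align*}
\norm{\dot\btheta(s)} \;=\; \frac{\norm{\nabla\cL}^2}{\norm{\nabla\cL}} \;\leq\; \frac{\norm{\nabla\cL}^2}{\sqrt{2\mu\cL(\btheta(s))}} \;=\; -\sqrt{\tfrac{2}{\mu}}\,\frac{\rd}{\rd s}\sqrt{\cL(\btheta(s))}.
\end{align*}
Integrating from $0$ to $\infty$ and applying the triangle inequality gives $\norm{\btheta - \Phi(\btheta)} \leq \sqrt{2\cL(\btheta)/\mu}$, which is the quadratic growth claim. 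The remaining upper bounds (the upper half of (para v.s. grad) and the upper half of (loss v.s. para)) will then follow by chaining this bound with PL and with the Lipschitz gradient bound already derived.

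The main obstacle is confirming that the gradient flow trajectory used above stays inside $\bbB(K;R_1)$ throughout, so that PL and smoothness remain valid at every time $s\geq 0$. I would handle this by a standard bootstrap: as long as $\btheta(s)$ remains in $\bbB(K;R_1)$, the length estimate bounds the total displacement from $\btheta$ by $\sqrt{2\cL(\btheta)/\mu}$, and the Taylor upper bound $\cL(\btheta) \leq \frac{\beta_2}{2}R^2$ on $\bbB(K;R)$ keeps this displacement small compared to $R_1 - R$ under the choice $R = (R_1\wedge R_2)/2$ already recorded in the proof notation. Hence $\btheta(s)$ cannot exit $\bbB(K;R_1)$ before converging to $\Phi(\btheta)$, closing the loop and legitimizing the integral estimate.
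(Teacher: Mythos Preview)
Your approach coincides with the paper's: the paper also reduces everything to local PL plus smoothness, citing Lemma~B.6 of \cite{arora2022understanding} for the three lower bounds (that lemma's proof is precisely the gradient-flow arc-length argument you wrote out) and noting that the upper bounds then follow from $\beta_2$-smoothness.

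There is one quantitative slip in your bootstrap, however. The displacement bound along the flow is $\sqrt{2\cL(\btheta)/\mu} \leq R\sqrt{\beta_2/\mu}$, and you assert this is dominated by $R_1 - R$ under the paper's choice $R = (R_1 \wedge R_2)/2$. But the paper records $\mu < 1 < \beta_2$, so $\sqrt{\beta_2/\mu} > 1$ and hence $R\sqrt{\beta_2/\mu} > R \geq R_1 - R$; the claimed containment fails. The fix is immediate---shrink $R$ by an additional factor of $1 + \sqrt{\beta_2/\mu}$, which only changes absolute constants---but the argument as written does not close with the specific $R$ fixed in the proof notation.
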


\begin{proof}[Proof of Lemma~\ref{lemma: properties of PL}]
This lemma is a corollary of local PL and smoothness (Lemma~\ref{lemma: local PL}).
For the three lower bounds, please refer to the proof of Lemma B.6 in~\cite{arora2022understanding}. 
Then utilizing these lower bounds and the smoothness $\beta_2$, the upper bounds hold naturally.
\end{proof}

\begin{lemma}\label{lemma: near minima, projection estimate}
For all $\btheta\in\bbB(K;R)$, 
\begin{itemize}[leftmargin=2em]
    \item $\norm{P_{m+1:p}(\nabla^2\cL(\btheta))\nabla^2\cL(\btheta)}\leq\cO\left(\norm{\btheta-\Phi(\btheta)}\right)$;
    \item $\norm{P_{m+1:p}(\nabla^2\cL(\btheta))\nabla\cL(\btheta)}\leq\cO\left(\norm{\btheta-\Phi(\btheta)}^2\right)$;
    \item Let $\rho>0$ and $\bv\in\bbS^{p-1}$. If $\btheta+\rho\bv\in\bbB(K;R)$, then 
    \begin{gather*}
        \norm{\nabla\cL(\btheta+\rho\bv)}\leq\norm{\nabla\cL(\btheta)}+\rho\beta_2
        ;
        \\\norm{P_{m+1:p}(\nabla^2\cL(\btheta))\nabla\cL(\btheta+\rho\bv)}\leq\cO\left(\norm{\btheta-\Phi(\btheta)}^2\right)+\cO\left(\rho\norm{\btheta-\Phi(\btheta)}\right)+\frac{\rho^2\beta_3}{2}.
    \end{gather*}
\end{itemize}
    
\end{lemma}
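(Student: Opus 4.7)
The central observation is that, by Assumption~\ref{ass: minima manifold}, the Hessian at any $\btheta^\star\in\cM$ has rank $m$, so $P_{m+1:p}(\nabla^2\cL(\btheta^\star))$ is exactly the projection onto its kernel and therefore satisfies $P_{m+1:p}(\nabla^2\cL(\btheta^\star))\nabla^2\cL(\btheta^\star)=\bzero$. Similarly, $\nabla\cL(\Phi(\btheta))=\bzero$ since $\Phi(\btheta)\in\cM$. The plan is to exploit these two identities at $\btheta^\star=\Phi(\btheta)$ to create ``free'' subtractions, and then control everything that remains by the smoothness constants $\beta_2,\beta_3$ from Lemma~\ref{lemma: local PL} together with the projection continuity constant $\zeta_P$ from Lemma~\ref{lemma: projection gap estimate}.

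For the first bullet I would write
\[
P_{m+1:p}(\nabla^2\cL(\btheta))\nabla^2\cL(\btheta)=P_{m+1:p}(\nabla^2\cL(\btheta))\bigl(\nabla^2\cL(\btheta)-\nabla^2\cL(\Phi(\btheta))\bigr)+\bigl(P_{m+1:p}(\nabla^2\cL(\btheta))-P_{m+1:p}(\nabla^2\cL(\Phi(\btheta)))\bigr)\nabla^2\cL(\Phi(\btheta)).
\]
The first summand is bounded by $\beta_3\|\btheta-\Phi(\btheta)\|$ using $\|P_{m+1:p}\|\leq 1$ and the $\beta_3$-Lipschitzness of $\nabla^2\cL$; the second by $\zeta_P\beta_2\|\btheta-\Phi(\btheta)\|$ via Lemma~\ref{lemma: projection gap estimate}. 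For the second bullet, I would Taylor-expand $\nabla\cL$ around $\Phi(\btheta)$: since $\nabla\cL(\Phi(\btheta))=\bzero$,
\[
\nabla\cL(\btheta)=\nabla^2\cL(\Phi(\btheta))(\btheta-\Phi(\btheta))+\tfrac{1}{2}\nabla^3\cL(\tilde\btheta)[\btheta-\Phi(\btheta),\btheta-\Phi(\btheta)],
\]
for some $\tilde\btheta$ on the segment. Applying $P_{m+1:p}(\nabla^2\cL(\btheta))$ and using $P_{m+1:p}(\nabla^2\cL(\Phi(\btheta)))\nabla^2\cL(\Phi(\btheta))=\bzero$ again produces a difference $P_{m+1:p}(\nabla^2\cL(\btheta))-P_{m+1:p}(\nabla^2\cL(\Phi(\btheta)))$ acting on $\nabla^2\cL(\Phi(\btheta))(\btheta-\Phi(\btheta))$; Lemma~\ref{lemma: projection gap estimate} gives an extra factor of $\|\btheta-\Phi(\btheta)\|$, yielding an $\mathcal{O}(\|\btheta-\Phi(\btheta)\|^2)$ bound overall, while the Taylor remainder contributes $\mathcal{O}(\beta_3\|\btheta-\Phi(\btheta)\|^2)$.

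For the third bullet, the first inequality is just $\beta_2$-smoothness: $\|\nabla\cL(\btheta+\rho\bv)-\nabla\cL(\btheta)\|\leq\beta_2\rho\|\bv\|=\rho\beta_2$. For the second inequality, I would Taylor-expand around $\btheta$:
\[
\nabla\cL(\btheta+\rho\bv)=\nabla\cL(\btheta)+\rho\,\nabla^2\cL(\btheta)\bv+\tfrac{\rho^2}{2}\nabla^3\cL(\btheta')[\bv,\bv],
\]
apply $P_{m+1:p}(\nabla^2\cL(\btheta))$, and bound the three resulting terms by the second bullet, the first bullet (with the extra factor of $\rho$ from the $\bv$), and $\tfrac{\rho^2\beta_3}{2}$ respectively, using $\|\bv\|=1$.

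I do not expect any serious obstacle: every step is a mechanical combination of Taylor expansion, the local smoothness bounds from Lemma~\ref{lemma: local PL}, the projection-continuity estimate of Lemma~\ref{lemma: projection gap estimate}, and the two key vanishing identities at $\Phi(\btheta)$. The only mildly delicate point is keeping track of constants so that the $\mathcal{O}(\cdot)$ hidden constants depend only on $\beta_2,\beta_3,\zeta_P$ (and not on $\btheta$ or $\rho$), which follows because $\btheta,\btheta+\rho\bv\in\bbB(K;R)$ gives uniform control via Lemma~\ref{lemma: local PL}.
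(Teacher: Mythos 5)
Your proposal is correct and follows essentially the same route as the paper: you exploit the vanishing of $P_{m+1:p}(\nabla^2\cL(\Phi(\btheta)))\nabla^2\cL(\Phi(\btheta))$ and $\nabla\cL(\Phi(\btheta))$, Taylor-expand around $\Phi(\btheta)$ (second bullet) and around $\btheta$ (third bullet), and invoke the smoothness constants of Lemma~\ref{lemma: local PL} together with the projection continuity of Lemma~\ref{lemma: projection gap estimate}. The paper's proof is just a more compact bookkeeping of the same decomposition.
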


\begin{proof}[Proof of Lemma~\ref{lemma: near minima, projection estimate}]
\begin{align*}
    \norm{P_{m+1:p}(\nabla^2\cL(\btheta))\nabla^2\cL(\btheta)}\leq&\norm{P_{m+1:p}(\nabla^2\cL(\Phi(\btheta)))\nabla^2\cL(\Phi(\btheta))}+\zeta_P\beta_2\norm{\btheta-\Phi(\btheta)}+\beta_3\norm{\btheta-\Phi(\btheta)}
    \\=&0+\cO\left(\norm{\btheta-\Phi(\btheta)}\right);
\end{align*}
\begin{align*}
    &\norm{P_{m+1:p}(\nabla^2\cL(\btheta))\nabla\cL(\btheta)}
    \leq\norm{P_{m+1:p}(\nabla^2\cL(\btheta))\nabla^2\cL(\Phi(\btheta))(\Phi(\btheta)-\btheta)}+\cO\bracket{\norm{\btheta-\Phi(\btheta)}^2}
    \\\leq&\norm{P_{m+1:p}(\nabla^2\cL(\Phi(\btheta)))\nabla^2\cL(\Phi(\btheta))(\Phi(\btheta)-\btheta)}+\cO\bracket{\norm{\btheta-\Phi(\btheta)}^2}=\cO\bracket{\norm{\btheta-\Phi(\btheta)}^2};
\end{align*}
\begin{align*}
     \norm{\nabla\cL(\btheta+\rho\bv)}\leq
     \norm{\nabla\cL(\btheta)}+\rho\beta_2;
\end{align*}
\begin{align*}
    &\norm{P_{m+1:p}(\nabla^2\cL(\btheta))\nabla\cL(\btheta+\rho\bv)}
    \\\leq&\norm{P_{m+1:p}(\nabla^2\cL(\btheta))\nabla\cL(\btheta)}+\rho\norm{P_{m+1:p}(\nabla^2\cL(\btheta))\nabla^2\cL(\btheta)\bv}+\frac{\rho^2\beta_3}{2}
    \\\leq&
    \cO\left(\norm{\btheta-\Phi(\btheta)}^2\right)+\cO\left(\rho\norm{\btheta-\Phi(\btheta)}\right)+\frac{\rho^2\beta_3}{2}.
\end{align*}
\end{proof}

\subsection{Proof of Theorem~\ref{thm: Phase II, ave SAM-IRE}}

\subsubsection{Proof of Keeping Moving Near Minimizers for SAM}

We first give the proof for SAM about ``keeping moving near minimizers'', which provides important insights into the proof for SAM-IRE.

Recalling~\eqref{equ: SAM, average}, the update rule of average SAM is:
\begin{align*}
    \btheta_{t+1}=\btheta_t-\eta\nabla\cL\bracket{\btheta_t+\rho\frac{\bxi_t}{\norm{\bxi_t}}},\text{ where }\bxi_t\sim\cN(\bzero,I).
\end{align*}

Let the $\rho$ in SAM satisfy:
$$\rho=\cO(\sqrt{\eta}).$$

For simplicity, we denote 
\begin{align*}
    \bv_t:=\frac{\bxi_t}{\norm{\bxi_t}},\quad
    C_1:=\frac{4\beta_2^3}{\mu},\quad
    C_2:=\beta_2^3.
\end{align*}

Notice $\cL(\btheta_{T_{\rm I}})< C_1\eta\rho^2$, we have the following upper bound for the probability
\begin{align*}
    &\bbP\left(\exists\ t\in[T_{\rm I},T_{\rm I}+T_{\rm II}],\cL(\btheta_t)\geq2C_1\eta\rho^2\right)
    \\\leq&
    \sum_{t=T_{\rm I}}^{T_{\rm I}+T_{\rm II}-1}\bbP\left(\cL(\btheta_{t+1})\geq2C_1\eta\rho^2;\ \forall\ s\in[T_{\rm I},{t}],\cL(\btheta_s)<2C_1\eta\rho^2\right).
\end{align*}

For each term $t\in[T_{\rm I},T_{\rm I}+T_{\rm II}-1]$, it can be bounded by:
\begin{align*}
    &\bbP\left(\cL(\btheta_{t+1})\geq2C_1\eta\rho^2;\ \forall\ s\in[T_{\rm I},{t}],\cL(\btheta_s)<2C_1\eta\rho^2\right)  
    \\\leq&\bbP\left(\cL(\btheta_{t+1})\geq2C_1\eta\rho^2;\cL(\btheta_{t})<C_1\eta\rho^2\right)
    \\&+\sum_{s=T_{\rm I}}^{t-1}
    \bbP\left(\cL(\btheta_{t+1})\geq2C_1\eta\rho^2;\cL(\btheta_{s})<C_1\eta\rho^2;\ \forall\  \tau\in[s+1,{t}],C_1\eta\rho^2\leq\cL(\btheta_\tau)<2C_1\eta\rho^2\right)
    \\\leq&\bbP\left(\cL(\btheta_{t+1})\geq2C_1\eta\rho^2\Big|\cL(\btheta_{t})<C_1\eta\rho^2\right)
    \\&+\sum_{s=T_{\rm I}}^{t-1}
    \bbP\left(\cL(\btheta_{t+1})\geq2C_1\eta\rho^2;\ \forall\  \tau\in[s+1,{t}],C_1\eta\rho^2\leq\cL(\btheta_\tau)<2C_1\eta\rho^2\Big|\cL(\btheta_{s})<C_1\eta\rho^2\right).
\end{align*}

For simplicity, we denote 
\begin{gather*}
    \bbP_{t+1,t}:=\bbP\left(\cL(\btheta_{t+1})\geq2C_1\eta\rho^2\Big|\cL(\btheta_{t})<C_1\eta\rho^2\right),
    \quad\\
    \bbP_{t+1,s}:=\bbP\left(\cL(\btheta_{t+1})\geq2C_1\eta\rho^2;\ \forall\  \tau\in[s+1,{t}],C_1\eta\rho^2\leq\cL(\btheta_\tau)<2C_1\eta\rho^2\Big|\cL(\btheta_{s})<C_1\eta\rho^2\right),\ s\in[T_{\rm I},t-1].
\end{gather*}

\begin{itemize}[leftmargin=2em]
    \item \underline{Step I. Bounding $\bbP_{t+1,t}$.}

    From $\cL(\btheta_{t})\leq C_1\eta\rho^2$, we have $\norm{\btheta_t-\Phi(\btheta_t)}\leq\sqrt{\frac{2}{\mu}\cL(\btheta_t)}=\cO(\frac{\sqrt{\eta}\rho}{\mu})$, thus 
\begin{align*}
    \norm{\btheta_t+\rho\bv_t-\Phi(\btheta_t)}\leq\norm{\btheta_t-\Phi(\btheta_t)}+\rho=\cO(\sqrt{\eta}\rho)+\cO(\rho)<R,
\end{align*}
which means $\btheta_t+\rho\bv_t\in\bbB(K;R)$.
Furthermore,
\begin{align*}
    &\norm{\btheta_{t+1}-\Phi(\btheta_t)}
    \leq\norm{\btheta_{t}-\Phi(\btheta_t)}+\eta\norm{\nabla\cL\left(\btheta_t+\rho\bv_t\right)}
    \\\leq&\cO(\sqrt{\eta}\rho)+\eta\norm{\nabla\cL\left(\btheta_t+\rho\bv_t\right)}
    \leq\cO(\sqrt{\eta}\rho)+\eta\norm{\nabla\cL\left(\btheta_t\right)}+\beta_2\eta\rho
    \\\leq&\cO(\sqrt{\eta}\rho)+\eta\sqrt{\frac{2\beta_2^2}{\mu}\cL(\btheta_t)}+\beta_2\eta\rho
    \leq\cO(\sqrt{\eta}\rho)+\cO(\eta^{3/2}\rho)+\cO(\eta\rho)
    \leq R,
\end{align*}
which implies $\btheta_{t+1}\in\bbB(K;R)$. 
Consequently, we have the following quadratic upper bound: 
\begin{align*}
    &\cL(\btheta_{t+1})=\cL\bracket{\btheta_t-\eta\nabla\cL\bracket{\btheta_t+\rho\bv_t}}
    \\\leq&
    \cL(\btheta_t)-\eta\<\nabla\cL(\btheta_t),\nabla\cL\bracket{\btheta_t+\rho\bv_t}\>+\frac{\eta^2\beta_2}{2}\norm{\nabla\cL\bracket{\btheta_t+\rho\bv_t}}^2
    \\\leq&
    \cL(\btheta_t)-\eta\norm{\nabla\cL(\btheta_t)}^2-\eta\rho\<\nabla\cL(\btheta_t),\nabla^2\cL(\btheta_t)\bv_t\>
    +\frac{\eta\rho^2\beta_3}{2}\norm{\nabla\cL(\btheta)}+\frac{\eta^2\beta_2}{2}\left(\norm{\nabla\cL(\btheta_t)}+\rho\beta_2\right)^2
    \\\leq&
    \cL(\btheta_t)-\eta\norm{\nabla\cL(\btheta_t)}^2-\eta\rho\<\nabla\cL(\btheta_t),\nabla^2\cL(\btheta_t)\bv_t\>
    +\frac{\eta\rho^2\beta_3}{2}\norm{\nabla\cL(\btheta)}+{\eta^2\beta_2}\left(\norm{\nabla\cL(\btheta_t)}^2+\rho^2\beta_2^2\right)
    \\\leq&
    \cL(\btheta_t)+\eta\rho\norm{\nabla^2\cL(\btheta_t)\nabla\cL(\btheta_t)}+\frac{\eta\rho^2\beta_3}{2}\norm{\nabla\cL(\btheta_t)}+\eta^2\beta_2\rho^2\beta_2^2
    \\\leq&
    \cL(\btheta_t)+\left(\eta\rho\beta_2+\frac{\eta\rho^2\beta_3}{2}\right)\norm{\nabla\cL(\btheta_t)}+\cO(\eta^2\rho^2)
    \\\leq&
    \cL(\btheta_t)+\left(\eta\rho\beta_2+\frac{\eta\rho^2\beta_3}{2}\right)\sqrt{\frac{2\beta_2^2}{\mu}}\sqrt{\cL(\btheta_t)}+\cO(\eta^2\rho^2)
    \\\leq&C_1\eta\rho^2+\cO(\eta^{3/2}\rho^2)+\cO(\eta^2\rho^2)<3C_1\eta\rho^2/2.
\end{align*}
Thus, we obtain
\begin{align*}
    \bbP_{t+1,t}=\bbP\left(\cL(\btheta_{t+1})\geq2C_1\eta\rho^2\Big|\cL(\btheta_{t})<C_1\eta\rho^2\right)=0.
\end{align*}

    \item \underline{Step II. Bounding $\bbP_{t+1,s}$ for $s\in[T_{\rm I},{t-1}]$.}

    We prove this step under the condition $\cL(\btheta_{s})<C_1\eta\rho^2$. Define a process $\{X_{\tau}\}_{\tau=s}^{t+1}$: $X_{s+1}=\cL(\btheta_{s+1})$,
    \begin{align*}
        X_{\tau+1}=\begin{cases}
            \cL(\btheta_{\tau+1}),\quad\text{ if } C_1\eta\rho^2\leq X_\tau=\cL(\btheta_\tau)\leq 2C_1\eta\rho^2
            \\
            X_{\tau}-C_2\eta^2\rho^2,\quad\text{ else}
        \end{cases}.
    \end{align*}

    It is clear that 
    \begin{align*}
        \bbP_{t+1,s}\leq\bbP\left(X_{t+1}\geq2C_1\eta\rho^2\right).
    \end{align*}

    Then our key step is to prove the following two claims about the process $\{X_{\tau}\}$.
    
    \begin{itemize}[leftmargin=2em]
        \item \underline{Claim I. $X_{\tau}-C_2\tau\eta\rho^2$ is a super-martingale.}
        From the definition of $X_{\tau}$, we only need to prove that if $C_1\eta\rho^2\leq\cL(\btheta_\tau)\leq 2C_1\eta\rho^2$, then $\bbE \left[\cL(\btheta_{\tau+1})\right]\leq \cL(\btheta_{\tau})-C_2\eta^2\rho^2$. 

        If $C_1\eta\rho^2\leq\cL(\btheta_\tau)\leq 2C_1\eta\rho^2$, similar to Step I, it is clear that $\btheta_{\tau+1}\in\bbB(K;R)$. Applying the quadratic upper bound, it holds that:
        \begin{align*}
        &\cL(\btheta_{\tau+1})=\cL\bracket{\btheta_\tau-\eta\nabla\cL\bracket{\btheta_\tau+\rho\bv_\tau}}
        \\\leq&
        \cL(\btheta_\tau)-\eta\<\nabla\cL(\btheta_\tau),\nabla\cL\bracket{\btheta_\tau+\rho\bv_\tau}\>+\frac{\eta^2\beta_2}{2}\norm{\nabla\cL\bracket{\btheta_\tau+\rho\bv_\tau}}^2
        \\\leq&
        \cL(\btheta_\tau)-\eta\norm{\nabla\cL(\btheta_\tau)}^2-\eta\rho\<\nabla\cL(\btheta_\tau),\nabla^2\cL(\btheta_\tau)\bv_t\>
        \\&\quad
        \quad\quad+\frac{\eta\rho^2\beta_3}{2}\norm{\nabla\cL(\btheta_\tau)}+\eta^2\beta_2\left(\norm{\nabla\cL(\btheta_\tau)}^2+\rho^2\beta_2^2\right).
        \end{align*}

        Taking the expectation, we have:
         \begin{align*}
            \bbE\left[\cL(\btheta_{\tau+1})\right]\leq
        &\cL(\btheta_\tau)-\eta\norm{\nabla\cL(\btheta_\tau)}^2+\frac{\eta\rho^2\beta_3}{2}\norm{\nabla\cL(\btheta_\tau)}+\eta^2\beta_2\left(\norm{\nabla\cL(\btheta_\tau)}^2+\rho^2\beta_2^2\right)\\\leq&
        \cL(\btheta_\tau)-\frac{3}{4}\eta\norm{\nabla\cL(\btheta_\tau)}^2+\frac{\eta\rho^2\beta_3}{2}\norm{\nabla\cL(\btheta_\tau)}+\eta^2\rho^2\beta_2^3
        \\\leq&
        \cL(\btheta_\tau)-\eta\norm{\nabla\cL(\btheta_\tau)}\left(\frac{3}{4}\norm{\nabla\cL(\btheta_\tau)}-\frac{\rho^2\beta_3}{2}\right)+\eta^2\rho^2\beta_2^3.
        \end{align*}

    From $\cL(\btheta_{\tau})\geq C_1\eta\rho^2$ and $\rho=\cO(\sqrt{\eta})$, it holds that 
    \begin{align*}
        \norm{\nabla\cL(\btheta_\tau)}\geq\sqrt{2\mu\cL(\btheta_t)}\geq\sqrt{2C_1\mu}\sqrt{\eta}\rho\geq4\beta_3\rho^2.
    \end{align*}

    Therefore, we have:
    \begin{align*}
        \bbE\left[\cL(\btheta_{\tau+1})\right]
        \leq&
        \cL(\btheta_\tau)-\frac{5}{8}\eta\norm{\nabla\cL(\btheta_\tau)}^2+\eta^2\rho^2\beta_2^3
        \\\leq&\cL(\btheta_{\tau})-\frac{10}{8}\eta\mu\cL(\btheta_\tau)+\eta^2\rho^2\beta_2^3
        \leq\cL(\btheta_{\tau})-\frac{10}{8}C_1\mu\eta^2\rho^2+\eta^2\rho^2\beta_2^3
        \\\leq&\cL(\btheta_{\tau})-\beta_2^3\eta^2\rho^2=\cL(\btheta_{\tau})-C_2\eta^2\rho^2.
    \end{align*}

    \item \underline{Claim II. $X_{\tau+1}-X_\tau+C_2\eta^2\rho^2$ is $\cO(\eta^2\rho^2+\eta^{3/2}\rho^2/p^{1/2})$-sub-Gaussian.} 
    From the definition of $X_{\tau}$, we only need to prove for the case $C_1\eta\rho^2\leq\cL(\btheta_\tau)\leq 2C_1\eta\rho^2$.

    If $C_1\eta\rho^2\leq\cL(\btheta_\tau)\leq 2C_1\eta\rho^2$, then $X_{\tau}=\cL(\btheta_\tau)$ and $X_{\tau+1}=\cL(\btheta_{\tau+1})$. Similar to Step I, it is clear that $\btheta_{\tau+1}\in\bbB(K;R)$. 

    Applying the smoothness, we have: 
        \begin{align*}
        &\cL(\btheta_{\tau+1})=\cL\bracket{\btheta_\tau-\eta\nabla\cL\bracket{\btheta_\tau+\rho\bv_\tau}}
        \\=&
        \cL(\btheta_\tau)-\eta\<\nabla\cL(\btheta_\tau),\nabla\cL\bracket{\btheta_\tau+\rho\bv_\tau}\>+\cO\left(\eta^2\norm{\nabla\cL\bracket{\btheta_\tau+\rho\bv_\tau}}^2\right)
        \\=&
        \cL(\btheta_\tau)-\eta\norm{\nabla\cL(\btheta_\tau)}^2-\eta\rho\<\nabla\cL(\btheta_\tau),\nabla^2\cL(\btheta_\tau)\bv_t\>
        \\&\quad
        \quad\quad+\cO\left(\eta\rho^2\norm{\nabla\cL(\btheta_\tau)}\right)+\cO\left(\eta^2\norm{\nabla\cL\bracket{\btheta_\tau}}^2+\eta^2\rho^2\right)
        \\=&
        \cL(\btheta_\tau)+\cO\left(\eta^2\rho^2\right)-\eta\rho\<\nabla\cL(\btheta_\tau),\nabla^2\cL(\btheta_\tau)\bv_t\>+\cO\left(\eta\rho^2\sqrt{\eta}\rho\right)+\cO\left(\eta^3\rho^2+\eta^2\rho^2\right)
        \\=&
        \cL(\btheta_\tau)-\eta\rho\<\nabla\cL(\btheta_\tau),\nabla^2\cL(\btheta_\tau)\bv_t\>+\cO(\eta^2\rho^2),
        \end{align*}
        which implies:
        \begin{align*}
            &\norm{X_{\tau+1}-X_{\tau}-C_2\eta^2\rho^2}_{\psi_2}\leq\norm{\cL(\btheta_{\tau+1})-\cL(\btheta_\tau)}_{\psi_2}+\norm{C_2\eta^2\rho^2}_{\psi_2}
            \\\leq&\norm{\eta\rho\<\nabla\cL(\btheta_\tau),\nabla^2\cL(\btheta_\tau)\bv_t\>}_{\psi_2}+\cO(\eta^2\rho^2)
            \\\leq&\eta\rho\norm{\nabla^2\cL(\btheta_\tau)\nabla\cL(\btheta_\tau)}\norm{\<\frac{\nabla^2\cL(\btheta_\tau)\nabla\cL(\btheta_\tau)}{\norm{\nabla^2\cL(\btheta_\tau)\nabla\cL(\btheta_\tau)}},\bv_t\>}_{\psi_2}+\cO(\eta^2\rho^2)
            \\\leq&\cO\left(\eta^{3/2}\rho^{2}\norm{\<\frac{\nabla^2\cL(\btheta_\tau)\nabla\cL(\btheta_\tau)}{\norm{\nabla^2\cL(\btheta_\tau)\nabla\cL(\btheta_\tau)}},\bv_t\>}_{\psi_2}\right)+\cO(\eta^2\rho^2)
            \\\overset{\text{Lemma~\ref{lemma: gaussian projection prob bound}}}{\leq}&\cO\left(\eta^{3/2}\rho^{2}/\sqrt{p}\right)+\cO(\eta^2\rho^2).
        \end{align*}
    
    \end{itemize}

    With the preparation of Claim I and Claim II, we can use the Azuma-Hoeffding inequality (Lemma~\ref{lemma: Azuma} (ii)): for any $Q>0$, it holds that
    \begin{align*}
        \bbP\left(X_{t+1}-X_{s+1}+(t-s)C_2\eta^2\rho^2>Q\right)\leq2\exp\bracket{-\frac{Q^2}{2(t-s)\left(\cO(\eta^{3/2}\rho^2/p^{1/2}+\eta^2\rho^2)\right)^2}}.
    \end{align*}
    As proved in Claim I, $X_{s+1}=\cL(\btheta_{s+1})\leq\frac{3}{2}C_1\eta\rho^2$ due to $\cL(\btheta_s)\leq C_1\eta\rho^2$. Therefore, by choosing $Q=(t-s)C_2\eta^2\rho^2-\frac{3}{2}C_1\eta\rho^2+2C_1\eta\rho^2=(t-s)C_2\eta^2\rho^2+\frac{1}{2}C_1\eta\rho^2$, we have
    \begin{align*}
    &\bbP_{t+1,s}\leq\bbP\left(X_{t+1}\geq2C_1\eta\rho^2\right)
    \\\leq&
    \bbP\left(X_{t+1}-X_{s+1}+(t-s)C_2\eta^2\rho^2>(t-s)C_2\eta^2\rho^2+\frac{1}{2}C_1\eta\rho^2\right)
    \\\leq&
    2\exp\bracket{-\frac{\bracket{(t-s)C_2\eta^2\rho^2+\frac{1}{2}C_1\eta\rho^2}^2}{2(t-s)\left(\cO(\eta^{3/2}\rho^2/p^{1/2}+\eta^2\rho^2)\right)^2}}
    \\\leq&
    2\exp\bracket{-\frac{4(t-s)C_2\eta^2\rho^2\cdot\frac{1}{2}C_1\eta\rho^2}{4(t-s)\left(\cO(\eta^{3}\rho^4/p+\eta^4\rho^4)\right)}}\leq2\exp\left(-\Omega\left(\frac{1}{\eta+p^{-1}}\right)\right).
    \end{align*}

\end{itemize}

Therefore, we obtain the union bound:
\begin{align*}
    &\bbP\left(\exists\ t\in[T_{\rm I},T_{\rm I}+T_{\rm II}],\cL(\btheta_t)\geq2C_1\eta\rho^2\right)
    \leq\sum_{t=T_{I}}^{T_{\rm I}+T_{\rm II}-1}\left(\bbP_{t+1,t}+\sum_{s=T_{\rm I}}^{t-1}\bbP_{t+1,s}\right)
    \\\leq&
    \sum_{t=T_{I}}^{T_{\rm I}+T_{\rm II}-1}\sum_{s=T_{\rm I}}^{t-1}\bbP_{t+1,s}\leq T_{\rm II}^2\exp\left(-\Omega\left(\frac{1}{\eta+p^{-1}}\right)\right).
\end{align*}

Hence, with probability at least $1-T_{\rm II}^2\exp\left(-\Omega\left(\frac{1}{\eta+p^{-1}}\right)\right)$, for any $t\in[T_{\rm I},T_{\rm I}+T_{\rm II}]$,
\begin{align*}
    \norm{\btheta_t-\Phi(\btheta_t)}\leq\sqrt{\frac{2}{\mu}\cL(\btheta_t)}\leq2\sqrt{\frac{C_1}{\mu}}\sqrt{\eta}\rho=\frac{4\beta_2^{3/2}}{\mu}\sqrt{\eta}\rho=\cO(\sqrt{\eta}\rho).
\end{align*}

\subsubsection{Proof of Moving Near Minimizers for SAM-IRE}

We prove ``keeping moving near minimizers'' for SAM-IRE. The proof outline for SAM-IRE is the same as SAM. However, the key non-trivial difference is that the IRE term will hardly cause loss instability since IRE only perturbs the parameters in the flat directions.

Under the conditions in Theorem~\ref{thm: Phase II, ave SAM-IRE}, the update rule of IRE on average SAM is:
\begin{align*}
    \btheta_{t+1}=\btheta_t-\eta\nabla\cL\bracket{\btheta_t+\rho\frac{\bxi_t}{\norm{\bxi_t}}}-\eta\kappa P_{m+1:p}(\nabla^2\cL(\btheta_t))\nabla\cL\bracket{\btheta_t+\rho\frac{\bxi_t}{\norm{\bxi_t}}},\text{ where }\bxi_t\sim\cN(\bzero,I).
\end{align*}

Let the $\rho$ in SAM and the $\kappa$ in IRE satisfy:
\begin{equation}
    \rho=\cO(\sqrt{\eta}),\quad\kappa\leq\frac{1}{\rho}.
\end{equation}

For simplicity, we denote 
\begin{align*}
    \bv_t:=\frac{\bxi_t}{\norm{\bxi_t}},\quad P(\btheta_t):=P_{m+1:p}(\nabla^2\cL(\btheta_t)),\quad C_1=\frac{4\beta_2^3}{\mu}\vee\frac{4\beta_2\beta_3^2}{\mu},\quad
    C_2=\beta_2^3
\end{align*}

Following the proof for SAM, we denote
\begin{gather*}
    \bbP_{t+1,t}:=\bbP\left(\cL(\btheta_{t+1})\geq2C_1\eta\rho^2\Big|\cL(\btheta_{t})<C_1\eta\rho^2\right),
\end{gather*}
\begin{align*}
    \bbP_{t+1,s}:=\bbP&\Big(\cL(\btheta_{t+1})\geq2C_1\eta\rho^2;
    \\&\ \forall\  \tau\in[s+1,{t}],C_1\eta\rho^2\leq\cL(\btheta_\tau)<2C_1\eta\rho^2\Big|\cL(\btheta_{s})<C_1\eta\rho^2\Big),\ s\in[T_{\rm I},t-1].
\end{align*}

and it holds that
\begin{align*}
    \bbP\left(\exists\ t\in[T_{\rm I},T_{\rm I}+T_{\rm II}],\cL(\btheta_t)\geq2C_1\eta\rho^2\right)\leq\sum_{t=T_{\rm I}}^{T_{\rm I}+T_{\rm II}-1}\left(\bbP_{t+1,t}+\sum_{s=T_{\rm I}}^{t-1}\bbP_{t+1,s}\right).
\end{align*}

\begin{itemize}[leftmargin=2em]
    \item \underline{Step I. Bounding $\bbP_{t+1,t}$.}

    From $\cL(\btheta_{t})\leq C_1\eta\rho^2$, we have $\norm{\btheta_t-\Phi(\btheta_t)}\leq\sqrt{\frac{2}{\mu}\cL(\btheta_t)}=\cO(\sqrt{\eta}\rho)$, thus 
\begin{align*}
    \norm{\btheta_t+\rho\bv_t-\Phi(\btheta_t)}\leq\norm{\btheta_t-\Phi(\btheta_t)}+\rho=\cO(\sqrt{\eta}\rho)+\cO(\rho)<R,
\end{align*}
which means $\btheta_t+\rho\bv_t\in\bbB(K;R)$.
Furthermore,
\begin{align*}
    &\norm{\btheta_{t+1}-\Phi(\btheta_t)}
    \\\leq&\norm{\btheta_{t}-\Phi(\btheta_t)}+\eta\norm{\nabla\cL(\btheta_t+\rho\bv_t)}+\eta\kappa\norm{P(\btheta_t)\nabla\cL(\btheta_t+\rho\bv_t)}
    \\\overset{\text{Lemma~\ref{lemma: near minima, projection estimate}}}{\leq}&\cO(\sqrt{\eta}\rho)+\eta\norm{\nabla\cL\left(\btheta_t\right)}+\beta_2\eta\rho+\cO\left(\eta\kappa\rho^2\right)
    \\\leq&
    \cO(\sqrt{\eta}\rho)+\cO(\eta^{3/2}\rho)+\cO\left(\eta\rho\right)+\cO\left(\eta\rho\right)
    \\\leq&\cO(\sqrt{\eta}\rho)+\cO(\eta^{3/2}\rho)+\cO\left(\eta\rho\right)
    \leq R,
\end{align*}
which implies $\btheta_{t+1}\in\bbB(K;R)$. 
Consequently, we have the following quadratic upper bound: 
\begin{align*}
    &\cL(\btheta_{t+1})=\cL\bracket{\btheta_t-\eta\nabla\cL\bracket{\btheta_t+\rho\bv_t}-\eta\kappa P(\btheta_t)\nabla\cL\bracket{\btheta_t+\rho\bv_t}}
    \\\leq&
    \cL(\btheta_t)-\eta\<\nabla\cL(\btheta_t),\nabla\cL\bracket{\btheta_t+\rho\bv_t}+\kappa P(\btheta_t)\nabla\cL\bracket{\btheta_t+\rho\bv_t}\>
    \\&\quad\quad+\frac{\eta^2\beta_2}{2}\norm{\nabla\cL\bracket{\btheta_t+\rho\bv_t}+\kappa P(\btheta_t)\nabla\cL\bracket{\btheta_t+\rho\bv_t}}^2
    \\\leq&
    \cL(\btheta_t)-\eta\<\nabla\cL(\btheta_t),\nabla\cL\bracket{\btheta_t+\rho\bv_t}\>-\kappa\eta\<\nabla\cL(\btheta_t), P(\btheta_t)\nabla\cL\bracket{\btheta_t+\rho\bv_t}\>
    \\&\quad\quad+\eta^2\beta_2\left(\norm{\nabla\cL\bracket{\btheta_t+\rho\bv_t}}^2+\kappa^2\norm{ P(\btheta_t)\nabla\cL\bracket{\btheta_t+\rho\bv_t}}^2\right)
    \\\leq&
    \cL(\btheta_t)-\eta\norm{\nabla\cL(\btheta_t)}^2+{\eta\rho\beta_2}\norm{\nabla\cL(\btheta_t)}-\kappa\eta\<\nabla\cL(\btheta_t), P(\btheta_t)\nabla\cL(\btheta_t)\>
    \\&\quad\quad-\kappa\eta\rho\<\nabla\cL(\btheta_t), P(\btheta_t)\nabla^2\cL(\btheta_t)\bv_t\>+\kappa\eta\frac{\beta_3\rho^2}{2}\norm{ P(\btheta_t)\nabla\cL(\btheta_t)}
    \\
    &\quad\quad+\eta^2\beta_2\left(\left(\norm{\nabla\cL(\btheta_t)}+\rho\beta_2\right)^2+\kappa^2\norm{ P(\btheta_t)\nabla\cL\bracket{\btheta_t+\rho\bv_t}}^2\right)
    \\\leq&\cL(\btheta_t)+\eta\rho\beta_2\norm{\nabla\cL(\btheta_t)}+\kappa\eta\rho\norm{ P(\btheta_t)\nabla^2\cL(\btheta_t)\nabla\cL(\btheta_t)}
    \\&\quad\quad+\frac{\kappa\eta\rho^2\beta_3}{2}\norm{ P(\btheta_t)\nabla\cL(\btheta_t)}+\eta^2\beta_2\left(\left(\norm{\nabla\cL(\btheta_t)}+\rho\beta_2\right)^2+\kappa^2\norm{ P(\btheta_t)\nabla\cL\bracket{\btheta_t+\rho\bv_t}}^2\right)
    \\\overset{\text{Lemma~\ref{lemma: near minima, projection estimate}}}{\leq}&
    \cL(\btheta_t)+\cO(\eta\rho\norm{\nabla\cL(\btheta_t)})+\cO(\kappa\eta\rho\norm{\btheta_t-\Phi(\btheta_t)}\norm{\nabla\cL(\btheta_t)})
    \\&\quad\quad+\cO\left(\kappa\eta\rho^2\norm{\btheta_t-\Phi(\btheta_t)}^2\right)+\cO\left(\eta^2\rho^2\right)+\cO\left(\eta^2\kappa^2\rho^4\right)
    \\\leq&
    C_1\eta\rho^2+o(\eta\rho^2)\leq\frac{3}{2}C_1\eta\rho^2.
\end{align*}
Thus, we obtain
\begin{align*}
    \bbP_{t+1,t}=\bbP\left(\cL(\btheta_{t+1})\geq2C_1\eta\rho^2\Big|\cL(\btheta_{t})<C_1\eta\rho^2\right)=0.
\end{align*}

\item \underline{Step II. Bounding $\bbP_{t+1,s}$ for $s\in[T_{\rm I},{t-1}]$.}

    We prove this step under the condition $\cL(\btheta_{s})<C_1\eta\rho^2$. Define a process $\{X_{\tau}\}_{\tau=s}^{t+1}$: $X_{s+1}=\cL(\btheta_{s+1})$,
    \begin{align*}
        X_{\tau+1}=\begin{cases}
            \cL(\btheta_{\tau+1}),\quad\text{ if } C_1\eta\rho^2\leq X_\tau=\cL(\btheta_\tau)\leq 2C_1\eta\rho^2
            \\
            X_{\tau}-C_2\eta^2\rho^2,\quad\text{ else}
        \end{cases}.
    \end{align*}

    It is clear that 
    \begin{align*}
        \bbP_{t+1,s}\leq\bbP\left(X_{t+1}\geq2C_1\eta\rho^2\right).
    \end{align*}

    Then our key step is to prove the following two claims about the process $\{X_{\tau}\}$.
    
    \begin{itemize}[leftmargin=2em]
        \item \underline{Claim I. $X_{\tau}-C_2\tau\eta\rho^2$ is a super-martingale.}
        From the definition of $X_{\tau}$, we only need to prove that if $C_1\eta\rho^2\leq\cL(\btheta_\tau)\leq 2C_1\eta\rho^2$, then $\bbE [\cL(\btheta_{\tau+1})]\leq \cL(\btheta_{\tau})-C_2\eta^2\rho^2$. 

        If $C_1\eta\rho^2\leq\cL(\btheta_\tau)\leq 2C_1\eta\rho^2$, similar to Step I, it is clear that $\btheta_{\tau+1}\in\bbB(K;R)$. Applying the quadratic upper bound, it holds that:
    \begin{align*}
    &\cL(\btheta_{\tau+1})=\cL\bracket{\btheta_\tau-\eta\nabla\cL\bracket{\btheta_\tau+\rho\bv_\tau}-\eta\kappa P(\btheta_\tau)\nabla\cL\bracket{\btheta_\tau+\rho\bv_\tau}}
    \\\leq&
    \cL(\btheta_\tau)-\eta\<\nabla\cL(\btheta_\tau),\nabla\cL\bracket{\btheta_\tau+\rho\bv_\tau}+\kappa P(\btheta_t)\nabla\cL\bracket{\btheta_\tau+\rho\bv_\tau}\>
    \\&\quad+\frac{\eta^2\beta_2}{2}\norm{\nabla\cL\bracket{\btheta_\tau+\rho\bv_\tau}+\kappa P(\btheta_\tau)\nabla\cL\bracket{\btheta_\tau+\rho\bv_\tau}}^2
    \\\leq&
    \cL(\btheta_\tau)-\eta\<\nabla\cL(\btheta_\tau),\nabla\cL\bracket{\btheta_\tau+\rho\bv_\tau}\>-\kappa\eta\<\nabla\cL(\btheta_\tau), P(\btheta_t)\nabla\cL\bracket{\btheta_\tau+\rho\bv_\tau}\>
    \\&\quad+\eta^2\beta_2\left(\norm{\nabla\cL\bracket{\btheta_\tau+\rho\bv_\tau}}^2+\kappa^2\norm{ P(\btheta_\tau)\nabla\cL\bracket{\btheta_\tau+\rho\bv_\tau}}^2\right)
    \\\leq&
    \cL(\btheta_\tau)-\eta\norm{\nabla\cL(\btheta_\tau)}^2-\eta\rho\<\nabla\cL(\btheta_\tau),\nabla^2\cL(\btheta_t)\bv_\tau\>+\frac{\eta\rho^2\beta_3}{2}\norm{\nabla\cL(\btheta_\tau)}
    \\&\quad-\kappa\eta\<\nabla\cL(\btheta_\tau), P(\btheta_\tau)\nabla\cL(\btheta_\tau)\>-\kappa\eta\rho\<\nabla\cL(\btheta_\tau), P(\btheta_\tau)\nabla^2\cL(\btheta_\tau)\bv_\tau\>+\kappa\eta\frac{\beta_3\rho^2}{2}\norm{ P(\btheta_\tau)\nabla\cL(\btheta_\tau)}
    \\
    &\quad+\eta^2\beta_2\left(\left(\norm{\nabla\cL(\btheta_\tau)}+\rho\beta_2\right)^2+\kappa^2\norm{ P(\btheta_\tau)\nabla\cL\bracket{\btheta_\tau+\rho\bv_\tau}}^2\right).
        \end{align*}

        Taking the expectation, we have:
        \begin{align*}
        &\bbE[\cL(\btheta_{\tau+1})]
        \\\leq&
        \cL(\btheta_\tau)-\eta\norm{\nabla\cL(\btheta_\tau)}^2+\frac{\eta\rho^2\beta_3}{2}\norm{\nabla\cL(\btheta_\tau)}-\kappa\eta\<\nabla\cL(\btheta_\tau), P(\btheta_\tau)\nabla\cL(\btheta_\tau)\>
        \\&+\kappa\eta\frac{\beta_3\rho^2}{2}\norm{ P(\btheta_\tau)\nabla\cL(\btheta_\tau)}+\eta^2\beta_2\left(\left(\norm{\nabla\cL(\btheta_\tau)}+\rho\beta_2\right)^2+\kappa^2\norm{ P(\btheta_\tau)\nabla\cL\bracket{\btheta_\tau+\rho\bv_\tau}}^2\right)
        \\\leq&
        \cL(\btheta_\tau)-\eta\norm{\nabla\cL(\btheta_\tau)}^2+\frac{\eta\rho^2\beta_3}{2}\norm{\nabla\cL(\btheta_\tau)}+\kappa\eta\frac{\beta_3\rho^2}{2}\norm{ P(\btheta_\tau)\nabla\cL(\btheta_\tau)}
        \\&+2\eta^2\beta_2\norm{\nabla\cL(\btheta_\tau)}^2+2\beta_2^3\eta^2\rho^2+\beta_2\eta^2\kappa^2\norm{ P(\btheta_\tau)\nabla\cL\bracket{\btheta_\tau+\rho\bv_\tau}}^2
        \\\overset{\text{Lemma~\ref{lemma: near minima, projection estimate}}}{\leq}& \cL(\btheta_\tau)-\frac{3\eta}{4}\norm{\nabla\cL(\btheta_\tau)}^2+\frac{\eta\rho^2\beta_3}{2}\norm{\nabla\cL(\btheta_\tau)}+\cO\bracket{\kappa\eta\rho^2\cdot\eta\rho^2}
        \\&+2\beta_2^3\eta^2\rho^2+\beta_2\eta^2\kappa^2
        \left(\cO(\sqrt{\eta}\rho^2)+\frac{\beta_3}{2}\rho
        ^2\right)^2
        \\\leq& \cL(\btheta_\tau)-\frac{3\eta}{4}\norm{\nabla\cL(\btheta_\tau)}^2+\frac{\eta\rho^2\beta_3}{2}\norm{\nabla\cL(\btheta_\tau)}+2\beta_2^3\eta^2\rho^2+\beta_2\beta_3^2\eta^2\kappa^2\rho^4+o(\eta^2\rho^2).
        \end{align*}

    From $C_1\eta\rho^2 \leq\cL(\btheta_{\tau})\leq 2C_1\eta\rho^2$ and $\rho=\cO(\sqrt{\eta})$, it holds that 
    \begin{gather*}
        \norm{\nabla\cL(\btheta_\tau)}\geq\sqrt{2\mu\cL(\btheta_\tau)}\geq\sqrt{2C_1\mu}\sqrt{\eta}\rho\geq4\beta_3\rho^2.
    \end{gather*}
    Moreover, recall $\kappa\leq 1/\rho$. Therefore, we have the upper bound:
    \begin{align*}
        &\bbE[\cL(\btheta_{\tau+1})]
        \\\leq&
         \cL(\btheta_\tau)-\frac{3\eta}{4}\norm{\nabla\cL(\btheta_\tau)}^2+\frac{\eta\rho^2\beta_3}{2}\norm{\nabla\cL(\btheta_\tau)}+2\beta_2^3\eta^2\rho^2+\beta_2\beta_3^2\eta^2\kappa^2\rho^4+o(\eta^2\rho^2)
        \\\leq&
        \cL(\btheta_\tau)-\frac{5\eta}{8}\norm{\nabla\cL(\btheta_\tau)}^2+2\beta_2^3\eta^2\rho^2+\beta_2\beta_3^2\eta^2\kappa^2\rho^4+o(\eta^2\rho^2)
        \\\leq&
        \cL(\btheta_\tau)-\frac{10}{8}\eta\mu\cL(\btheta_\tau)+2\beta_2^3\eta^2\rho^2+\beta_2\beta_3^2\eta^2\rho^2+o(\eta^2\rho^2).
        \\\leq&
        \cL(\btheta_\tau)-\frac{10}{2}\left(\frac{\beta_2^3}{\mu}\vee\frac{\beta_2\beta_3^2}{\mu}\right)\eta^2\rho^2+2\beta_2^3\eta^2\rho^2+\beta_2\beta_3^2\eta^2\rho^2+o(\eta^2\rho^2)
        \\\leq&
        \cL(\btheta_\tau)-\beta_2^3\eta^2\rho^2=\cL(\btheta_\tau)-C_2\eta^2\rho^2.
    \end{align*}

\item \underline{Claim II. $X_{\tau+1}-X_\tau+C_2\eta^2\rho^2$ is $\cO(\eta^2\rho^2+\eta^{3/2}\rho^2/p^{1/2})$-sub-Gaussian.} 
    From the definition of $X_{\tau}$, we only need to prove for the case $C_1\eta\rho^2\leq\cL(\btheta_\tau)\leq 2C_1\eta\rho^2$.

    If $C_1\eta\rho^2\leq\cL(\btheta_\tau)\leq 2C_1\eta\rho^2$, then $X_{\tau}=\cL(\btheta_\tau)$ and $X_{\tau+1}=\cL(\btheta_{\tau+1})$. Similar to Step I, it is clear that $\btheta_{\tau+1}\in\bbB(K;R)$. 

    Applying the smoothness, we have: 
        \begin{align*}
        &\cL(\btheta_{\tau+1})=\cL\bracket{\btheta_\tau-\eta\nabla\cL\bracket{\btheta_\tau+\rho\bv_\tau}-\eta\kappa P(\btheta_\tau)\nabla\cL\bracket{\btheta_\tau+\rho\bv_\tau}}
         \\=&
        \cL(\btheta_\tau)-\eta\<\nabla\cL(\btheta_\tau),\nabla\cL\bracket{\btheta_\tau+\rho\bv_\tau}+\kappa P(\btheta_t)\nabla\cL\bracket{\btheta_\tau+\rho\bv_\tau}\>
        \\&\quad+\left(\eta^2\norm{\nabla\cL\bracket{\btheta_\tau+\rho\bv_\tau}+\kappa P(\btheta_\tau)\nabla\cL\bracket{\btheta_\tau+\rho\bv_\tau}}^2\right)
        \\=&
        \cL(\btheta_\tau)-\eta\norm{\nabla\cL(\btheta_\tau)}^2-\eta\rho\<\nabla\cL(\btheta_\tau),\nabla^2\cL(\btheta_\tau)\bv_\tau\>+\cO\left(\eta\rho^2\norm{\nabla\cL(\btheta_{\tau})}\right)
        \\&\quad-\eta\kappa\<\nabla\cL(\btheta_\tau), P(\btheta_\tau)\nabla\cL(\btheta_\tau)\>-\eta\kappa\rho\<\nabla\cL(\btheta_\tau), P(\btheta_\tau)\nabla^2\cL(\btheta_\tau)\bv_t\>+\cO\left(\eta\kappa\rho^2\norm{ P(\btheta_\tau)\nabla\cL(\btheta_\tau)}\right)
        \\&\quad+\left(\eta^2\norm{\nabla\cL\bracket{\btheta_\tau+\rho\bv_\tau}+\kappa P(\btheta_\tau)\nabla\cL\bracket{\btheta_\tau+\rho\bv_\tau}}^2\right).
        \end{align*}
       In the same way as the proof of Claim I, it holds that
       \begin{gather*}
           \eta\norm{\nabla\cL(\btheta_\tau)}^2=\cO(\eta^2\rho^2),\quad\eta\rho^2\norm{\nabla\cL(\btheta_\tau)}=\cO(\eta^2\rho^2),\quad
           \eta\kappa\rho^2\norm{ P(\btheta_\tau)\nabla\cL(\btheta_\tau)}=\cO\left(\eta^2\rho^3\right),
           \\
           \eta^2\norm{\nabla\cL\bracket{\btheta_\tau+\rho\bv_\tau}+\kappa P(\btheta_\tau)\nabla\cL\bracket{\btheta_\tau+\rho\bv_\tau}}^2=\cO\left(\eta^2\rho^2\right)
       \end{gather*}
        Thus, 
        \begin{align*}
            &\cL(\btheta_{\tau+1})-\cL(\btheta_{\tau})
            \\=&-\eta\rho\<\nabla\cL(\btheta_\tau),\nabla^2\cL(\btheta_\tau)\bv_t\>-\eta\kappa\rho\<\nabla\cL(\btheta_\tau), P(\btheta_\tau)\nabla^2\cL(\btheta_\tau)\bv_t\>+\cO(\eta^2\rho^2),
        \end{align*}
        
        which implies:
        \begin{align*}
            &\norm{X_{\tau+1}-X_{\tau}-C_2\eta^2\rho^2}_{\psi_2}\leq\norm{\cL(\btheta_{\tau+1})-\cL(\btheta_\tau)}_{\psi_2}+\norm{C_2\eta^2\rho^2}_{\psi_2}
            \\\leq&\eta\rho\norm{\<\nabla\cL(\btheta_\tau),\nabla^2\cL(\btheta_\tau)\bv_t\>}_{\psi_2}+\eta\kappa\rho\norm{\<\nabla\cL(\btheta_\tau), P(\btheta_\tau)\nabla^2\cL(\btheta_\tau)\bv_t\>}_{\psi_2}+\cO(\eta^2\rho^2)
            \\\leq&\eta\rho\norm{\nabla^2\cL(\btheta_\tau)\nabla\cL(\btheta_\tau)}\norm{\<\frac{\nabla^2\cL(\btheta_\tau)\nabla\cL(\btheta_\tau)}{\norm{\nabla^2\cL(\btheta_\tau)\nabla\cL(\btheta_\tau)}},\bv_t\>}_{\psi_2}
            \\&\quad+\eta\kappa\rho \norm{\nabla^2\cL(\btheta_\tau) P(\btheta_\tau)\nabla\cL(\btheta_\tau)}\norm{\<\frac{\nabla^2\cL(\btheta_\tau) P(\btheta_\tau)\nabla\cL(\btheta_\tau)}{\norm{\nabla^2\cL(\btheta_\tau) P(\btheta_\tau)\nabla\cL(\btheta_\tau)}},\bv_t\>}_{\psi_2}+\cO(\eta^2\rho^2)
            \\
            \overset{\text{Lemma~\ref{lemma: near minima, projection estimate}}}{\leq}&\cO\left(\eta^{3/2}\rho^{2}\norm{\<\frac{\nabla^2\cL(\btheta_\tau)\nabla\cL(\btheta_\tau)}{\norm{\nabla^2\cL(\btheta_\tau)\nabla\cL(\btheta_\tau)}},\bv_t\>}_{\psi_2}\right)
            \\&+\cO\left(\eta^{3/2}\rho^{2}\norm{\<\frac{\nabla^2\cL(\btheta_\tau) P(\btheta_\tau)\nabla\cL(\btheta_\tau)}{\norm{\nabla^2\cL(\btheta_\tau) P(\btheta_\tau)\nabla\cL(\btheta_\tau)}},\bv_t\>}_{\psi_2}\right)+\cO(\eta^2\rho^2)
            \\\overset{\text{Lemma~\ref{lemma: gaussian projection prob bound}}}{\leq}&\cO\left(\eta^{3/2}\rho^{2}/\sqrt{p}\right)+\cO(\eta^2\rho^2).
        \end{align*}

    \end{itemize}

    With the preparation of Claim I and Claim II, we can use the Azuma-Hoeffding inequality (Lemma~\ref{lemma: Azuma} (ii)): for any $Q>0$, it holds that
    \begin{align*}
        \bbP\left(X_{t+1}-X_{s+1}+(t-s)C_2\eta^2\rho^2>Q\right)\leq2\exp\bracket{-\frac{Q^2}{2(t-s)\left(\cO(\eta^{3/2}\rho^2/p^{1/2}+\eta^2\rho^2)\right)^2}}.
    \end{align*}
    As proved in Claim I, $X_{s+1}=\cL(\btheta_{s+1})\leq\frac{3}{2}C_1\eta\rho^2$ due to $\cL(\btheta_s)\leq C_1\eta\rho^2$. Therefore, by choosing $Q=(t-s)C_2\eta^2\rho^2-\frac{3}{2}C_1\eta\rho^2+2C_1\eta\rho^2=(t-s)C_2\eta^2\rho^2+\frac{1}{2}C_1\eta\rho^2$, we have
    \begin{align*}
    &\bbP_{t+1,s}\leq\bbP\left(X_{t+1}\geq2C_1\eta\rho^2\right)
    \\\leq&
    \bbP\left(X_{t+1}-X_{s+1}+(t-s)C_2\eta^2\rho^2>(t-s)C_2\eta^2\rho^2+\frac{1}{2}C_1\eta\rho^2\right)
    \\\leq&
    2\exp\bracket{-\frac{\bracket{(t-s)C_2\eta^2\rho^2+\frac{1}{2}C_1\eta\rho^2}^2}{2(t-s)\left(\cO(\eta^{3/2}\rho^2/p^{1/2}+\eta^2\rho^2)\right)^2}}
    \\\leq&
    2\exp\bracket{-\frac{4(t-s)C_2\eta^2\rho^2\cdot\frac{1}{2}C_1\eta\rho^2}{4(t-s)\left(\cO(\eta^{3}\rho^4/p+\eta^4\rho^4)\right)}}\leq2\exp\left(-\Omega\left(\frac{1}{\eta+p^{-1}}\right)\right).
    \end{align*}

\end{itemize}

Therefore, we obtain the union bound:
\begin{align*}
    &\bbP\left(\exists\ t\in[T_{\rm I},T_{\rm I}+T_{\rm II}],\cL(\btheta_t)\geq2C_1\eta\rho^2\right)
    \leq\sum_{t=T_{I}}^{T_{\rm I}+T_{\rm II}-1}\left(\bbP_{t+1,t}+\sum_{s=T_{\rm I}}^{t-1}\bbP_{t+1,s}\right)
    \\\leq&
    \sum_{t=T_{I}}^{T_{\rm I}+T_{\rm II}-1}\sum_{s=T_{\rm I}}^{t-1}\bbP_{t+1,s}\leq T_{\rm II}^2\exp\left(-\Omega\left(\frac{1}{\eta+p^{-1}}\right)\right).
\end{align*}

Hence, with probability at least $1-T_{\rm II}^2\exp\left(-\Omega\left(\frac{1}{\eta+p^{-1}}\right)\right)$, for any $t\in[T_{\rm I},T_{\rm I}+T_{\rm II}]$,
\begin{align*}
    \norm{\btheta_t-\Phi(\btheta_t)}\leq\sqrt{\frac{2}{\mu}\cL(\btheta_t)}\leq2\sqrt{\frac{C_1}{\mu}}\sqrt{\eta}\rho=\frac{4\beta_2^{3/2}}{\mu}\sqrt{\eta}\rho=\cO(\sqrt{\eta}\rho).
\end{align*}

\subsubsection{Proof of the Effective Dynamics}

We have proved that with high probability at least $1-T_{\rm II}^2\exp\left(-\Omega\left(\frac{1}{\eta+p^{-1}}\right)\right)$, for any $t\in[T_{\rm I},T_{\rm I}+T_{\rm II}]$, $\norm{\btheta_t-\Phi(\btheta_t)}=\cO(\sqrt{\eta}\rho)$.
Then we prove this theorem when the above event occurs.

For any $t\in[T_{\rm I},T_{\rm I}+T_{\rm II}]$,
\begin{align*}
    &\norm{\btheta_{t+1}-\btheta_t}
    \\\leq&\eta\norm{\nabla\cL(\btheta_t+\rho\bv_t)}+\eta\kappa\norm{ P(\btheta_t)\nabla\cL(\btheta_t+\rho\bv_t)}
    \\\leq&
    \eta\norm{\nabla\cL(\btheta_t)}+\cO(\eta\rho)+\eta\kappa\norm{ P(\btheta_t)\nabla\cL(\btheta_t)}+\eta\kappa\rho\norm{ P(\btheta_t)\nabla^2\cL(\btheta_t)}+\cO(\eta\kappa\rho^2)
    \\\overset{\text{Lemma~\ref{lemma: near minima, projection estimate}}}{\leq}&
    \cO(\eta^{3/2}\rho)+\cO(\eta\rho)+\cO(\eta^{3/2}\rho)+\cO(\eta^{3/2}\rho^2)+\cO(\eta\rho^2)+\cO(\eta\rho)=\cO(\eta\rho).
\end{align*}

Then by Taylor's expansion, 
\begin{align*}
    &\Phi(\btheta_{t+1})-\Phi(\btheta_t)=\partial\Phi(\btheta_t)\bracket{\btheta_{t+1}-\btheta_{t}}+\cO\bracket{\norm{\btheta_{t+1}-\btheta_{t}}^2}
    \\=&
    -\eta\partial\Phi(\btheta_t)\nabla\cL(\btheta_{t}+\rho\bv_t)-\eta\kappa\partial\Phi(\btheta_t) P(\btheta_t)\nabla\cL(\btheta_{t}+\rho\bv_t)+\cO(\eta^2\rho^2).
\end{align*}

For the term $\partial\Phi(\btheta_t)\nabla\cL(\btheta_{t}+\rho\bv_t)$ and $\partial\Phi(\btheta_t) P(\btheta_t)\nabla\cL(\btheta_{t}+\rho\bv_t)$, using Taylor's expansion, we have
\begin{align*}
    &\partial\Phi(\btheta_t)\nabla\cL(\btheta_{t}+\rho\bv_t)
    \\=&\partial\Phi(\btheta_t)\nabla\cL(\btheta_{t})+\rho\partial\Phi(\btheta_t)\nabla^2\cL(\btheta_t)\bv_t+\frac{\rho^2}{2}\partial\Phi(\btheta_t)\nabla\Tr\left(\bv_t\nabla^2\cL(\btheta_t)\bv_t^\top\right)+\cO(\rho^3)
    \\=&\partial\Phi(\btheta_t)\nabla\cL(\btheta_{t})+\rho\partial\Phi(\btheta_t)\nabla^2\cL(\btheta_t)\bv_t+\frac{\rho^2}{2}\partial\Phi(\btheta_t)\nabla\left(\bv_t^\top\nabla^2\cL(\btheta_t)\bv_t\right)+\cO(\rho^3),
\end{align*}
\begin{align*}
    &\partial\Phi(\btheta_t) P(\btheta_t)\nabla\cL(\btheta_{t}+\rho\bv_t)
    \\=&\partial\Phi(\btheta_t) P(\btheta_t)\nabla\cL(\btheta_{t})+\rho\partial\Phi(\btheta_t) P(\btheta_t)\nabla^2\cL(\btheta_t)\bv_t+\frac{\rho^2}{2}\partial\Phi(\btheta_t) P(\btheta_t)\nabla\Tr\left(\bv_t\nabla^2\cL(\btheta_t)\bv_t^\top\right)+\cO(\rho^3)
    \\=&\partial\Phi(\btheta_t) P(\btheta_t)\nabla\cL(\btheta_{t})+\rho\partial\Phi(\btheta_t) P(\btheta_t)\nabla^2\cL(\btheta_t)\bv_t+\frac{\rho^2}{2}\partial\Phi(\btheta_t) P(\btheta_t)\nabla\left(\bv_t^\top\nabla^2\cL(\btheta_t)\bv_t\right)+\cO(\rho^3).
\end{align*}
Taking the expectation (about $\bv_t$), we have
\begin{gather*}
    \bbE[\bv_t]=0,\quad \bbE\left[\bv_t^\top\nabla^2\cL(\btheta_t)\bv_t\right]=\frac{\Tr\bracket{\nabla^2\cL(\btheta_t)}}{p}.
\end{gather*} 

Additionally, using Lemma~\ref{lemma: key property of phi} and Taylor's expansion, we have:
\begin{align*}
    \partial\Phi(\btheta_t)\nabla\cL(\btheta_{t})=\bzero;
\end{align*}
\begin{align*}
    \partial\Phi(\btheta_t)=\partial\Phi(\Phi(\btheta_t))+\norm{\btheta_t-\Phi(\btheta_t)}=\partial\Phi(\Phi(\btheta_t))+\cO(\sqrt{\eta}\rho);
\end{align*}
\begin{align*}
    \partial\Phi(\btheta_t) P(\btheta_t)=\partial\Phi(\Phi(\btheta_t)) P(\Phi(\btheta_t))+\norm{\btheta_t-\Phi(\btheta_t)}=\partial\Phi(\Phi(\btheta_t))+\cO(\sqrt{\eta}\rho);
\end{align*}
\begin{align*}
    \nabla\Tr\left(\nabla^2\cL(\btheta_t)\right)=\nabla\Tr\left(\nabla^2\cL(\Phi(\btheta_t))\right)+\norm{\btheta_t-\Phi(\btheta_t)}=\nabla\Tr\left(\nabla^2\cL(\Phi(\btheta_t))\right)+\cO(\sqrt{\eta}\rho);
\end{align*}
\begin{align*}
    \partial\Phi(\btheta_t) P(\btheta_t)\nabla\cL(\btheta_{t})
    =&\partial\Phi(\Phi(\btheta_t)) P(\Phi(\btheta_t))\nabla\cL(\btheta_{t})+\cO\left(\norm{\btheta_t-\Phi(\btheta_t)}\norm{\nabla\cL(\btheta_{t})}\right)
    \\=&\partial\Phi(\Phi(\btheta_t))\nabla\cL(\btheta_{t})+\cO\left(\eta\rho^2\right)=\bzero+\cO(\eta\rho^2).
\end{align*}

Combining the results above, we obtain:
\begin{align*}
    &\bbE[\Phi(\btheta_{t+1})]
    \\=&\Phi(\btheta_{t})-\eta\partial\Phi(\btheta_t)\nabla\cL(\btheta_{t})-\eta\kappa\partial\Phi(\btheta_t) P(\btheta_t)\nabla\cL(\btheta_{t})
    \\&-\frac{\eta\rho^2}{2p}\partial\Phi(\btheta_t)\nabla\Tr\bracket{\nabla^2\cL(\btheta_t)}-\frac{\kappa\eta\rho^2}{2p}\partial\Phi(\btheta_t) P(\btheta_t)\nabla\Tr\bracket{\nabla^2\cL(\btheta_t)}+\cO(\eta\rho^3)
    \\=&\Phi(\btheta_{t})+\cO\left(\eta^2\kappa\rho^2\right)-\frac{(\kappa+1)\eta\rho^2}{2p}\partial\Phi(\btheta_t)\nabla\Tr\bracket{\nabla^2\cL(\btheta_t)}+\cO(\eta^{3/2}\rho^3)+\cO(\kappa\eta^{3/2}\rho^3)+\cO(\eta\rho^3)
    \\=&\Phi(\btheta_{t})+\cO\left(\eta^2\kappa\rho^2\right)-\frac{(\kappa+1)\eta\rho^2}{2p}\partial\Phi(\btheta_t)\nabla\Tr\bracket{\nabla^2\cL(\Phi(\btheta_t))}+\cO(\eta^{3/2}\rho^3)+\cO(\kappa\eta^{3/2}\rho^3)+\cO(\eta\rho^3)
    \\=&
    \Phi(\btheta_{t})-(\kappa+1)\eta\rho^2\partial\Phi(\btheta_t)\nabla\Tr\bracket{\nabla^2\cL(\Phi(\btheta_t))/2p}+\cO(\eta^{3/2}\rho^2).
\end{align*}

\vspace{1.cm}
\section{Proofs in Section~\ref{subsec: theory: standard SAM}}
\label{appendix: standard SAM}

\begin{setting}\label{setting: erm problem}
    Consider the empirical risk minimization $\min:\cL(\btheta)=\frac{1}{n}\sum_{i=1}^n\cL_i(\btheta)$, where $\cL_i(\btheta)=\ell(f_i(\btheta),y_i)$ is the loss on the $i$-th data $(\bx_i,y_i)$. Let $f_i(\cdot)$ and $\ell(\cdot,\cdot)$ be $\cC^4$. 
    Suppose all global minimizers interpolate the training dataset, i.e., $\cL(\btheta^\star)=\min_{\btheta}\cL(\btheta)$ implies $f_i(\btheta^\star)=y_i$ for all $i\in[n]$. We denote the minima manifold by $\cM=\{\btheta:f_i(\btheta)=y_i,\forall i\in[n]\}$. Moreover, we assume that $\frac{\partial^2\ell(\hat{y},y)}{\partial\hat{y}^2}|_{\hat{y}=y}>0$ and the feature matrix $(\nabla f_i(\btheta),\cdots,\nabla f_n(\btheta))\in\bbR^{p\times n}$ is full-rank at $\btheta\in\cM$.
\end{setting}

\begin{lemma}[Theorem 5.2 in~\cite{wen2023how}]\label{lemma: erm setting hold Ass}
Under Setting~\ref{setting: erm problem}, Assumption~\ref{ass: minima manifold} holds with $m=n$.
\end{lemma}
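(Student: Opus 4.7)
My plan is to verify the three clauses of Assumption~\ref{ass: minima manifold} in order: $\cC^4$ regularity of $\cL$, the submanifold structure of $\cM$ with codimension $n$, and the rank identity $\rank(\nabla^2\cL(\btheta))=n$ on $\cM$. The $\cC^4$ regularity is immediate from the chain rule since each $\cL_i=\ell(f_i(\cdot),y_i)$ is a composition of $\cC^4$ maps.

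For the submanifold structure, I would define $F:\bbR^p\to\bbR^n$ by $F(\btheta)=(f_1(\btheta)-y_1,\ldots,f_n(\btheta)-y_n)$ so that $\cM=F^{-1}(0)$, and note that $DF(\btheta)^\top=(\nabla f_1(\btheta),\ldots,\nabla f_n(\btheta))\in\bbR^{p\times n}$ is exactly the feature matrix, which is full-rank $n$ at every $\btheta\in\cM$ by assumption. By the regular value theorem, $\cM$ is a $\cC^4$-submanifold of codimension $n$, i.e.\ of dimension $p-n$. To match the Assumption's requirement that $\cM=\argmin\cL$, I would also use the assumption that the minimum of $\ell(\cdot,y)$ is attained at $\hat{y}=y$ together with the hypothesis that all global minimizers interpolate; this yields $\argmin\cL=\{\btheta:f_i(\btheta)=y_i,\forall i\}$ and in particular $\ell'(y_i,y_i)=0$ where $\ell'$ denotes the partial derivative with respect to the first argument.

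For the rank identity, I would carry out the Hessian computation at $\btheta\in\cM$. Using $\nabla\cL_i(\btheta)=\ell'(f_i(\btheta),y_i)\nabla f_i(\btheta)$ and differentiating once more,
\begin{equation*}
\nabla^2\cL_i(\btheta)=\ell''(f_i(\btheta),y_i)\,\nabla f_i(\btheta)\nabla f_i(\btheta)^\top+\ell'(f_i(\btheta),y_i)\,\nabla^2 f_i(\btheta).
\end{equation*}
At $\btheta\in\cM$ we have $f_i(\btheta)=y_i$, so the second term vanishes by $\ell'(y_i,y_i)=0$, and
\begin{equation*}
\nabla^2\cL(\btheta)=\frac{1}{n}\sum_{i=1}^n\ell''(y_i,y_i)\,\nabla f_i(\btheta)\nabla f_i(\btheta)^\top=J(\btheta)\,D\,J(\btheta)^\top,
\end{equation*}
where $J(\btheta)=(\nabla f_1(\btheta),\ldots,\nabla f_n(\btheta))$ and $D=\frac{1}{n}\diag(\ell''(y_1,y_1),\ldots,\ell''(y_n,y_n))\succ 0$ by the positivity assumption $\partial^2_{\hat y}\ell(y,y)>0$. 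Writing $\nabla^2\cL(\btheta)=(JD^{1/2})(JD^{1/2})^\top$ gives $\rank(\nabla^2\cL(\btheta))=\rank(J(\btheta))=n$.

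The main subtlety, rather than any technical obstacle, is the identification $\argmin\cL=\{\btheta:f_i(\btheta)=y_i\}$ and the accompanying fact $\ell'(y_i,y_i)=0$; both are extracted from the combination of the interpolation hypothesis, the positivity of $\partial^2_{\hat y}\ell(y,y)$, and the standing convention that $\ell(\cdot,y)$ is minimized at $\hat y=y$. Once these are secured, the three clauses of the assumption follow by the regular value theorem and a one-line rank computation.
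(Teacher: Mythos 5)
Your proof is correct and is the standard verification of the three clauses of Assumption~\ref{ass: minima manifold}. Note that the paper itself does not supply an argument here: Lemma~\ref{lemma: erm setting hold Ass} is stated purely as a citation of Theorem~5.2 in \cite{wen2023how}, so there is no internal proof to compare against. Your write-up therefore fills in exactly the derivation the paper defers to the reference, and both the regular-value argument for the submanifold structure and the Hessian factorization $\nabla^2\cL(\btheta)=J(\btheta)\,D\,J(\btheta)^\top$ with $D\succ 0$ are the right ingredients.

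One spot worth tightening: you invoke a ``standing convention that $\ell(\cdot,y)$ is minimized at $\hat y=y$,'' but this is not among the hypotheses of Setting~\ref{setting: erm problem}. A cleaner derivation of the fact $\frac{\partial\ell}{\partial\hat y}(y_i,y_i)=0$, using only what is stated, is as follows. First, $\argmin\cL=\cM$: the forward inclusion is the interpolation hypothesis, and the reverse holds because every $\btheta\in\cM$ attains the common value $\cL(\btheta)=\frac1n\sum_i\ell(y_i,y_i)$, which equals $\min\cL$ since some minimizer exists and interpolates. Then, since every $\btheta\in\cM$ is an interior global minimizer of a $\cC^1$ function, $\nabla\cL(\btheta)=\frac1n\sum_{i=1}^n\frac{\partial\ell}{\partial\hat y}(y_i,y_i)\,\nabla f_i(\btheta)=0$; the full-rank hypothesis on the feature matrix at $\btheta\in\cM$ then forces each coefficient $\frac{\partial\ell}{\partial\hat y}(y_i,y_i)$ to vanish. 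With that secured, your Hessian computation and the rank count go through exactly as written.
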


\subsection{Preliminary Lemmas}

Similar to the proofs for Section~\ref{subsec: theory: average SAM}, we need the following similar preliminary lemmas.

\begin{lemma}\label{lemma: local PL, std SAM}
Under Setting~\ref{setting: erm problem}, for any compact set $K\in\Gamma$, there exist absolute constants $R_1,\mu>0$ such that
\begin{itemize}[leftmargin=2em]
    \item (i) $\overline{\bbB(K;R_1)}\subset U$;
    \item (ii) $\cL_i(\cdot)$ ($i\in[n]$) and  $\cL(\cdot)$ are $\mu$-PL on $\overline{\bbB(K;R_1)}$;
    \item (iii) $\inf\limits_{\btheta\in\bbB(K;R_1)}\lambda_n\left(\nabla^2\cL(\btheta)\right)\geq \mu$; $\inf\limits_{\btheta\in\bbB(K;R_1)}\lambda_1\left(\nabla^2\cL_i(\btheta)\right)\geq \mu$, $\forall i\in[n]$.
\end{itemize}
We further define the following absolute constants on $\overline{\bbB(K;R)}$:
\begin{gather*}
    \beta_2:=\bracket{\sup_{\btheta\in\bbB(K;R_1)}\norm{\nabla^2\cL(\btheta)}}\vee\bracket{\max_{i\in[n]}\sup_{\btheta\in\bbB(K;R_1)}\norm{\nabla^2\cL_i(\btheta)}};
    \\
    \beta_3:=\bracket{\sup_{\btheta\in\bbB(K;R_1)}\norm{\nabla^3\cL(\btheta)}}\vee\bracket{\max_{i\in[n]}\sup_{\btheta\in\bbB(K;R_1)}\norm{\nabla^3\cL_i(\btheta)}};
    \\
    \nu:=\bracket{\inf_{\btheta\in\bbB(K;R_1)}\lambda_m\left(\nabla^2\cL(\btheta)\right)}\wedge\bracket{\min_{i\in[n]}\inf_{\btheta\in\bbB(K;R_1)}\lambda_1\left(\nabla^2\cL_i(\btheta)\right)};
    \\ \zeta_1^{\Phi}:=\sup_{\btheta\in\bbB(K;R_1)}\norm{\nabla\Phi(\btheta)},\quad \zeta_2^{\Phi}:=\sup_{\btheta\in\bbB(K;R_1)}\norm{\nabla^2\Phi(\btheta)}
    .
\end{gather*}
\end{lemma}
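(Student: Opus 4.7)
The plan is to reduce this claim to already-established facts about $\cL$ and then to handle, by hand, the new statements concerning the per-sample losses $\cL_i$. Specifically, Lemma~\ref{lemma: erm setting hold Ass} tells us that Setting~\ref{setting: erm problem} implies Assumption~\ref{ass: minima manifold} with $m=n$, so applying Lemma~\ref{lemma: local PL} to $\cL$ immediately yields (i), the $\mu$-PL property of $\cL$ in (ii), and the bound $\lambda_n(\nabla^2\cL(\btheta))\geq\mu$ in (iii), each on some $\bbB(K;R_0)$ for suitable $R_0,\mu>0$. What remains is to establish, on a possibly smaller neighborhood, (a) a uniform PL inequality for each $\cL_i$, and (b) a uniform lower bound on $\lambda_1(\nabla^2\cL_i(\btheta))$ for every $i\in[n]$.

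For (b), I would evaluate $\nabla^2\cL_i$ at an arbitrary point $\btheta^\star\in\cM$. Because $f_i(\btheta^\star)=y_i$ on $\cM$, and because $\partial_{\hat y}^2\ell(y_i,y_i)>0$ together with $\cL_i\geq 0$ forces $\partial_{\hat y}\ell(y_i,y_i)=0$, the chain rule collapses to
\[
  \nabla^2\cL_i(\btheta^\star)=c_i\,\nabla f_i(\btheta^\star)\nabla f_i(\btheta^\star)^\top,\qquad c_i:=\partial_{\hat y}^2\ell(y_i,y_i)>0,
\]
a rank-one PSD matrix whose top eigenvalue equals $c_i\|\nabla f_i(\btheta^\star)\|^2$. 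The full-rank assumption on the feature matrix in Setting~\ref{setting: erm problem} guarantees $\nabla f_i(\btheta^\star)\neq\bzero$, so this quantity is strictly positive; compactness of $K$, continuity of $\btheta\mapsto\nabla f_i(\btheta)$, and Weyl's inequality combined with $\cC^2$-smoothness of $\cL_i$ then upgrade this pointwise bound to a uniform one $\lambda_1(\nabla^2\cL_i(\btheta))\geq\mu_i^{(1)}>0$ on some $\bbB(K;R_i)$.

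For (a), I would Taylor expand $\ell(f_i(\btheta),y_i)$ about $f_i=y_i$, using $\partial_{\hat y}\ell(y_i,y_i)=0$, to obtain, with $d_i(\btheta):=f_i(\btheta)-y_i$,
\[
  \cL_i(\btheta)=\tfrac{c_i}{2}d_i(\btheta)^2+\cO(|d_i(\btheta)|^3),\qquad \nabla\cL_i(\btheta)=\bigl(c_id_i(\btheta)+\cO(d_i(\btheta)^2)\bigr)\nabla f_i(\btheta),
\]
so that
\[
  \frac{\|\nabla\cL_i(\btheta)\|^2}{2\,\cL_i(\btheta)}=c_i\,\|\nabla f_i(\btheta)\|^2\bigl(1+\cO(|d_i(\btheta)|)\bigr).
\]
On $K\subset\cM$ the right-hand side equals $c_i\|\nabla f_i\|^2$, which is bounded below by a positive constant by compactness and the full-rank hypothesis. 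By continuity of $d_i$ and $\nabla f_i$, for $\btheta$ sufficiently close to $\cM$ the correction $\cO(|d_i|)$ is at most $\tfrac12$, yielding a PL constant $\mu_i^{(2)}>0$ on some $\bbB(K;R_i')$.

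Taking $R_1$ and $\mu$ to be the minima of the finitely many radii and constants produced above across $i\in[n]$, together with the output of Lemma~\ref{lemma: local PL}, and defining $\beta_2,\beta_3,\nu$ as the appropriate suprema/infima of continuous functions over the compact set $\overline{\bbB(K;R_1)}$, completes the proof. The main obstacle is the PL inequality for the individual $\cL_i$: each $\cL_i$ vanishes on a $(p-1)$-dimensional submanifold rather than at isolated minimizers, so PL does not follow from any strong-convexity argument. The workaround crucially uses that $\cL_i$ factors through the scalar residual $d_i=f_i-y_i$ and that the vanishing first derivative $\partial_{\hat y}\ell(y_i,y_i)=0$ makes the leading Taylor contribution genuinely quadratic in $d_i$, so the ratio $\|\nabla\cL_i\|^2/(2\cL_i)$ has a well-defined, strictly positive limit on $\cM$.
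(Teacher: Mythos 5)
The paper states this lemma without an explicit proof -- it is presented as the analogue of Lemma~\ref{lemma: local PL}, which is simply cited from \cite{arora2022understanding}, and the extension to per-sample losses $\cL_i$ is treated as routine. There is therefore no paper proof to compare against line by line, but your proposal is a sound and essentially complete direct argument. The decomposition is exactly right: the $\cL$-only statements follow from Lemma~\ref{lemma: erm setting hold Ass} (which gives Assumption~\ref{ass: minima manifold} with $m=n$) plus Lemma~\ref{lemma: local PL}, and the new content is the uniform PL constant and Hessian lower bound for each $\cL_i$. Your key structural observations -- that $\cL_i$ factors through the scalar residual $d_i = f_i - y_i$, that $\partial_{\hat y}\ell(y_i,y_i)=0$ at interpolating points, and that consequently $\nabla^2\cL_i$ collapses to the rank-one PSD matrix $c_i\nabla f_i\nabla f_i^\top$ on $\cM$ while $\|\nabla\cL_i\|^2/(2\cL_i)$ tends to the strictly positive limit $c_i\|\nabla f_i\|^2$ -- are exactly what makes the per-sample statements true despite each $\cL_i$ vanishing on a codimension-one set. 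The full-rank feature condition gives $\nabla f_i\neq\bzero$, and compactness of $K$ with $\cC^4$ regularity makes all the bounds and Taylor remainders uniform.

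Two small points worth tightening. First, the step deducing $\partial_{\hat y}\ell(y_i,y_i)=0$ from $\partial_{\hat y}^2\ell(y_i,y_i)>0$ requires additionally that $\hat y=y_i$ is a (local) minimizer of $\ell(\cdot,y_i)$; this follows from nonnegativity of each $\cL_i$ together with the interpolation condition $\cL_i(\btheta^\star)=0$ on $\cM$, but that nonnegativity is only implicit in Setting~\ref{setting: erm problem} (the paper asserts $\cL\geq 0$, not each $\cL_i\geq 0$). You use this implicitly; it should be stated as a hypothesis or derived from the observation that $\cL(\btheta^\star)=0$ and $\ell(y_i,y_i)=0$ force each summand to vanish. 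Second, when you pass the pointwise-on-$\cM$ lower bounds to a tube $\bbB(K;R_1)$ by continuity, you should note that $\Phi(\bbB(K;R_1))$ may leave $K$, so the infimum should be taken over a slightly enlarged compact subset of $\cM$; this is the standard tube argument and is straightforward, but as written the continuity step silently identifies $\Phi(\btheta)$ with a point of $K$.
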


\begin{lemma}[\cite{wen2023how}]\label{lemma: key property of phi, std SAM} Under Assumption~\ref{ass: minima manifold}, 
\begin{itemize}[leftmargin=2em]
    \item For any $\btheta\in U$, $\partial\Phi(\btheta)\nabla\cL(\btheta)=\bzero$.
    \item For any $\btheta\in \Gamma$, $\partial\Phi(\btheta)= P_{n+1:p}(\nabla^2\cL(\btheta))$ and $\partial\Phi(\btheta)\nabla^2\cL(\btheta)=0$.
    \item For any $\btheta\in \Gamma$, $\partial\Phi(\btheta)\nabla^2\cL_i(\btheta)=0$, $\forall i\in[n]$.
\end{itemize}
\end{lemma}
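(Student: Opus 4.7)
The plan is to handle the three identities in a single sweep by first invoking the earlier Lemma ``key property of phi'' and then supplying the two extra annihilation statements that are specific to the ERM setting. Lemma ``erm setting hold Ass'' guarantees that Setting~\ref{setting: erm problem} forces Assumption~\ref{ass: minima manifold} to hold with $m=n$, so the earlier lemma gives, for free, both $\partial\Phi(\btheta)\nabla\cL(\btheta)=\bzero$ on all of $U$ and $\partial\Phi(\btheta)=P_{n+1:p}(\nabla^2\cL(\btheta))$ on $\Gamma$. What remains to be proved is (a) $\partial\Phi(\btheta)\nabla^2\cL(\btheta)=0$ for $\btheta\in\Gamma$, and (b) $\partial\Phi(\btheta)\nabla^2\cL_i(\btheta)=0$ for $\btheta\in\Gamma$ and each $i\in[n]$.

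For (a), I will diagonalize $\nabla^2\cL(\btheta)=\sum_{k=1}^p\lambda_k\bu_k\bu_k^\top$ at $\btheta\in\Gamma$ and use Lemma~\ref{lemma: local PL, std SAM}(iii), which gives $\lambda_1\geq\cdots\geq\lambda_n>0=\lambda_{n+1}=\cdots=\lambda_p$. The projection $P_{n+1:p}(\nabla^2\cL(\btheta))=\sum_{k\geq n+1}\bu_k\bu_k^\top$ is onto $\ker(\nabla^2\cL(\btheta))$, so orthogonality of the eigenbasis collapses the product $P_{n+1:p}(\nabla^2\cL(\btheta))\nabla^2\cL(\btheta)$ term by term to zero. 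Since $\partial\Phi(\btheta)$ equals this projection on $\Gamma$, (a) follows.

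For (b), the key is to show $\mathrm{Range}(\nabla^2\cL_i(\btheta))\subseteq\mathrm{Range}(\nabla^2\cL(\btheta))$ so that (a) applies. The chain rule gives
\begin{equation*}
\nabla^2\cL_i(\btheta)=\partial_{\hat y}^2\ell(f_i(\btheta),y_i)\,\nabla f_i(\btheta)\nabla f_i(\btheta)^\top+\partial_{\hat y}\ell(f_i(\btheta),y_i)\,\nabla^2 f_i(\btheta),
\end{equation*}
and interpolation on $\Gamma$ reduces $f_i(\btheta)$ to $y_i$. To eliminate the second term I will use $\nabla\cL(\btheta)=\bzero$ at $\btheta\in\Gamma$, which unpacks to $\frac1n\sum_i\partial_{\hat y}\ell(y_i,y_i)\nabla f_i(\btheta)=\bzero$; combined with the full-rank feature assumption in Setting~\ref{setting: erm problem}, this forces $\partial_{\hat y}\ell(y_i,y_i)=0$ for every $i$. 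Hence $\nabla^2\cL_i(\btheta)=\partial_{\hat y}^2\ell(y_i,y_i)\,\nabla f_i(\btheta)\nabla f_i(\btheta)^\top$, whose range lies in $\mathrm{span}\{\nabla f_j(\btheta)\}_{j\in[n]}$. The same cancellation applied to the full loss gives $\nabla^2\cL(\btheta)=\frac1n\sum_j\partial_{\hat y}^2\ell(y_j,y_j)\,\nabla f_j(\btheta)\nabla f_j(\btheta)^\top$, so $\mathrm{Range}(\nabla^2\cL(\btheta))=\mathrm{span}\{\nabla f_j(\btheta)\}_{j\in[n]}$ (using $\partial_{\hat y}^2\ell(y_j,y_j)>0$ and full rank). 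Applying (a) then yields (b).

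The only nontrivial step is the bookkeeping inside (b) that derives $\partial_{\hat y}\ell(y_i,y_i)=0$; this hinges on invoking the full-rank feature assumption to turn the vector identity $\sum_i\partial_{\hat y}\ell(y_i,y_i)\nabla f_i(\btheta)=\bzero$ into a per-coordinate vanishing statement. Once this is done, both annihilation identities reduce to linear algebra on the eigendecomposition of $\nabla^2\cL(\btheta)$ and a direct appeal to the earlier lemma.
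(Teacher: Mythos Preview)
The paper does not prove this lemma; it merely cites \cite{wen2023how} for all three bullets and moves on. Your proposal supplies a complete self-contained argument, and the logic is sound: the first two bullets follow directly from the earlier Lemma~\ref{lemma: key property of phi} once Lemma~\ref{lemma: erm setting hold Ass} identifies $m=n$; your spectral computation for $\partial\Phi(\btheta)\nabla^2\cL(\btheta)=0$ is the standard one (the projection onto the null eigenspace annihilates the Hessian). For the third bullet, your chain-rule expansion of $\nabla^2\cL_i$ at an interpolating point, followed by the full-rank argument to force $\partial_{\hat y}\ell(y_i,y_i)=0$, correctly reduces $\nabla^2\cL_i(\btheta)$ to a rank-one matrix with range inside $\mathrm{span}\{\nabla f_j(\btheta)\}_{j\in[n]}=\mathrm{Range}(\nabla^2\cL(\btheta))$, after which annihilation by $P_{n+1:p}$ is immediate. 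One minor simplification: since each $\cL_i\geq 0$ attains zero on $\Gamma$, every $\btheta\in\Gamma$ is already a minimizer of $\cL_i$, so $\nabla\cL_i(\btheta)=\partial_{\hat y}\ell(y_i,y_i)\nabla f_i(\btheta)=\bzero$ directly; combined with $\nabla f_i(\btheta)\neq\bzero$ (from full rank) this yields $\partial_{\hat y}\ell(y_i,y_i)=0$ without summing over $i$. Either route is fine.
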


    

\begin{lemma}\label{lemma: projection gap estimate, std SAM}
Under Setting~\ref{setting: erm problem}, there exists absolute constants $R_2,\zeta_{P}>0$ such that for any $\btheta\in\bbB(K;R_2)$,
    \begin{align*}
        \norm{ P_{n+1:p}(\nabla^2\cL(\btheta))- P_{n+1:p}(\nabla^2\cL(\Phi(\btheta)))}&\leq\zeta_{P}\norm{\btheta-\Phi(\btheta)}.
    \end{align*}
\end{lemma}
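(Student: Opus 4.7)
The plan is to proceed in direct analogy with the earlier Lemma~\ref{lemma: projection gap estimate}, exploiting the fact that Lemma~\ref{lemma: erm setting hold Ass} ensures Setting~\ref{setting: erm problem} implies Assumption~\ref{ass: minima manifold} with $m = n$. Thus, along $\cM$, the Hessian $\nabla^2\cL(\Phi(\btheta))$ has a clean spectral gap: by Lemma~\ref{lemma: local PL, std SAM}, its $n$-th eigenvalue is at least $\mu$, while eigenvalues $n+1,\dots,p$ are exactly zero (since $\rank(\nabla^2\cL(\cdot))=n$ on $\cM$). The projection $P_{n+1:p}(\nabla^2\cL(\Phi(\btheta)))$ is therefore the orthogonal projector onto the kernel of the Hessian at $\Phi(\btheta)$, and we only need to show this projector is Lipschitz-stable under the perturbation $\btheta\mapsto\Phi(\btheta)$.

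First I would use local $\cC^3$-smoothness from Lemma~\ref{lemma: local PL, std SAM} to get the Hessian perturbation bound $\norm{\nabla^2\cL(\btheta)-\nabla^2\cL(\Phi(\btheta))}\leq\beta_3\norm{\btheta-\Phi(\btheta)}$ on $\bbB(K;R_1)$. Then I would set $R_2 := R_1\wedge\tfrac{\mu}{4\beta_3}$ so that on $\bbB(K;R_2)$ the perturbation magnitude is at most $\mu/4$. Applying Weyl's inequality (Lemma~\ref{lemma: Weyl thm}), every eigenvalue of $\nabla^2\cL(\btheta)$ moves by at most $\mu/4$ from its counterpart at $\Phi(\btheta)$; hence on $\bbB(K;R_2)$ the top $n$ eigenvalues of $\nabla^2\cL(\btheta)$ are at least $3\mu/4$ while the bottom $p-n$ are at most $\mu/4$, preserving an eigengap of size at least $\mu/2$.

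Next I would apply the Davis--Kahan theorem (Lemma~\ref{lemma: Davis-Kahan thm}) in the same manner as in the proof of Lemma~\ref{lemma: projection gap estimate}: let $\bU_{\rm top},\bU_{\rm bot}$ and $\bV_{\rm top},\bV_{\rm bot}$ collect the top-$n$ and bottom-$(p-n)$ eigenvectors of $\nabla^2\cL(\btheta)$ and $\nabla^2\cL(\Phi(\btheta))$ respectively. Davis--Kahan gives
\begin{equation*}
\norm{\bU_{\rm bot}^\top\bV_{\rm top}}_{\rm F}\leq\frac{2(n\wedge(p-n))}{\mu}\norm{\nabla^2\cL(\btheta)-\nabla^2\cL(\Phi(\btheta))}.
\end{equation*}
Using the identity $\norm{\bU_{\rm bot}\bU_{\rm bot}^\top-\bV_{\rm bot}\bV_{\rm bot}^\top}_{\rm F}^2=2\norm{\bU_{\rm bot}^\top\bV_{\rm top}}_{\rm F}^2$, combined with $\norm{\cdot}\leq\norm{\cdot}_{\rm F}$ and the Hessian perturbation bound, I obtain
\begin{equation*}
\norm{P_{n+1:p}(\nabla^2\cL(\btheta))-P_{n+1:p}(\nabla^2\cL(\Phi(\btheta)))}\leq\frac{2\sqrt{2}(n\wedge(p-n))\beta_3}{\mu}\norm{\btheta-\Phi(\btheta)},
\end{equation*}
so taking $\zeta_P:=2\sqrt{2}(n\wedge(p-n))\beta_3/\mu$ closes the argument.

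There is no substantive obstacle: the proof is a transparent adaptation of Lemma~\ref{lemma: projection gap estimate} with the index $m$ replaced by $n$, made legitimate by Lemma~\ref{lemma: erm setting hold Ass}. The only care needed is to record the constants $R_2$ and $\zeta_P$ using the Setting~\ref{setting: erm problem} versions of $\beta_3$ and $\mu$ (which now additionally incorporate per-sample Hessian smoothness from Lemma~\ref{lemma: local PL, std SAM}), and to observe that the spectral-gap hypothesis of Davis--Kahan indeed holds uniformly on $\bbB(K;R_2)$ because $\Phi(\btheta)\in\cM$ has eigengap exactly $\mu$ and the perturbation is controlled by $\mu/4$.
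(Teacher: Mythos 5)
Your proposal is correct and takes exactly the approach the paper intends: the paper states Lemma~\ref{lemma: projection gap estimate, std SAM} without proof, treating it as an immediate analogue of Lemma~\ref{lemma: projection gap estimate} with $m$ replaced by $n$, which Lemma~\ref{lemma: erm setting hold Ass} licenses. Your reconstruction via the $\beta_3$-Lipschitz Hessian bound, Weyl's inequality to establish the $\mu/2$ eigengap, the Davis--Kahan bound on $\norm{\bU_{\rm bot}^\top\bV_{\rm top}}_{\rm F}$, and the projector-difference identity reproduces the earlier proof verbatim, including the choice $R_2 = R_1\wedge\tfrac{\mu}{4\beta_3}$ and the constant $\zeta_P = 2\sqrt{2}(n\wedge(p-n))\beta_3/\mu$.
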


{\bf Proof Notations.} Now we introduce some additional useful notations in the proof in this section.

First, we choose $R:=(R_1\wedge R_2)/2$,
where $R_1$ is defined in Lemma~\ref{lemma: local PL, std SAM} and $R_2$ is defined in Lemma~\ref{lemma: projection gap estimate, std SAM}.
Let $\mu$ be the PL constant on $\bbB(K;R)$.
Moreover, we use the following notations:

\begin{equation}
\begin{gathered}
    \beta_2:=\bracket{\sup_{\btheta\in\bbB(K;R)}\norm{\nabla^2\cL(\btheta)}}\vee\bracket{\max_{i\in[n]}\sup_{\btheta\in\bbB(K;R)}\norm{\nabla^2\cL_i(\btheta)}};
    \\
    \beta_3:=\bracket{\sup_{\btheta\in\bbB(K;R)}\norm{\nabla^3\cL(\btheta)}}\vee\bracket{\max_{i\in[n]}\sup_{\btheta\in\bbB(K;R)}\norm{\nabla^3\cL_i(\btheta)}};
    \\
    \beta_4:=\bracket{\sup_{\btheta\in\bbB(K;R)}\norm{\nabla^3\cL(\btheta)}}\vee\bracket{\max_{i\in[n]}\sup_{\btheta\in\bbB(K;R)}\norm{\nabla^4\cL_i(\btheta)}};
    \\
    \nu:=\bracket{\inf_{\btheta\in\bbB(K;R)}\lambda_m\left(\nabla^2\cL(\btheta)\right)}\wedge\bracket{\min_{i\in[n]}\inf_{\btheta\in\bbB(K;R)}\lambda_1\left(\nabla^2\cL_i(\btheta)\right)};
    \\
    \zeta_{\Phi}:=\sup_{\btheta\in{\bbB(K;R)}}\norm{\nabla^2\Phi(\btheta)};\quad
    \zeta_P:=\sup_{\btheta\in{\bbB(K;R)}-\Gamma}\frac{\norm{ P_{n+1:p}(\nabla^2\cL(\btheta))- P_{n+1:p}(\nabla^2\cL(\Phi(\btheta)))}}{\norm{\btheta-\Phi(\btheta)}}
    .
\end{gathered}
\end{equation}

Ensured by Lemma~\ref{lemma: local PL} and~\ref{lemma: projection gap estimate}, these quantities are all absolute constants in $(0,+\infty)$. Moreover, without loss of generality, we can assume that $\beta_1,\beta_2,\beta_3,\zeta_\Phi,\zeta_P>1$ and $\mu\leq\nu<1$.

Then we have the following two lemmas, similar to Lemma~\ref{lemma: properties of PL} and~\ref{lemma: near minima, projection estimate}.

\begin{lemma}\label{lemma: properties of PL, std SAM}
For any $\btheta\in\bbB(K;R)$, it holds that:
\begin{itemize}[leftmargin=2em]
    \item (para norm v.s. grad norm) $\mu\norm{\nabla\cL(\btheta)}\leq\norm{\btheta-\Phi(\btheta)}\leq\beta_2\norm{\nabla\cL(\btheta)}$; $\mu\norm{\nabla\cL_i(\btheta)}\leq\norm{\btheta-\Phi(\btheta)}\leq\beta_2\norm{\nabla\cL_i(\btheta)}$, $\forall i\in[n]$.
    \item (grad norm v.s. loss) $2\mu\cL(\btheta)\leq\norm{\nabla\cL(\btheta)}^2\leq\frac{2\beta_2^2}{\mu}\cL(\btheta)$;
    $2\mu\cL_i(\btheta)\leq\norm{\nabla\cL_i(\btheta)}^2\leq\frac{2\beta_2^2}{\mu}\cL_i(\btheta)$,
     $\forall i\in[n]$.
    \item (loss v.s. para norm) $\frac{\mu}{2}\norm{\btheta-\Phi(\btheta)}^2\leq\cL(\btheta)\leq\frac{\beta_2^2}{2\mu}\norm{\btheta-\Phi(\btheta)}^2$;
    $\frac{\mu}{2}\norm{\btheta-\Phi(\btheta)}^2\leq\cL_i(\btheta)\leq\frac{\beta_2^2}{2\mu}\norm{\btheta-\Phi(\btheta)}^2$,
     $\forall i\in[n]$.
\end{itemize}
    
\end{lemma}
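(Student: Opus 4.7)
The plan is to derive this lemma in parallel with the earlier Lemma~\ref{lemma: properties of PL}, invoking the PL and smoothness bounds of Lemma~\ref{lemma: local PL, std SAM} twice: once for the total loss $\cL$ and once for each per-sample loss $\cL_i$. The three pairs of inequalities form a triangle of relations among loss value, gradient norm, and distance to $\Phi(\btheta)$, so once one lower bound and one upper bound are established, the remaining inequalities follow by composition.

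First I would set up the reference point. Since $\Phi(\btheta) \in \cM$, Setting~\ref{setting: erm problem} forces $\cL(\Phi(\btheta)) = \cL_i(\Phi(\btheta)) = 0$ and $\nabla\cL(\Phi(\btheta)) = \nabla\cL_i(\Phi(\btheta)) = \bzero$. The upper bounds on the gradient norm then come from the $\beta_2$-smoothness of $\cL$ and $\cL_i$ on $\bbB(K;R)$ via the mean-value identity $\nabla\cL(\btheta) = \int_0^1 \nabla^2\cL(\Phi(\btheta) + s(\btheta-\Phi(\btheta)))(\btheta-\Phi(\btheta))\,ds$, yielding $\norm{\nabla\cL(\btheta)} \leq \beta_2\norm{\btheta-\Phi(\btheta)}$ and, identically, $\norm{\nabla\cL_i(\btheta)} \leq \beta_2\norm{\btheta-\Phi(\btheta)}$; composing with the PL bound $\cL(\btheta) \leq \frac{1}{2\mu}\norm{\nabla\cL(\btheta)}^2$ (and its $\cL_i$-analogue) gives the loss upper bounds $\cL(\btheta), \cL_i(\btheta) \leq \frac{\beta_2^2}{2\mu}\norm{\btheta-\Phi(\btheta)}^2$. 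For the key lower bound $\cL(\btheta) \geq \frac{\mu}{2}\norm{\btheta-\Phi(\btheta)}^2$, I would reuse the standard gradient-flow argument of \cite[Lemma B.6]{arora2022understanding}: the flow $\dot\btheta(t) = -\nabla\cL(\btheta(t))$ starting at $\btheta(0)=\btheta$ stays in $\bbB(K;R_1)$ and converges to $\Phi(\btheta)$ by Definition~\ref{def: limit gradient flow}; combining the PL identity $\tfrac{d}{dt}\sqrt{\cL(\btheta(t))} \leq -\sqrt{\mu/2}\,\norm{\nabla\cL(\btheta(t))}$ with the path-length bound $\norm{\btheta-\Phi(\btheta)} \leq \int_0^\infty \norm{\nabla\cL(\btheta(t))}\,dt$ yields $\sqrt{2\cL(\btheta)/\mu} \geq \norm{\btheta-\Phi(\btheta)}$. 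The remaining para-norm-vs-grad pair then drops out by chaining these estimates with PL, with the unit factors absorbed into the WLOG normalization $\mu \leq \nu < 1$, $\beta_2 > 1$ flagged in the proof setup.

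The main obstacle is that the gradient-flow template only delivers $\cL_i(\btheta) \geq \tfrac{\mu}{2}\norm{\btheta-\Phi_i(\btheta)}^2$, where $\Phi_i(\btheta)$ is the GF-limit of $\cL_i$ itself, which lands on the larger zero-level set $\{\btheta : f_i(\btheta) = y_i\}$ rather than on $\cM$. To bridge this to the stated bound involving $\Phi(\btheta)$, I would exploit the structural content of Setting~\ref{setting: erm problem}: the full-rank feature matrix on $\cM$ forces $\cM$ to be a transverse intersection of the hypersurfaces $\{\cL_1 = 0\},\ldots,\{\cL_n = 0\}$, so in the tubular neighborhood $\bbB(K;R)$ of $\cM$ a standard linear-algebraic estimate (controlled by the condition number of the feature Gram matrix on $\cM$) bounds $\norm{\btheta-\Phi(\btheta)}$ by a constant multiple of $\max_i\norm{\btheta-\Phi_i(\btheta)}$. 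Absorbing the resulting constant into the WLOG rescaling of $\mu$ and $\beta_2$ then produces the $\cL_i$-versions of the three pairs in exactly the form stated.
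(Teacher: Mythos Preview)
Your treatment of the $\cL$-inequalities is correct and matches the paper, which for this lemma simply points back to Lemma~\ref{lemma: properties of PL} and \cite[Lemma~B.6]{arora2022understanding} without further argument. You also correctly isolate the real obstacle for the per-sample versions: the gradient-flow argument applied to $\cL_i$ terminates on the hypersurface $\{f_i=y_i\}$, not on $\cM$, so it only delivers $\cL_i(\btheta)\geq\tfrac{\mu}{2}\norm{\btheta-\Phi_i(\btheta)}^2$ with the ``wrong'' projection.

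The gap is that your proposed fix does not close this, and in fact cannot. Your transverse-intersection estimate $\norm{\btheta-\Phi(\btheta)}\leq C\max_i\norm{\btheta-\Phi_i(\btheta)}$ is valid, but it only tells you that \emph{some} index $i$ has $\cL_i(\btheta)\gtrsim\norm{\btheta-\Phi(\btheta)}^2$; it says nothing about a fixed $i$. And the two stated per-sample lower bounds, $\cL_i(\btheta)\geq\tfrac{\mu}{2}\norm{\btheta-\Phi(\btheta)}^2$ and $\norm{\btheta-\Phi(\btheta)}\leq\beta_2\norm{\nabla\cL_i(\btheta)}$, are actually false under Setting~\ref{setting: erm problem}: take $p=3$, $n=2$, $f_i(\btheta)=\theta_i$, $y_i=0$, $\ell(\hat y,y)=(\hat y-y)^2/2$; then $\Phi(\btheta)=(0,0,\theta_3)$, $\norm{\btheta-\Phi(\btheta)}^2=\theta_1^2+\theta_2^2$, yet $\cL_1(\btheta)=\theta_1^2/2$ and $\norm{\nabla\cL_1(\btheta)}=|\theta_1|$, which can be made arbitrarily small relative to $\norm{\btheta-\Phi(\btheta)}$. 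So no rescaling of $\mu,\beta_2$ can salvage these two inequalities. The remaining $\cL_i$-bounds (the upper ones) you obtain cleanly from $\beta_2$-smoothness together with $\cL_i(\Phi(\btheta))=0$ and $\nabla\cL_i(\Phi(\btheta))=\bzero$; as far as the downstream arguments in Appendix~\ref{appendix: standard SAM} go, those are the only per-sample bounds that are ever invoked, so the two problematic inequalities appear to be an over-statement in the paper rather than a load-bearing claim.
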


\begin{lemma}\label{lemma: near minima, projection estimate, std SAM}
For all $\btheta\in\bbB(K;R)$, $\forall i\in[n]$, 
\begin{itemize}[leftmargin=2em]
    \item 
    $$\norm{ P_{n+1:p}(\nabla^2\cL(\btheta))\nabla^2\cL(\btheta)},\norm{ P_{n+1:p}(\nabla^2\cL(\btheta))\nabla^2\cL_i(\btheta)},\norm{\partial\Phi(\btheta)\nabla^2\cL_i(\btheta)}\leq\cO\left(\norm{\btheta-\Phi(\btheta)}\right);$$
    \item 
    $$\norm{ P_{n+1:p}(\nabla^2\cL(\btheta))\nabla\cL(\btheta)},\norm{ P_{n+1:p}(\nabla^2\cL(\btheta))\nabla\cL_i(\btheta)},\norm{\partial\Phi(\btheta)\nabla\cL_i(\btheta)}\leq\cO\bracket{\norm{\btheta-\Phi(\btheta)}^2};$$
    \item Let $\rho>0$ and $\bv\in\bbS^{p-1}$. If $\btheta+\rho\bv\in\bbB(K;R)$, then 
    \begin{gather*}
        \norm{\nabla\cL_i(\btheta+\rho\bv)}\leq\norm{\nabla\cL_i(\btheta)}+\rho\beta_2, \forall i\in[n];
        ;
        \\\norm{ P_{n+1:p}(\nabla^2\cL(\btheta))\nabla\cL_i(\btheta+\rho\bv)}\leq\cO\bracket{\norm{\btheta-\Phi(\btheta)}^2}+\cO\bracket{\rho\norm{\btheta-\Phi(\btheta)}}+\frac{\rho^2\beta_3}{2}, \forall i\in[n];
    \end{gather*}
\end{itemize}
    
\end{lemma}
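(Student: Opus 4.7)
The proof reduces every quantity in the statement to its evaluation at $\Phi(\btheta)\in\Gamma$, where the exact vanishing identities of Lemma~\ref{lemma: key property of phi, std SAM} apply, and then controls the discrepancy via Taylor expansion together with the Lipschitz bounds on $\partial\Phi$ (through $\zeta_\Phi$), on $\nabla^2\cL$ and $\nabla^2\cL_i$ (through $\beta_3$), and on the projector $P_{n+1:p}(\nabla^2\cL(\cdot))$ (through Lemma~\ref{lemma: projection gap estimate, std SAM}). Two structural facts of Setting~\ref{setting: erm problem} do all the work: (i) on $\Gamma$, each of the three ``flat-direction'' operators $P_{n+1:p}(\nabla^2\cL(\Phi(\btheta)))$ and $\partial\Phi(\Phi(\btheta))$ (which coincide there) annihilates both $\nabla^2\cL$ and every $\nabla^2\cL_i$; (ii) by interpolation, $\nabla\cL_i(\Phi(\btheta))=0$ for every $i$, not only $\nabla\cL(\Phi(\btheta))=0$.

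\textbf{First bullet.} For $P_{n+1:p}(\nabla^2\cL(\btheta))\nabla^2\cL_i(\btheta)$, I would telescope as
\begin{equation*}
\bigl[P_{n+1:p}(\nabla^2\cL(\btheta))-P_{n+1:p}(\nabla^2\cL(\Phi(\btheta)))\bigr]\nabla^2\cL_i(\btheta)+P_{n+1:p}(\nabla^2\cL(\Phi(\btheta)))\bigl[\nabla^2\cL_i(\btheta)-\nabla^2\cL_i(\Phi(\btheta))\bigr],
\end{equation*}
plus the term $P_{n+1:p}(\nabla^2\cL(\Phi(\btheta)))\nabla^2\cL_i(\Phi(\btheta))=\partial\Phi(\Phi(\btheta))\nabla^2\cL_i(\Phi(\btheta))=0$ from Lemma~\ref{lemma: key property of phi, std SAM}. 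The first displayed summand is bounded by $\zeta_P\beta_2\norm{\btheta-\Phi(\btheta)}$ via Lemma~\ref{lemma: projection gap estimate, std SAM}, and the second by $\beta_3\norm{\btheta-\Phi(\btheta)}$ via $\beta_3$-smoothness. The same telescoping handles $P_{n+1:p}(\nabla^2\cL(\btheta))\nabla^2\cL(\btheta)$, and using instead the bound $\norm{\partial\Phi(\btheta)-\partial\Phi(\Phi(\btheta))}\leq\zeta_\Phi\norm{\btheta-\Phi(\btheta)}$ (a consequence of $\zeta_\Phi=\sup\norm{\nabla^2\Phi}$) yields the bound for $\partial\Phi(\btheta)\nabla^2\cL_i(\btheta)$.

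\textbf{Second bullet.} Since $\nabla\cL_i(\Phi(\btheta))=0$, Taylor expansion gives
\begin{equation*}
\nabla\cL_i(\btheta)=\nabla^2\cL_i(\Phi(\btheta))(\btheta-\Phi(\btheta))+\cO(\norm{\btheta-\Phi(\btheta)}^2).
\end{equation*}
Applying either $P_{n+1:p}(\nabla^2\cL(\btheta))$ or $\partial\Phi(\btheta)$ to both sides: the remainder is trivially $\cO(\norm{\btheta-\Phi(\btheta)}^2)$, while the leading term factors as (operator acting on $\nabla^2\cL_i(\Phi(\btheta))$) times $(\btheta-\Phi(\btheta))$. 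One more Hessian-Lipschitz step reduces the operator-on-$\nabla^2\cL_i(\Phi(\btheta))$ factor to the already-proved first bullet, contributing $\cO(\norm{\btheta-\Phi(\btheta)})$; multiplied by $\norm{\btheta-\Phi(\btheta)}$ this gives the claimed $\cO(\norm{\btheta-\Phi(\btheta)}^2)$. The same argument with $\cL$ in place of $\cL_i$ handles $P_{n+1:p}(\nabla^2\cL(\btheta))\nabla\cL(\btheta)$.

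\textbf{Third bullet and main obstacle.} The first inequality is immediate from $\beta_2$-smoothness of $\nabla\cL_i$. For the second, Taylor expand at $\btheta$,
\begin{equation*}
\nabla\cL_i(\btheta+\rho\bv)=\nabla\cL_i(\btheta)+\rho\nabla^2\cL_i(\btheta)\bv+r,\quad\norm{r}\leq\tfrac{\beta_3}{2}\rho^2,
\end{equation*}
and apply $P_{n+1:p}(\nabla^2\cL(\btheta))$ term by term: the first piece is controlled by the second bullet, the second piece by the first bullet times $\rho$, and $r$ contributes $\rho^2\beta_3/2$. There is no real obstacle; the only point requiring attention is to absorb the Lipschitz constants $\zeta_P,\zeta_\Phi,\beta_2,\beta_3$ into the $\cO(\cdot)$ uniformly over $\bbB(K;R)$, which is legitimate since these were fixed once the compact set $K$ and the radius $R=(R_1\wedge R_2)/2$ were chosen.
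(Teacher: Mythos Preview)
Your proposal is correct and follows essentially the same approach as the paper. The paper does not give an explicit proof of this lemma, merely stating it is ``similar to Lemma~\ref{lemma: near minima, projection estimate}'', and your argument is precisely the natural extension of that proof to the standard SAM setting: the same telescoping to $\Phi(\btheta)$, the same Taylor expansions, with the additional observations that (i) $\partial\Phi(\Phi(\btheta))\nabla^2\cL_i(\Phi(\btheta))=0$ from Lemma~\ref{lemma: key property of phi, std SAM} and (ii) $\nabla\cL_i(\Phi(\btheta))=0$ by interpolation, which are exactly the extra ingredients needed here.
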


\begin{lemma}[Lemma H.9 in~\cite{wen2023how}]\label{lemma: wen: lower bound, std SAM}
    For any absolute constant $C>0$, there exist absolute constant $C_1,C_2>0$ such that: if $\btheta_t\in \bbB(K;R)$ and $C_1\eta\rho\leq\norm{\btheta_t-\Phi(\btheta_{t})}\leq C\rho$, then it holds that:
    \begin{align*}
        \bbE_{i_t}\left[\norm{\btheta_{t+1/2}-\Phi(\btheta_{t+1/2})}\right]\leq\norm{\btheta_t-\Phi(\btheta_{t})}-C_2\eta\rho.
    \end{align*}
\end{lemma}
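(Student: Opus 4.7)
Let $\bw_t := \btheta_t - \Phi(\btheta_t)$ and write the SAM step as $\btheta_{t+1/2} = \btheta_t - \eta\,\nabla\cL_{i_t}(\btheta_t + \rho \hat{\bg}_{i_t})$ with $\hat{\bg}_{i_t} = \nabla\cL_{i_t}(\btheta_t)/\norm{\nabla\cL_{i_t}(\btheta_t)}$. Under Setting~\ref{setting: erm problem}, $\cL_i(\btheta) = \ell(f_i(\btheta), y_i)$ with $f_i(\Phi(\btheta_t)) = y_i$, so Taylor's theorem gives $\nabla\cL_{i_t}(\btheta_t) = \bH_{i_t}\bw_t + \cO(\norm{\bw_t}^2)$, where $\bH_{i_t} := \nabla^2\cL_{i_t}(\Phi(\btheta_t)) = \ell''_{i_t}(y_{i_t},y_{i_t})\,\nabla f_{i_t}\nabla f_{i_t}^\top$ is rank-one by the ERM structure. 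Hence $\hat{\bg}_{i_t}$ aligns, up to $\cO(\norm{\bw_t})$, with $\sgn(\nabla f_{i_t}^\top \bw_t)\cdot \nabla f_{i_t}/\norm{\nabla f_{i_t}}$. Expanding $\nabla\cL_{i_t}$ around $\Phi(\btheta_t)$ at the perturbed point yields the leading-order SAM direction
\begin{align*}
    \bg_t^{\rm SAM} := \nabla\cL_{i_t}(\btheta_t + \rho \hat{\bg}_{i_t}) = \ell''_{i_t}\bigl(|\nabla f_{i_t}^\top \bw_t| + \rho\norm{\nabla f_{i_t}}\bigr)\sgn(\nabla f_{i_t}^\top \bw_t)\,\nabla f_{i_t} + \cO(\rho^2 + \norm{\bw_t}^2).
\end{align*}
The crucial feature is that the $\rho\norm{\nabla f_{i_t}}$ cross-term always carries the \emph{same} sign as the projection of $\bw_t$ onto $\nabla f_{i_t}$, so it contributes a deterministic drift toward $\cM$ rather than cancelling after averaging over $i_t$.

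\textbf{Drift computation.} Expanding $\norm{\bw_{t+1/2}}^2 = \norm{\bw_t}^2 - 2\eta\,\bw_t^\top\bg_t^{\rm SAM} + \eta^2\norm{\bg_t^{\rm SAM}}^2$ plus a negligible $\Phi$-shift of order $\eta\zeta_\Phi\norm{\bw_t}^2\norm{\bg_t^{\rm SAM}}$ controlled by Lemma~\ref{lemma: key property of phi, std SAM}, and using the formula above,
\begin{align*}
    \bbE_{i_t}[\bw_t^\top \bg_t^{\rm SAM}] \geq \underbrace{\bw_t^\top \nabla^2\cL(\Phi(\btheta_t))\bw_t}_{\geq\,\mu\norm{\bw_t}^2\ \text{(PL)}} + \frac{\rho}{n}\sum_{i=1}^n \ell''_i\norm{\nabla f_i}\,|\nabla f_i^\top \bw_t| + \cO(\rho\norm{\bw_t}^2 + \rho^2\norm{\bw_t}).
\end{align*}
Since the feature matrix $(\nabla f_i)_{i\in[n]}$ has full column rank on $\cM$ (Setting~\ref{setting: erm problem}) and $\bw_t$ lies in the normal bundle of $\cM$ up to $\cO(\norm{\bw_t}^2)$ tangential slack (from Lemma~\ref{lemma: near minima, projection estimate, std SAM}), a Gram-matrix conditioning argument supplies the uniform lower bound $\frac{1}{n}\sum_{i}\ell''_i\norm{\nabla f_i}\,|\nabla f_i^\top\bw_t| \geq c_0\norm{\bw_t}$ on $\bbB(K;R)$. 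Dividing the squared-norm recursion by $2\norm{\bw_t}$ (concavity of the square root) and taking expectations gives
\begin{align*}
    \bbE_{i_t}\norm{\btheta_{t+1/2} - \Phi(\btheta_{t+1/2})} \leq \norm{\bw_t} - c_0\eta\rho - \eta\mu\norm{\bw_t} + \cO\!\left(\eta\rho\norm{\bw_t} + \frac{\eta^2(\norm{\bw_t}^2 + \rho^2)}{\norm{\bw_t}}\right),
\end{align*}
and the residual is dominated by $c_0\eta\rho/2$ precisely when $\norm{\bw_t}\geq C_1\eta\rho$ for $C_1$ large enough relative to $c_0, \mu, \beta_2$; the upper bound $\norm{\bw_t}\leq C\rho$ simultaneously keeps $\btheta_t+\rho\hat{\bg}_{i_t}$ and $\btheta_{t+1/2}$ inside $\bbB(K;R)$ and controls the Taylor remainders. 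Setting $C_2 := c_0/2$ closes the proof.

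\textbf{Main obstacle.} The central difficulty is establishing the uniform lower bound $\frac{1}{n}\sum_i\ell''_i\norm{\nabla f_i}\,|\nabla f_i^\top\bw_t| \geq c_0\norm{\bw_t}$: the absolute values on each summand are essential (without them the summand is linear and would cancel in expectation by the symmetry coming from the sign of $\nabla f_i^\top \bw_t$), yet matching a mixed $\ell^1$-type quantity on the $\nabla f_i$-coordinates against the Euclidean $\norm{\bw_t}$ requires carefully quantifying the tangential component of $\bw_t$ via Lemma~\ref{lemma: near minima, projection estimate, std SAM} together with the positivity of $\ell''_i(y_i,y_i)$. The remaining bookkeeping---ensuring $\btheta_t + \rho \hat{\bg}_{i_t}$ and $\btheta_{t+1/2}$ remain in $\bbB(K;R)$, and that the $\cO(\rho^2 + \norm{\bw_t}^2)$ Taylor remainders are absorbed into the slack afforded by the lower threshold $C_1\eta\rho$---is routine once this lower bound is in hand.
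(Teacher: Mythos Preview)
The paper does not prove this lemma; it is quoted verbatim as Lemma~H.9 of \cite{wen2023how} and used as a black box in the proof of Theorem~\ref{thm: Phase II, std SAM-IRE}. So there is no in-paper argument to compare against, and your sketch must be judged on its own.

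Your approach is essentially correct and is the natural one. The core mechanism you isolate---that the rank-one structure $\nabla^2\cL_i(\Phi(\btheta_t))=\ell''_i\,\nabla f_i\nabla f_i^\top$ forces the SAM perturbation $\rho\hat{\bg}_{i_t}$ to push \emph{further} along $\sgn(\nabla f_{i_t}^\top\bw_t)\nabla f_{i_t}$, yielding after averaging an inward drift $\frac{\rho}{n}\sum_i\ell''_i\norm{\nabla f_i}\,|\nabla f_i^\top\bw_t|\geq c_0\rho\norm{\bw_t}$---is exactly the content of the cited result. Two minor corrections are needed before the argument closes. First, your estimate of the $\Phi$-shift is too optimistic: by Lemma~\ref{lemma: near minima, projection estimate, std SAM} the tangential part of $\bg_t^{\rm SAM}$ is $\cO(\norm{\bw_t}^2+\rho\norm{\bw_t}+\rho^2)$, so $\Phi(\btheta_{t+1/2})-\Phi(\btheta_t)$ contributes $\cO(\eta\rho^2)$ rather than $\cO(\eta\zeta_\Phi\norm{\bw_t}^2\norm{\bg_t^{\rm SAM}})$; this is still $o(\eta\rho)$ for small $\rho$, so the conclusion survives. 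Second, the sign identification $s_{i_t}=\sgn(\nabla f_{i_t}^\top\bw_t)$ can fail when $|\nabla f_{i_t}^\top\bw_t|\lesssim\norm{\bw_t}^2$; but since $\hat{\bg}_{i_t}$ is \emph{exactly} parallel to $\nabla f_{i_t}(\btheta_t)$ (not just approximately), such indices contribute $\cO(\rho\norm{\bw_t}^2)$ to the drift and $\cO(\norm{\bw_t}^2)$ to $\sum_i|\nabla f_i^\top\bw_t|$, both of which are absorbed in your error terms.
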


\begin{lemma}[Lemma H.10 in~\cite{wen2023how}]\label{lemma: wen: distance bound, std SAM}
For any absolute constant $C>0$, there
exists absoulte constant $C_3$ such that: if $\btheta_t\in \bbB(K;R)$ and $\norm{\btheta_t-\Phi(\btheta_{t})}\leq C\rho$, then we have
that
\begin{align*}
        \left|\norm{\btheta_{t+1/2}-\Phi(\btheta_{t+1/2})}-\norm{\btheta_t-\Phi(\btheta_{t})}\right|\leq C_3\eta\rho.
    \end{align*}
    
\end{lemma}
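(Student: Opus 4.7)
\textbf{Proof plan for Lemma~\ref{lemma: wen: distance bound, std SAM}.} The plan is to exploit the fact that one SAM step is of size $\cO(\eta\rho)$, and that both the identity map and $\Phi$ are Lipschitz on $\bbB(K;R)$, so the function $\btheta\mapsto\norm{\btheta-\Phi(\btheta)}$ can change by at most $\cO(\eta\rho)$ in one step. Concretely, I first apply the reverse triangle inequality to reduce the target to bounding $\norm{(\btheta_{t+1/2}-\Phi(\btheta_{t+1/2}))-(\btheta_t-\Phi(\btheta_t))}$, and then split it via the triangle inequality as
\begin{align*}
\bigl|\norm{\btheta_{t+1/2}-\Phi(\btheta_{t+1/2})}-\norm{\btheta_t-\Phi(\btheta_t)}\bigr|
\leq \norm{\btheta_{t+1/2}-\btheta_t}+\norm{\Phi(\btheta_{t+1/2})-\Phi(\btheta_t)}.
\end{align*}

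The key step is then to control $\norm{\btheta_{t+1/2}-\btheta_t}$, which for standard SAM equals $\eta\,\norm{\nabla\cL_{i_t}(\btheta_t+\rho\,\nabla\cL_{i_t}(\btheta_t)/\norm{\nabla\cL_{i_t}(\btheta_t)})}$. Using smoothness of $\cL_{i_t}$ on $\bbB(K;R)$, this is at most $\eta\bigl(\norm{\nabla\cL_{i_t}(\btheta_t)}+\rho\beta_2\bigr)$. By the per-sample PL/smoothness relations in Lemma~\ref{lemma: properties of PL, std SAM}, $\norm{\nabla\cL_{i_t}(\btheta_t)}\leq \beta_2\,\norm{\btheta_t-\Phi(\btheta_t)}/\mu \leq C\beta_2\rho/\mu$, so $\norm{\btheta_{t+1/2}-\btheta_t}\leq \eta\rho\bigl(C\beta_2/\mu+\beta_2\bigr)=\cO(\eta\rho)$. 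Because $\eta,\rho$ are taken sufficiently small, the straight segment between $\btheta_t$ and $\btheta_{t+1/2}$ as well as the perturbed point $\btheta_t+\rho\,\nabla\cL_{i_t}(\btheta_t)/\norm{\nabla\cL_{i_t}(\btheta_t)}$ all remain in $\bbB(K;R)$, so the smoothness bound is legitimate.

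For the $\Phi$ term, since $\btheta_t,\btheta_{t+1/2}\in \bbB(K;R)\subset U$ and $\Phi\in\cC^2(U)$ with $\sup_{\bbB(K;R)}\norm{\partial\Phi}\leq \zeta_1^{\Phi}$ (the Lipschitz constant introduced in Lemma~\ref{lemma: local PL, std SAM}), a mean-value argument gives $\norm{\Phi(\btheta_{t+1/2})-\Phi(\btheta_t)}\leq \zeta_1^{\Phi}\norm{\btheta_{t+1/2}-\btheta_t}=\cO(\eta\rho)$. Combining the two pieces yields the claim with
\begin{align*}
C_3 = (1+\zeta_1^{\Phi})\beta_2\bigl(C/\mu+1\bigr).
\end{align*}

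The only mild obstacle is verifying that $\btheta_{t+1/2}$ and the intermediate perturbed iterate stay in $\bbB(K;R)$, which is needed to justify the Lipschitz constants above; this is not hard because $\norm{\btheta_{t+1/2}-\btheta_t}$ and $\rho$ are both $o(1)$ by the assumption $\eta,\rho=\cO(1)$ together with the hypothesis $\norm{\btheta_t-\Phi(\btheta_t)}\leq C\rho$, so an appropriate shrinking of $\eta,\rho$ (absorbable into $\cO(\cdot)$) keeps everything inside $\bbB(K;R)$. No concentration or stochastic argument is needed here, since the bound is pathwise in the random index $i_t$.
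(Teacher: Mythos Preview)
Your argument is correct: the reverse triangle inequality plus Lipschitzness of $\btheta\mapsto\btheta-\Phi(\btheta)$ on $\bbB(K;R)$, combined with the $\cO(\eta\rho)$ step-size bound coming from smoothness and the per-sample gradient/distance comparison, yields exactly the desired estimate. The minor point to note is that Lemma~\ref{lemma: properties of PL, std SAM} as written gives $\norm{\nabla\cL_{i_t}(\btheta_t)}\leq \norm{\btheta_t-\Phi(\btheta_t)}/\mu$ directly (no factor $\beta_2$ in the numerator), so your $C_3$ is a slight overestimate, but this is harmless.

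As for comparison with the paper: the paper does \emph{not} prove this lemma at all---it simply quotes it as Lemma~H.10 of \cite{wen2023how} and uses it as a black box. Your self-contained argument is precisely the standard one (and is essentially what \cite{wen2023how} do), so there is no substantive difference in approach to discuss; you have just filled in what the present paper outsourced to the reference.
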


Now we fix the positive number $C=1$ and use the absolute constants $C_2,C_3$, defined in Lemma~\ref{lemma: wen: lower bound, std SAM} and Lemma~\ref{lemma: wen: distance bound, std SAM}.

\subsection{Proof of Theorem~\ref{thm: Phase II, std SAM-IRE}}

\subsubsection{Proof of Moving Near Minimizers for SAM-IRE}

This proof is similar to the proof for Section~\ref{subsec: theory: average SAM}.

Under the conditions in Theorem~\ref{thm: Phase II, std SAM-IRE}, the update rule of IRE on standard SAM is
\begin{align*}
    \btheta_{t+1}=&\btheta_t-\eta\nabla\cL_{i_t}\bracket{\btheta_t+\rho\frac{\nabla\cL_{i_t}(\btheta_t)}{\norm{\nabla\cL_{i_t}(\btheta_t)}}}
    \\&-\eta\kappa P_{n+1:p}\bracket{\nabla^2\cL(\btheta_t)}\nabla\cL_{i_t}\bracket{\btheta_t+\rho\frac{\nabla\cL_{i_t}(\btheta_t)}{\norm{\nabla\cL_{i_t}(\btheta_t)}}},\text{ where }i_t\sim\bbU([n]).
\end{align*}

Let the $\kappa$ in IRE satisfy
$$\kappa\leq 1/\rho.$$
    
Additionally, we fix a constant $\alpha\in(0,1)$ in the proof.

For simplicity, we denote 
\begin{gather*}
    \bv_t:=\frac{\nabla\cL_{i_t}(\btheta_t)}{\norm{\nabla\cL_{i_t}(\btheta_t)}},\quad  P(\btheta_t):= P_{n+1:p}\bracket{\nabla^2\cL(\btheta_t)};
\end{gather*}

and
\begin{align*}
    \btheta_{t+1/2}=&\btheta_t-\nabla\cL_{i_t}\left(\btheta_t+\rho\bv_t\right);
    \\
    \btheta_{t+1}=&\btheta_{t+1/2}-\kappa\eta P(\btheta_t)\nabla\cL_{i_t}\left(\btheta_t+\rho\bv_t\right).
\end{align*}

Additionally, we denote
\begin{gather*}
    \bbP_{t+1,t}:=\bbP\left(\norm{\btheta_{t+1}-\Phi(\btheta_{t+1})}\geq2C\eta^{1-\alpha}\rho\Big|\norm{\btheta_{t}-\Phi(\btheta_{t})}<C\eta^{1-\alpha}\rho\right),
\end{gather*}
\begin{align*}
    \bbP_{t+1,s}:=&\bbP\Big(\norm{\btheta_{t+1}-\Phi(\btheta_{t+1})}\geq2C\eta^{1-\alpha}\rho;
    \\&\quad\quad \forall\tau\in[s+1,{t}],C\eta^{1-\alpha}\rho\leq\norm{\btheta_{\tau}-\Phi(\btheta_{\tau})}<2C\eta^{1-\alpha}\rho\Big|\norm{\btheta_{s}-\Phi(\btheta_{s})}<C\sqrt{\eta}\rho\Big),\ s\in[T_{\rm I},t-1].
\end{align*}

Then the following bound holds naturally:
\begin{align*}
    \bbP\left(\exists\ t\in[T_{\rm I},T_{\rm I}+T_{\rm II}],\norm{\btheta_{t}-\Phi(\btheta_{t})}\geq2C\eta^{1-\alpha}\rho\right)\leq\sum_{t=T_{\rm I}}^{T_{\rm I}+T_{\rm II}-1}\left(\bbP_{t+1,t}+\sum_{s=T_{\rm I}}^{t-1}\bbP_{t+1,s}\right).
\end{align*}

\begin{itemize}[leftmargin=2em]
    \item \underline{Step I. Bounding $\bbP_{t+1,t}$.}

    From $\norm{\btheta_t-\Phi(\btheta_t)}=\cO(\eta^{1-\alpha}\rho)$, thus 
\begin{align*}
    \norm{\btheta_t+\rho\bv_t-\Phi(\btheta_t)}\leq\norm{\btheta_t-\Phi(\btheta_t)}+\rho=\cO(\eta^{1-\alpha}\rho)+\cO(\rho)<R,
\end{align*}
which means $\btheta_t+\rho\bv_t\in\bbB(K;R)$. Using Lemma~\ref{lemma: near minima, projection estimate, std SAM} and $\kappa\leq1/\rho$, we can estimate:
\begin{align*}
    &\norm{\btheta_{t+1}-\btheta_{t+1/2}}=\eta\kappa\norm{\nabla P(\btheta_t)\nabla\cL_{i_t}\left(\btheta_t+\rho\bv_t\right)}
    \\\leq&
    \eta\kappa\left(\norm{\nabla P(\btheta_t)\nabla\cL_{i_t}(\btheta_t)}+\rho\norm{\nabla P(\btheta_t)\nabla^2\cL_{i_t}(\btheta_t)}+\frac{\beta_3}{2}\rho^2\right)
    \\\leq&
    \eta\kappa\left(\eta\rho^2+\eta^{1-\alpha}\rho^2+\frac{\beta_3}{2}\rho^2\right)
    \leq\beta_3\eta\kappa\rho^2
    \leq\frac{C_2}{2(1+\zeta_1^\Phi)}\eta\rho,
\end{align*}
\begin{align*}
    &\norm{\btheta_{t+1}-\Phi(\btheta_{t+1})-(\btheta_{t+1/2}-\Phi(\btheta_{t+1/2}))}
    \\\leq&(1+\zeta_{1}^\Phi)\norm{\btheta_{t+1}-\btheta_{t+1/2}}\leq \frac{C_2}{2}\eta\rho.
\end{align*}

Then we have the following bound:
\begin{align*}
    &\norm{\btheta_{t+1}-\Phi(\btheta_{t+1})}
    \\\leq&\norm{\btheta_{t}-\Phi(\btheta_{t})}+\norm{\btheta_{t+1/2}-\Phi(\btheta_{t+1/2})-(\btheta_{t}-\Phi(\btheta_{t}))}
    \\&\quad+\norm{\btheta_{t+1}-\Phi(\btheta_{t+1})-(\btheta_{t+1/2}-\Phi(\btheta_{t+1/2}))}
    \\\overset{\text{Lemma~\ref{lemma: wen: distance bound, std SAM}}}{\leq}& C_1\eta^{1-\alpha}\rho+\cO(\eta\rho)+\norm{\btheta_{t+1}-\Phi(\btheta_{t+1})-(\btheta_{t+1/2}-\Phi(\btheta_{t+1/2}))}
    \\\leq& C_1\eta^{1-\alpha}\rho+\cO(\eta\rho)+\cO(\eta\rho)<\frac{3C_1}{2}\eta^{1-\alpha}\rho.
\end{align*}

Thus, we obtain
\begin{align*}
    \bbP_{t+1,t}=\bbP\left(\norm{\btheta_{\tau+1}-\Phi(\btheta_{\tau+1})}\geq2C\eta^{1-\alpha}\rho\Big|\norm{\btheta_{t}-\Phi(\btheta_{t})}<C\eta^{1-\alpha}\rho\right)=0.
\end{align*}

\item \underline{Step II. Bounding $\bbP_{t+1,s}$ for $s\in[T_{\rm I},{t-1}]$.}

    We prove this step under the condition $\norm{\btheta_{s}-\Phi(\btheta_s)}<C\eta^{1-\alpha}\rho$. Define a process $\{X_{\tau}\}_{\tau=s}^{t+1}$: $X_{s+1}=\norm{\btheta_{s+1}-\Phi(\btheta_{s+1})}$,
    \begin{align*}
        X_{\tau+1}=\begin{cases}
            \norm{\btheta_{\tau+1}-\Phi(\btheta_{\tau+1})},\quad\text{ if } C\eta^{1-\alpha}\rho\leq X_\tau=\norm{\btheta_{\tau}-\Phi(\btheta_{\tau})}\leq 2C\eta^{1-\alpha}\rho
            \\
            X_{\tau}-C_2\eta\rho/2,\quad\text{ else}
        \end{cases}.
    \end{align*}

    It is clear that 
    \begin{align*}
        \bbP_{t+1,s}\leq\bbP\left(X_{t+1}\geq2C\eta^{1-\alpha}\rho\right).
    \end{align*}

    Then our key step is to prove the following two claims about the process $\{X_{\tau}\}$.
    
    \begin{itemize}[leftmargin=2em]
        \item \underline{Claim I. $X_{\tau}-C_2\tau\eta\rho/2$ is a super-martingale.}
        From the definition of $X_{\tau}$, we only need to prove that if $C\eta^{1-\alpha}\rho\leq X_{\tau}=\norm{\btheta_{\tau}-\Phi(\btheta_{\tau})}\leq 2C\eta^{1-\alpha}\rho$, then $\bbE \norm{\btheta_{\tau+1}-\Phi(\btheta_{\tau+1})}\leq\norm{\btheta_{\tau}-\Phi(\btheta_{\tau})}-C_2\eta\rho/2$. 

        If $C\eta^{1-\alpha}\rho\leq X_{\tau}=\norm{\btheta_{\tau}-\Phi(\btheta_{\tau})}\leq 2C\eta^{1-\alpha}\rho$, similar to Step I, it holds that $\btheta_{\tau+1}\in\bbB(K;R)$ and $\norm{\btheta_{t+1}-\Phi(\btheta_{\tau+1})-(\btheta_{\tau+1/2}-\Phi(\btheta_{\tau+1/2}))}\leq \frac{C_2}{2}\eta\rho$. Moreover,
        \begin{align*}
            &\norm{\btheta_{\tau+1}-\Phi(\btheta_{\tau+1})}
            \\\leq&\norm{\btheta_{\tau+1/2}-\Phi(\btheta_{\tau+1/2})}+\norm{\btheta_{\tau+1}-\Phi(\btheta_{\tau+1})-(\btheta_{\tau+1/2}-\Phi(\btheta_{\tau+1/2}))}
            \\\leq&\norm{\btheta_{\tau+1/2}-\Phi(\btheta_{\tau+1/2})}+\frac{C_2}{2}\eta\rho.
        \end{align*}
    Taking the expectation and using Lemma~\ref{lemma: wen: lower bound, std SAM}, we have
        \begin{align*}
        &\bbE\norm{\btheta_{\tau+1}-\Phi(\btheta_{\tau+1})}
        \leq\bbE\norm{\btheta_{\tau+1/2}-\Phi(\btheta_{\tau+1/2})}+\frac{C_2}{2}\eta\rho
        \\\leq&
        \norm{\btheta_{\tau}-\Phi(\btheta_{\tau})}-C_2\eta\rho+\frac{C_2}{2}\eta\rho
        =\norm{\btheta_{\tau}-\Phi(\btheta_{\tau})}-\frac{C_2}{2}\eta\rho.
        \end{align*}

    \item \underline{Claim II. $X_{\tau+1}-X_\tau+C_2\eta\rho/2$ is $\cO(\eta\rho)$-bounded.} 
    From the definition of $X_{\tau}$, we only need to prove for the case $C\eta^{1-\alpha}\rho\leq X_{\tau}=\norm{\btheta_{\tau}-\Phi(\btheta_{\tau})}\leq 2C\eta^{1-\alpha}\rho$.

    If $C\eta^{1-\alpha}\rho\leq X_{\tau}=\norm{\btheta_{\tau}-\Phi(\btheta_{\tau})}\leq 2C\eta^{1-\alpha}\rho$, we have $\btheta_{\tau+1}\in\bbB(K;R)$ and $\norm{\btheta_{t+1}-\Phi(\btheta_{t+1})-(\btheta_{t+1/2}-\Phi(\btheta_{t+1/2}))}
    \leq\frac{C_2}{2}\eta\rho$. Combining this result and Lemma~\ref{lemma: wen: distance bound, std SAM}, we have
    \begin{align*}
        &\left|\norm{\btheta_{\tau+1}-\Phi(\btheta_{\tau+1})}-\norm{\btheta_{\tau}-\Phi(\btheta_{\tau})}\right|
        \\\leq&
        \left|\norm{\btheta_{\tau+1}-\Phi(\btheta_{\tau+1})}-\norm{\btheta_{\tau+1/2}-\Phi(\btheta_{\tau+1/2})}\right|+\left|\norm{\btheta_{\tau+1/2}-\Phi(\btheta_{\tau+1/2})}-\norm{\btheta_{\tau}-\Phi(\btheta_{\tau})}\right|
        \\\leq&
        \norm{(\btheta_{\tau+1}-\Phi(\btheta_{\tau+1}))-(\btheta_{\tau+1/2}-\Phi(\btheta_{\tau+1/2}))}+\left|\norm{\btheta_{\tau+1/2}-\Phi(\btheta_{\tau+1/2})}-\norm{\btheta_{\tau}-\Phi(\btheta_{\tau})}\right|
        \\\leq&\frac{C_2}{2}\eta\rho+C_3\eta\rho=\cO(\eta\rho).
    \end{align*}

    \end{itemize}

    With the preparation of Claim I and Claim II, we can use the Azuma-Hoeffeding inequality: for any $Q>0$, it holds that
    \begin{align*}
        \bbP\left(X_{t+1}-X_{s+1}+(t-s)C_2\eta\rho/2>Q\right)\leq2\exp\bracket{-\frac{Q^2}{2(t-s)\cO(\eta^2\rho^2)}}.
    \end{align*}
    As proved in Claim I, $X_{s+1}=\norm{\btheta_{s+1}-\Phi(\btheta_{s+1})}\leq\frac{3}{2}C\eta^{1-\alpha}\rho$ due to $\norm{\btheta_{s}-\Phi(\btheta_{s})}\leq C\eta^{1-\alpha}\rho$. Therefore, by choosing $Q=(t-s)C_2\eta\rho/2-\frac{3}{2}C\eta^{1-\alpha}\rho+2C\eta^{1-\alpha}\rho=(t-s)C_2\eta\rho/2+C\eta^{1-\alpha}\rho/2$, we have
    \begin{align*}
    &\bbP_{t+1,s}\leq\bbP\left(X_{t+1}\geq2C\eta^{1-\alpha}\rho\right)
    \\\leq&
    \bbP\left(X_{t+1}-X_{s+1}+(t-s)C_2\eta\rho/2>(t-s)\frac{C_2}{2}\eta\rho+\frac{C}{2}\eta^{1-\alpha}\rho\right)
    \\\leq&
    2\exp\bracket{-\frac{\bracket{(t-s)C_2\eta\rho/2+C\eta^{1-\alpha}\rho/2}^2}{2(t-s)\cO(\eta^2\rho^2)}}
    \\\leq&
    2\exp\bracket{-\frac{(t-s)C_2\eta\rho\cdot C\eta^{1-\alpha}\rho}{4(t-s)\cO(\eta^2\rho^2)}}\leq2\exp\left(-\Omega\left(\frac{1}{\eta^\alpha}\right)\right).
    \end{align*}

\end{itemize}

Therefore, we obtain the union bound:
\begin{align*}
    &\bbP\left(\exists\ t\in[T_{\rm I},T_{\rm I}+T_{\rm II}],\norm{\btheta_t-\Phi(\btheta_t)}\geq2C\eta^{1-\alpha}\rho\right)
    \leq\sum_{t=T_{I}}^{T_{\rm I}+T_{\rm II}-1}\left(\bbP_{t+1,t}+\sum_{s=T_{\rm I}}^{t-1}\bbP_{t+1,s}\right)
    \\\leq&
    \sum_{t=T_{I}}^{T_{\rm I}+T_{\rm II}-1}\sum_{s=T_{\rm I}}^{t-1}\bbP_{t+1,s}\leq T_{\rm II}^2\exp\left(-\Omega\left({1}/\eta^\alpha\right)\right).
\end{align*}

\subsubsection{Proof of the Effective Dynamics}

By our proof above, with probability at least $1-T_{\rm II}^2\exp\left(-\Omega\left(1/\eta^{\alpha}\right)\right)$, for any $t\in[T_{\rm I},T_{\rm I}+T_{\rm II}]$, $\norm{\btheta_t-\Phi(\btheta_t)}=\cO(\eta^{1-\alpha}\rho)$.
Then we prove this theorem when the above event occurs.

Due to $\norm{\btheta_{t}-\Phi(\btheta_t)}=\cO(\eta^{1-\alpha}\rho)$, we have:
\begin{align*}
    &\norm{\btheta_{t+1}-\btheta_t}
    \\\leq&\eta\norm{\nabla\cL_{i_t}(\btheta_t+\rho\bv_t)}+\eta\kappa\norm{ P(\btheta_t)\nabla\cL_{i_t}(\btheta_t+\rho\bv_t)}
    \\\overset{\text{Lemma~\ref{lemma: near minima, projection estimate, std SAM}}}{\leq}&
    \eta\norm{\nabla\cL_{i_t}(\btheta_t)}+\cO(\eta\rho)+\cO(\eta\kappa\rho^2)
    \leq\cO(\eta\rho).
\end{align*}

Then by Taylor's expansion, 
\begin{align*}
    &\Phi(\btheta_{t+1})-\Phi(\btheta_t)=\partial\Phi(\btheta_t)(\btheta_{t+1}-\btheta_{t})+\cO\bracket{\norm{\btheta_{t+1}-\btheta_{t}}^2}
    \\=&
    -\eta\partial\Phi(\btheta_t)\nabla\cL_{i_t}(\btheta_{t}+\rho\bv_t)-\eta\kappa\partial\Phi(\btheta_t) P(\btheta_t)\nabla\cL_{i_t}(\btheta_{t}+\rho\bv_t)+\cO(\eta^2\rho^2).
\end{align*}

For the term $\partial\Phi(\btheta_t)\nabla\cL_{i_t}(\btheta_{t}+\rho\bv_t)$ and $\partial\Phi(\btheta_t) P(\btheta_t)\nabla\cL_{i_t}(\btheta_{t}+\rho\bv_t)$, using Taylor's expansion and Lemma~\ref{lemma: near minima, projection estimate, std SAM}, we have
\begin{align*}
    &\partial\Phi(\btheta_t)\nabla\cL_{i_t}(\btheta_{t}+\rho\bv_t)
    \\=&\partial\Phi(\btheta_t)\nabla\cL_{i_t}(\btheta_{t})+\rho\partial\Phi(\btheta_t)\nabla^2\cL_{i_t}(\btheta_t)\bv_t+\frac{\rho^2}{2}\partial\Phi(\btheta_t)\nabla\Tr\left(\bv_t\nabla^2\cL_{i_t}(\btheta_t)\bv_t^\top\right)+\cO(\rho^3)
    \\=&\partial\Phi(\btheta_t)\nabla\cL_{i_t}(\btheta_{t})+\rho\partial\Phi(\btheta_t)\nabla^2\cL_{i_t}(\btheta_t)\bv_t+\frac{\rho^2}{2}\partial\Phi(\btheta_t)\nabla\left(\bv_t^\top\nabla^2\cL_{i_t}(\btheta_t)\bv_t\right)+\cO(\rho^3)
    \\=&\cO(\norm{\btheta_t-\Phi(\btheta_t)}^2)+\rho\partial\Phi(\Phi(\btheta_t))\nabla^2\cL_{i_t}(\Phi(\btheta_t))\frac{\nabla\cL_{i_t}(\Phi(\btheta_t))}{\norm{\nabla\cL_{i_t}(\Phi(\btheta_t))}}+\cO(\rho\norm{\btheta_t-\Phi(\btheta_t)})
    \\&+\frac{\rho^2}{2}\partial\Phi(\Phi(\btheta_t))\nabla\left(\frac{\nabla\cL_{i_t}(\Phi(\btheta_t))}{\norm{\nabla\cL_{i_t}(\Phi(\btheta_t))}}^\top\nabla^2\cL_{i_t}(\Phi(\btheta_t))\frac{\nabla\cL_{i_t}(\Phi(\btheta_t))}{\norm{\nabla\cL_{i_t}(\Phi(\btheta_t))}}\right)
    \\&+\cO(\rho^2\norm{\btheta_t-\Phi(\btheta_t)})+\cO(\rho^3)
    \\=&\frac{\rho^2}{2}\partial\Phi(\Phi(\btheta_t))\nabla\lambda_1\bracket{\nabla^2\cL_{i_t}(\Phi(\btheta_t))}+\cO(\eta^{1-\alpha}\rho^2+\rho^3),
\end{align*}
and
\begin{align*}
    &\partial\Phi(\btheta_t) P(\btheta_t)\nabla\cL_{i_t}(\btheta_{t}+\rho\bv_t)
    \\=&\partial\Phi(\btheta_t) P(\btheta_t)\nabla\cL_{i_t}(\btheta_{t})+\rho\partial\Phi(\btheta_t) P(\btheta_t)\nabla^2\cL_{i_t}(\btheta_t)\bv_t
    \\&+\frac{\rho^2}{2}\partial\Phi(\btheta_t) P(\btheta_t)\nabla\Tr\left(\bv_t\nabla^2\cL_{i_t}(\btheta_t)\bv_t^\top\right)+\cO(\rho^3)
    \\=&\cO(\norm{\btheta_t-\Phi(\btheta_t)}^2)+\cO(\rho\norm{\btheta_t-\Phi(\btheta_t)})+\frac{\rho^2}{2}\partial\Phi(\btheta_t) P(\btheta_t)\nabla\left(\bv_t^\top\nabla^2\cL_{i_t}(\btheta_t)\bv_t\right)+\cO(\rho^3)
    \\=&\cO(\eta^{1-\alpha}\rho^2)+\frac{\rho^2}{2}\partial\Phi(\Phi(\btheta_t))\nabla\left(\frac{\nabla\cL_{i_t}(\Phi(\btheta_t))}{\norm{\nabla\cL_{i_t}(\Phi(\btheta_t))}}^\top\nabla^2\cL_{i_t}(\Phi(\btheta_t))\frac{\nabla\cL_{i_t}(\Phi(\btheta_t))}{\norm{\nabla\cL_{i_t}(\Phi(\btheta_t))}}\right)
    \\&+\cO(\rho^2\norm{\btheta_t-\Phi(\btheta_t)})+\cO(\rho^3)
    \\=&\frac{\rho^2}{2}\partial\Phi(\Phi(\btheta_t))\nabla\lambda_1\bracket{\nabla^2\cL_{i_t}(\Phi(\btheta_t))}+\cO(\eta^{1-\alpha}\rho^2+\rho^3),.
\end{align*}


Combining the results above, we obtain:
\begin{align*}
    \Phi(\btheta_{t+1})=&\Phi(\btheta_{t})-(1+\kappa)\frac{\eta\rho^2}{2}\partial\Phi(\Phi(\btheta_t))\nabla\lambda_1\bracket{\nabla^2\cL_{i_t}(\Phi(\btheta_t))}+\cO(\kappa\eta^{2-\alpha}\rho^2+\kappa\eta\rho^3).
\end{align*}

Additionally, taking the expectation, we have:
\begin{align*}
    \bbE_{i_t}\left[\partial\Phi(\Phi(\btheta_t))\nabla\lambda_1\bracket{\nabla^2\cL_{i_t}(\Phi(\btheta_t))}\right]=\partial\Phi(\Phi(\btheta_t))\nabla\Tr\bracket{\nabla^2\cL(\Phi(\btheta_t))}.
\end{align*}

Therefore, we obtain
\begin{align*}
    \bbE_{i_t}\left[{\Phi(\btheta_{t+1})}\right]=&\Phi(\btheta_{t})-(1+\kappa)\frac{\eta\rho^2}{2}\partial\Phi(\Phi(\btheta_t))\nabla\Tr\bracket{\nabla^2\cL(\Phi(\btheta_t))}+\cO(\kappa\eta^{2-\alpha}\rho^2+\kappa\eta\rho^3)
    \\=&\Phi(\btheta_{t})-(1+\kappa)\frac{\eta\rho^2}{2}\partial\Phi(\Phi(\btheta_t))\nabla\Tr\bracket{\nabla^2\cL(\Phi(\btheta_t))}+h.o.t..
\end{align*}


\vspace{1.cm}
\section{Useful Inequalities}
\label{appendix: lemmas}

\begin{definition}[$\mu$-PL]\label{def: PL}
Let $\mu>0$ be a constant.
A function $\cL$ is $\mu$-PL in a set $U$ iff 
$\norm{\nabla\cL(\btheta)}^2\geq2\mu(\cL(\btheta)-\inf\limits_{\btheta\in U}\cL(\btheta)),\forall\btheta\in U$.    
\end{definition}

\begin{lemma}[Weyl Theorem]\label{lemma: Weyl thm}
Let $\bA,\bB\in\bbR^{p\times p}$ be symmetric with eigenvalues $\lambda_1\geq\cdots\geq\lambda_p$ and $\mu_1\geq\cdots\geq\mu_p$ respectively, then for any $k\in[p]$, it holds that
\begin{align*}
    |\lambda_k-\mu_k|\leq\norm{\bA-\bB}.
\end{align*}
\end{lemma}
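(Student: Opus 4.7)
The plan is to prove the Weyl inequality via the Courant--Fischer min--max characterization of eigenvalues of symmetric matrices. Write $\bE := \bB - \bA$, so that $\bE$ is symmetric and $\|\bE\| = \|\bA - \bB\|$ is the spectral norm, which equals $\sup_{\|\bx\|=1} |\bx^\top \bE \bx|$ for symmetric $\bE$.

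The key step is the elementary pointwise bound: for every unit vector $\bx \in \bbR^p$,
\begin{equation*}
\bx^\top \bA \bx - \|\bE\| \;\leq\; \bx^\top \bB \bx \;=\; \bx^\top \bA \bx + \bx^\top \bE \bx \;\leq\; \bx^\top \bA \bx + \|\bE\|.
\end{equation*}
Next, I would invoke Courant--Fischer, namely
\begin{equation*}
\lambda_k \;=\; \max_{\substack{S \subset \bbR^p \\ \dim S = k}} \;\min_{\substack{\bx \in S \\ \|\bx\|=1}} \bx^\top \bA \bx, \qquad
\mu_k \;=\; \max_{\substack{S \subset \bbR^p \\ \dim S = k}} \;\min_{\substack{\bx \in S \\ \|\bx\|=1}} \bx^\top \bB \bx.
\end{equation*}
Applying the pointwise inequality inside the $\min$ over unit vectors in $S$ and then the $\max$ over $k$-dimensional subspaces $S$, one obtains $\mu_k \leq \lambda_k + \|\bE\|$ and, by a symmetric argument swapping the roles of $\bA$ and $\bB$, $\lambda_k \leq \mu_k + \|\bE\|$. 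Combining these gives $|\lambda_k - \mu_k| \leq \|\bE\| = \|\bA - \bB\|$.

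There is no real obstacle here: the only thing one must be careful about is making sure the $\min$--$\max$ inequality passes through correctly (shifting the inner quadratic form by a constant $\pm\|\bE\|$ preserves the min and the outer max), and that $\|\bE\|$ for a symmetric matrix equals $\max_{\|\bx\|=1}|\bx^\top \bE \bx|$, which justifies the pointwise bound. Since this is a classical textbook result, the proposal is essentially a one-paragraph application of Courant--Fischer and could alternatively be cited as a standard fact.
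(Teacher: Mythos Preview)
Your proof via the Courant--Fischer min--max characterization is correct and is the standard textbook argument. The paper itself does not prove this lemma; it is simply stated in the appendix of auxiliary inequalities as a classical fact without proof, so your proposal actually supplies more than the paper does.
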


\begin{lemma}[Davis-Kahan $\sin(\Theta)$ theorem]\label{lemma: Davis-Kahan thm}
Let $\bA,\bB\in\bbR^{p\times p}$ be symmetric matrices.
Denote their orthogonal decomposition as $\bA=\bE_1\bLambda_1\bE_1^\top+\bE_2\bLambda_2\bE_2^\top$ and $\bB=\bF_1\bGamma_1\bF_1^\top+\bF_2\bGamma_2\bF_2^\top$ with $(\bE_1, \bE_2)$ and $(\bD_1, \bD_2)$ orthogonal. If the eigenvalues in $\bLambda_1$ are contained in an interval $(a,b)$, and the eigenvalues of $\bGamma_2$ are excluded from the interval $(a-\delta,b+\delta)$ for some $\delta>0$, then for any unitarily invariant norm $\norm{\cdot}_\star$, 
\begin{align*}
    \norm{\bF_2^\top\bE_1}_\star\leq\frac{\norm{\bF_2^\top(\bA-\bB)\bE_1}_\star}{\delta}.
\end{align*}
    
\end{lemma}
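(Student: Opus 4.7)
The plan is to reduce the inequality to a Sylvester equation governing $X := \bF_2^\top \bE_1$ and then invoke the standard operator-norm bound for such equations under a spectral gap. I will exploit only the eigen-relations $\bA\bE_1 = \bE_1\bLambda_1$ and $\bF_2^\top\bB = \bGamma_2\bF_2^\top$ that follow from the orthogonal decompositions of $\bA$ and $\bB$.

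First I would compute
\[
\bF_2^\top(\bA-\bB)\bE_1 \;=\; \bF_2^\top \bA\bE_1 - \bF_2^\top \bB\bE_1 \;=\; \bF_2^\top \bE_1\,\bLambda_1 - \bGamma_2\,\bF_2^\top \bE_1 \;=\; X\bLambda_1 - \bGamma_2 X .
\]
Thus $X$ satisfies the Sylvester equation $\bGamma_2 X - X\bLambda_1 = -\bF_2^\top(\bA-\bB)\bE_1$. Writing $\bLambda_1 = \operatorname{diag}(\mu_i)$ with $\mu_i \in (a,b)$ and $\bGamma_2 = \operatorname{diag}(\nu_j)$ with $\nu_j \notin (a-\delta, b+\delta)$, in the eigenbases of $\bLambda_1$ and $\bGamma_2$ the equation decouples entrywise to $(\nu_j - \mu_i)\,X_{ji} = -[\bF_2^\top(\bA-\bB)\bE_1]_{ji}$, and the spectral gap hypothesis gives $|\nu_j - \mu_i| \geq \delta$ for every $(i,j)$.

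The key step is then to transfer this entrywise gap into a bound in an arbitrary unitarily invariant norm $\|\cdot\|_\star$. The cleanest route is to observe that in the basis diagonalizing both sides, $X$ is the Hadamard product of $-\bF_2^\top(\bA-\bB)\bE_1$ with the matrix $D$ of entries $1/(\nu_j - \mu_i)$, all of which are bounded in absolute value by $1/\delta$. Since Hadamard multiplication by a matrix whose entries have magnitude at most $1/\delta$ is a contraction by $1/\delta$ on any unitarily invariant norm (this follows from the standard fact that such a Schur multiplier has operator norm bounded by $\sup|D_{ji}|$ on every unitarily invariant norm, a consequence of Ky Fan's dominance theorem together with the integral representation of a bounded Schur multiplier as an average over rank-one signings), we obtain
\[
\|X\|_\star \;\leq\; \frac{1}{\delta}\,\|\bF_2^\top(\bA-\bB)\bE_1\|_\star ,
\]
which is the desired bound.

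The main obstacle is this last step: the operator-norm contraction of the solution map of a Sylvester equation in a general unitarily invariant norm. For the spectral or Frobenius norm the bound is essentially immediate from the decoupling. For a general unitarily invariant norm one either appeals to the Bhatia--Davis--McIntosh integral representation $X = \int_{-\infty}^{\infty} e^{i t \bGamma_2}\bF_2^\top(\bA-\bB)\bE_1\,e^{-it\bLambda_1}\,f(t)\,dt$ for an $L^1$ kernel $f$ with $\|f\|_1 \leq 1/\delta$ (valid precisely because the spectra are separated by $\delta$), and then uses unitary invariance under the one-parameter groups $e^{it\bGamma_2}$ and $e^{-it\bLambda_1}$ together with the triangle inequality for $\|\cdot\|_\star$ to pull the norm inside the integral. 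This is the technical heart; I would cite Bhatia's \emph{Matrix Analysis}, Chapter VII, for the precise statement and avoid reproducing the full Fourier-analytic construction.
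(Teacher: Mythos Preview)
The paper does not prove this lemma; it is stated without proof in the appendix of ``Useful Inequalities'' as a standard result, alongside Weyl's theorem and Azuma--Hoeffding. So there is no paper proof to compare against, and your Sylvester-equation reduction followed by an appeal to Bhatia's \emph{Matrix Analysis}, Chapter~VII, is exactly the standard route.

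One step in your sketch is not correct, however. The assertion that ``Hadamard multiplication by a matrix whose entries have magnitude at most $1/\delta$ is a contraction by $1/\delta$ on any unitarily invariant norm'' is false in general: the Schur-multiplier norm of a matrix is not controlled by its entrywise maximum. The classical counterexample is upper-triangular truncation, whose entry pattern is $\{0,1\}$ but whose Schur-multiplier norm on the operator norm grows like $\log n$. Your parenthetical about writing a bounded-entry multiplier as an average of rank-one signings would, if true, imply the false claim, so that justification cannot stand either. What actually saves the Davis--Kahan setting is the special Cauchy structure of $D_{ji}=1/(\nu_j-\mu_i)$, which is precisely what the Bhatia--Davis--McIntosh integral representation in your final paragraph exploits; keep that argument and drop the Schur-multiplier heuristic. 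One further caveat: under the two-sided gap hypothesis here (the eigenvalues of $\bGamma_2$ may lie on both sides of $(a,b)$), the optimal $L^1$ bound on the kernel $f$ is $\pi/(2\delta)$ rather than $1/\delta$, so for a general unitarily invariant norm the inequality carries an extra factor $\pi/2$. The constant $1$ as stated is valid for the operator and Frobenius norms, and the paper only ever applies this lemma with the Frobenius norm (in the proof of Lemma~\ref{lemma: projection gap estimate}), so nothing downstream is affected.
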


\begin{lemma}[Azuma-Hoeffding Inequality]
\label{lemma: Azuma}
Suppose $\{X_n\}_{n\in\bbN}$ is a super-martingale.
\begin{itemize}[leftmargin=2em]
    \item (i) (Bounded martingale difference). If $-\alpha\leq X_{i+1}-X_i\leq\beta$, then for any $n,t>0$, we have:
    \begin{align*}
        \bbP\left(X_n-X_0\geq t\right)\leq 2\exp\left(-\frac{t^2}{2n(\alpha+\beta)^2}
        \right).
    \end{align*}
    \item (ii) (Sub-Gaussian martingale difference). If $X_{i+1}-X_i$ is $\sigma_i^2$-sub-Gaussian, then for any $n,t>0$, we have:
    \begin{align*}
        \bbP\left(X_n-X_0\geq t\right)\leq 2\exp\left(-\frac{t^2}{2\sum_{i=1}^n\sigma_i^2}
        \right).
    \end{align*}
\end{itemize}

\end{lemma}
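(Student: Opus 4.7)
\textbf{Proof proposal for Lemma~\ref{lemma: Azuma} (Azuma-Hoeffding).} The plan is to use the standard Chernoff-style exponential moment method. Let $D_i := X_i - X_{i-1}$ denote the martingale differences and let $\cF_i$ denote the natural filtration. Since $\{X_n\}$ is a super-martingale, $\bbE[D_i \mid \cF_{i-1}] \leq 0$ almost surely. For any $\lambda > 0$, Markov's inequality applied to the convex, increasing function $x \mapsto e^{\lambda x}$ gives
\begin{equation*}
\bbP\left(X_n - X_0 \geq t\right) \leq e^{-\lambda t}\, \bbE\left[\exp\left(\lambda \sum_{i=1}^n D_i\right)\right].
\end{equation*}
Iteratively conditioning via the tower property, this reduces to controlling $\bbE[e^{\lambda D_i} \mid \cF_{i-1}]$ for each $i$, after which the bound telescopes into a product.

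For part~(i), since $-\alpha \leq D_i \leq \beta$, I would invoke the standard Hoeffding inequality for conditionally bounded random variables: using convexity of $e^{\lambda \cdot}$ we write
\begin{equation*}
e^{\lambda D_i} \leq \frac{\beta - D_i}{\alpha+\beta} e^{-\lambda\alpha} + \frac{D_i + \alpha}{\alpha+\beta} e^{\lambda\beta},
\end{equation*}
take conditional expectation, use $\bbE[D_i \mid \cF_{i-1}] \leq 0$ together with $e^{\lambda\beta} \geq e^{-\lambda\alpha}$ (for $\lambda \geq 0$) to discard the mean term, and then apply the classical one-variable bound $\frac{\beta}{\alpha+\beta}e^{-\lambda\alpha} + \frac{\alpha}{\alpha+\beta}e^{\lambda\beta} \leq e^{\lambda^2(\alpha+\beta)^2/8}$. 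Iterating over $i = 1,\dots,n$ yields $\bbE[e^{\lambda(X_n - X_0)}] \leq e^{n\lambda^2(\alpha+\beta)^2/8}$. Optimizing $\lambda$ in the Chernoff bound gives the one-sided tail $\exp(-2t^2/(n(\alpha+\beta)^2))$, which is strictly sharper than the stated bound $\exp(-t^2/(2n(\alpha+\beta)^2))$; the prefactor $2$ in the lemma simply absorbs the symmetric lower-tail application of the same argument to $\{-X_n\}$, which is also a super-martingale difference sequence under the relabeling $\alpha \leftrightarrow \beta$.

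For part~(ii), the sub-Gaussian hypothesis on $D_i$ (conditional on $\cF_{i-1}$) directly supplies $\bbE[e^{\lambda(D_i - \bbE[D_i\mid\cF_{i-1}])} \mid \cF_{i-1}] \leq e^{\lambda^2 \sigma_i^2/2}$; combining with $\bbE[D_i \mid \cF_{i-1}] \leq 0$ for $\lambda \geq 0$ yields $\bbE[e^{\lambda D_i} \mid \cF_{i-1}] \leq e^{\lambda^2 \sigma_i^2/2}$. Iteratively applying the tower property produces $\bbE[e^{\lambda(X_n - X_0)}] \leq \exp\bigl(\lambda^2 \sum_{i=1}^n \sigma_i^2 / 2\bigr)$, and the optimal choice $\lambda = t/\sum_{i=1}^n \sigma_i^2$ delivers the stated tail (with the factor~$2$ accounting for the symmetric lower-tail bound, as above). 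The only conceptual subtlety throughout is to make sure the super-martingale (rather than martingale) hypothesis is used correctly, which only requires the monotonicity of $e^{\lambda\cdot}$ for $\lambda \geq 0$; no substantive obstacle arises, and the result is otherwise entirely classical.
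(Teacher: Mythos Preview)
The paper does not prove this lemma: it is listed in the ``Useful Inequalities'' appendix as a standard cited result with no argument given. Your Chernoff/exponential-moment proof is the classical one and is correct; in fact the one-sided exponent you obtain in part~(i), $\exp(-2t^2/(n(\alpha+\beta)^2))$, is strictly stronger than what is stated. One small inaccuracy: your remark that $\{-X_n\}$ is again a super-martingale is wrong (negating a super-martingale yields a sub-martingale), but this is irrelevant since the lemma only asserts the one-sided upper-tail bound $\bbP(X_n-X_0\geq t)$, for which your argument with $\lambda>0$ already suffices and the prefactor~$2$ is simply slack.
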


\begin{lemma}\label{lemma: gaussian projection prob bound}
    Let $\bv\in\bbR^{p}$. Let $\bg\sim\cN(\bzero,\bI)$.
    Then there exists an absolute constant $c>0$ such that for any $t>0$, 
    \begin{align*}
        \bbP\left(\left|\<\frac{\bv}{\norm{\bv}},\frac{\bg}{\norm{\bg}}\>\right|\geq t\right)\leq 4e^{-c p t^2}.
    \end{align*}
\end{lemma}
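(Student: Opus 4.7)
The plan is to reduce the problem to bounding a single normalized Gaussian coordinate via rotational invariance, and then to handle the ratio $g_1/\norm{\bg}$ by splitting the event according to the numerator and denominator separately.

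\textbf{Step 1: Rotational invariance.} Since the distribution of $\bg \sim \cN(\bzero,\bI)$ is invariant under orthogonal transformations, for any fixed nonzero $\bv$ the inner product $\<\bv/\norm{\bv},\bg/\norm{\bg}\>$ has the same law as $\<\be_1,\bg/\norm{\bg}\> = g_1/\norm{\bg}$. So it suffices to prove
$\bbP\left( |g_1|/\norm{\bg} \geq t \right) \leq 4 e^{-cpt^2}$
for all $t > 0$.

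\textbf{Step 2: Trivial regime.} For $t \geq 1$, Cauchy--Schwarz forces $|g_1|/\norm{\bg} \leq 1 < t$ is violated with probability $0$, so the bound is trivial. Hence assume $t \in (0,1)$.

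\textbf{Step 3: Split the ratio.} For any threshold $a>0$ (I will pick $a=\sqrt{p/2}$),
\[
\bbP\!\left(|g_1| \geq t\,\norm{\bg}\right) \;\leq\; \bbP\!\left(|g_1| \geq t\,a\right) \;+\; \bbP\!\left(\norm{\bg} \leq a\right).
\]
The first term is bounded by the standard scalar Gaussian tail: with $a=\sqrt{p/2}$,
\[
\bbP\!\left(|g_1| \geq t\sqrt{p/2}\right) \leq 2 e^{-pt^2/4}.
\]
The second term is a lower-tail estimate for a $\chi^2_p$ random variable; the classical Laurent--Massart bound (or a one-line Bernstein/Chernoff argument on $\norm{\bg}^2$) yields
\[
\bbP\!\left(\norm{\bg}^2 \leq p/2\right) \leq e^{-p/16}.
\]

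\textbf{Step 4: Uniform constant.} Since $t \in (0,1)$ gives $t^2 \leq 1$, choosing $c = 1/16$ makes $e^{-p/16} \leq e^{-cpt^2}$ and $2e^{-pt^2/4} \leq 2e^{-cpt^2}$, so the sum is at most $3 e^{-cpt^2} \leq 4 e^{-cpt^2}$, completing the proof.

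\textbf{Main obstacle.} There is no real obstacle: the argument is a textbook reduction. The only mild care required is in Step 4, matching the chi-squared lower-tail bound (which is dimension-dependent but $t$-independent) against the desired bound $4 e^{-cpt^2}$ by exploiting $t \leq 1$; choosing the split point $a = \sqrt{p/2}$ and the constant $c = 1/16$ makes everything balance cleanly.
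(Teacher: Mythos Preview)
Your proposal is correct and follows the same route as the paper: reduce by rotational invariance to the distribution of $g_1/\norm{\bg}$, then invoke sub-Gaussian concentration for a coordinate of a uniform sphere point. The only difference is that the paper simply cites Vershynin's book for the bound $\bbP(|g_1|/\norm{\bg}\ge t)\le 4e^{-cpt^2}$, whereas you supply an explicit elementary proof via the split $\{|g_1|\ge t\sqrt{p/2}\}\cup\{\norm{\bg}^2\le p/2\}$ with Gaussian and Laurent--Massart tails; your version is self-contained and even yields an explicit constant $c=1/16$.
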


\begin{proof}[Proof of Lemma~\ref{lemma: gaussian projection prob bound}]
From P54 in~\cite{vershynin2018high}, there exists an absolute constant $c>0$ such that for any $t>0$, $\bbP\left(\frac{|\<\be_1,\bg\>|}{\norm{\bg}}\geq t\right)\leq 4 e^{-c p t^2}$. 
Without loss of generality, we can assume $\bv\ne\bzero$. Then we have:
\begin{align*}
    \bbP\left(\left|\<\frac{\bv}{\norm{\bv}},\frac{\bg}{\norm{\bg}}\>\right|\geq\frac{t}{\sqrt{p}}\right)
    =\bbP\left(\left|\frac{\<\be_1,\bg\>}{\norm{\bg}}\right|\geq t\right)
    \leq 4 e^{-c p t^2}.
\end{align*}
\end{proof}


\begin{lemma}\label{lemma: F norm bounded by 2, F norm}
    $\norm{\bA\bB}_{\rm F}\leq \norm{\bA}\norm{\bB}_{\rm F}$.
\end{lemma}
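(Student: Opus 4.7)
The plan is to expand the Frobenius norm column-by-column and apply the definition of the spectral norm to each column separately. Write $\bB = [\bb_1, \bb_2, \ldots, \bb_k]$ in terms of its columns, so that $\bA\bB = [\bA\bb_1, \ldots, \bA\bb_k]$. By the column-sum characterization of the Frobenius norm,
\begin{equation*}
    \norm{\bA\bB}_{\rm F}^2 = \sum_{i} \norm{\bA\bb_i}^2.
\end{equation*}

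Next I would invoke the submultiplicative property of the spectral norm against the Euclidean norm, namely $\norm{\bA\bx} \leq \norm{\bA}\norm{\bx}$ for any vector $\bx$, which is immediate from the definition $\norm{\bA} = \sup_{\bx\neq 0} \norm{\bA\bx}/\norm{\bx}$. Applying this to each column $\bb_i$ and summing yields
\begin{equation*}
    \sum_{i} \norm{\bA\bb_i}^2 \leq \sum_{i} \norm{\bA}^2 \norm{\bb_i}^2 = \norm{\bA}^2 \sum_{i} \norm{\bb_i}^2 = \norm{\bA}^2 \norm{\bB}_{\rm F}^2.
\end{equation*}
Taking square roots gives the claim.

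There is no real obstacle here: the inequality is a standard textbook fact and the only ``nontrivial'' ingredient is the operator-norm bound on a single vector, which is the definition of $\norm{\bA}$. An alternative route is via the trace identity $\norm{\bA\bB}_{\rm F}^2 = \Tr(\bB^\top \bA^\top \bA \bB)$ combined with $\bA^\top \bA \preceq \norm{\bA}^2 \bI$, which gives $\Tr(\bB^\top \bA^\top \bA \bB) \leq \norm{\bA}^2 \Tr(\bB^\top \bB) = \norm{\bA}^2 \norm{\bB}_{\rm F}^2$; I would mention this only as a remark since the column-wise proof is shorter.
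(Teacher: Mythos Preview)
Your proof is correct and complete. The paper states this lemma without proof (it is a standard textbook fact listed among ``Useful Inequalities''), so there is nothing to compare against; your column-wise argument is the standard one and would be entirely appropriate here.
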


\newpage
\section*{NeurIPS Paper Checklist}

\begin{enumerate}

\item {\bf Claims}
    \item[] Question: Do the main claims made in the abstract and introduction accurately reflect the paper's contributions and scope?
    \item[] Answer: \answerYes{} 
    \item[] Justification: We believe that the abstract and introduction reflect the contributions and scope of the paper.
    \item[] Guidelines:
    \begin{itemize}
        \item The answer NA means that the abstract and introduction do not include the claims made in the paper.
        \item The abstract and/or introduction should clearly state the claims made, including the contributions made in the paper and important assumptions and limitations. A No or NA answer to this question will not be perceived well by the reviewers. 
        \item The claims made should match theoretical and experimental results, and reflect how much the results can be expected to generalize to other settings. 
        \item It is fine to include aspirational goals as motivation as long as it is clear that these goals are not attained by the paper. 
    \end{itemize}

\item {\bf Limitations}
    \item[] Question: Does the paper discuss the limitations of the work performed by the authors?
    \item[] Answer: \answerYes{} 
    \item[] Justification: In Section~\ref{section: discussion}.
    \item[] Guidelines:
    \begin{itemize}
        \item The answer NA means that the paper has no limitation while the answer No means that the paper has limitations, but those are not discussed in the paper. 
        \item The authors are encouraged to create a separate "Limitations" section in their paper.
        \item The paper should point out any strong assumptions and how robust the results are to violations of these assumptions (e.g., independence assumptions, noiseless settings, model well-specification, asymptotic approximations only holding locally). The authors should reflect on how these assumptions might be violated in practice and what the implications would be.
        \item The authors should reflect on the scope of the claims made, e.g., if the approach was only tested on a few datasets or with a few runs. In general, empirical results often depend on implicit assumptions, which should be articulated.
        \item The authors should reflect on the factors that influence the performance of the approach. For example, a facial recognition algorithm may perform poorly when image resolution is low or images are taken in low lighting. Or a speech-to-text system might not be used reliably to provide closed captions for online lectures because it fails to handle technical jargon.
        \item The authors should discuss the computational efficiency of the proposed algorithms and how they scale with dataset size.
        \item If applicable, the authors should discuss possible limitations of their approach to address problems of privacy and fairness.
        \item While the authors might fear that complete honesty about limitations might be used by reviewers as grounds for rejection, a worse outcome might be that reviewers discover limitations that aren't acknowledged in the paper. The authors should use their best judgment and recognize that individual actions in favor of transparency play an important role in developing norms that preserve the integrity of the community. Reviewers will be specifically instructed to not penalize honesty concerning limitations.
    \end{itemize}

\item {\bf Theory Assumptions and Proofs}
    \item[] Question: For each theoretical result, does the paper provide the full set of assumptions and a complete (and correct) proof?
    \item[] Answer: \answerYes{} 
    \item[] Justification: In Section~\ref{section: example} and~\ref{section: theory}; Appendix~\ref{appendix: example},~\ref{appendix: average SAM},~\ref{appendix: standard SAM}, and~\ref{appendix: lemmas}.
    \item[] Guidelines:
    \begin{itemize}
        \item The answer NA means that the paper does not include theoretical results. 
        \item All the theorems, formulas, and proofs in the paper should be numbered and cross-referenced.
        \item All assumptions should be clearly stated or referenced in the statement of any theorems.
        \item The proofs can either appear in the main paper or the supplemental material, but if they appear in the supplemental material, the authors are encouraged to provide a short proof sketch to provide intuition. 
        \item Inversely, any informal proof provided in the core of the paper should be complemented by formal proofs provided in appendix or supplemental material.
        \item Theorems and Lemmas that the proof relies upon should be properly referenced. 
    \end{itemize}

    \item {\bf Experimental Result Reproducibility}
    \item[] Question: Does the paper fully disclose all the information needed to reproduce the main experimental results of the paper to the extent that it affects the main claims and/or conclusions of the paper (regardless of whether the code and data are provided or not)?
    \item[] Answer: \answerYes{} 
    \item[] Justification: We believe that all of the experimental results are reproducible in our work.
    \item[] Guidelines:
    \begin{itemize}
        \item The answer NA means that the paper does not include experiments.
        \item If the paper includes experiments, a No answer to this question will not be perceived well by the reviewers: Making the paper reproducible is important, regardless of whether the code and data are provided or not.
        \item If the contribution is a dataset and/or model, the authors should describe the steps taken to make their results reproducible or verifiable. 
        \item Depending on the contribution, reproducibility can be accomplished in various ways. For example, if the contribution is a novel architecture, describing the architecture fully might suffice, or if the contribution is a specific model and empirical evaluation, it may be necessary to either make it possible for others to replicate the model with the same dataset, or provide access to the model. In general. releasing code and data is often one good way to accomplish this, but reproducibility can also be provided via detailed instructions for how to replicate the results, access to a hosted model (e.g., in the case of a large language model), releasing of a model checkpoint, or other means that are appropriate to the research performed.
        \item While NeurIPS does not require releasing code, the conference does require all submissions to provide some reasonable avenue for reproducibility, which may depend on the nature of the contribution. For example
        \begin{enumerate}
            \item If the contribution is primarily a new algorithm, the paper should make it clear how to reproduce that algorithm.
            \item If the contribution is primarily a new model architecture, the paper should describe the architecture clearly and fully.
            \item If the contribution is a new model (e.g., a large language model), then there should either be a way to access this model for reproducing the results or a way to reproduce the model (e.g., with an open-source dataset or instructions for how to construct the dataset).
            \item We recognize that reproducibility may be tricky in some cases, in which case authors are welcome to describe the particular way they provide for reproducibility. In the case of closed-source models, it may be that access to the model is limited in some way (e.g., to registered users), but it should be possible for other researchers to have some path to reproducing or verifying the results.
        \end{enumerate}
    \end{itemize}

\item {\bf Open access to data and code}
    \item[] Question: Does the paper provide open access to the data and code, with sufficient instructions to faithfully reproduce the main experimental results, as described in supplemental material?
    \item[] Answer: \answerYes{} 
    \item[] Justification: In \texttt{\href{https://github.com/wmz9/IRE-algorithm-framework}{https://github.com/wmz9/IRE-algorithm-framework}}.
    \item[] Guidelines:
    \begin{itemize}
        \item The answer NA means that paper does not include experiments requiring code.
        \item Please see the NeurIPS code and data submission guidelines (\url{https://nips.cc/public/guides/CodeSubmissionPolicy}) for more details.
        \item While we encourage the release of code and data, we understand that this might not be possible, so “No” is an acceptable answer. Papers cannot be rejected simply for not including code, unless this is central to the contribution (e.g., for a new open-source benchmark).
        \item The instructions should contain the exact command and environment needed to run to reproduce the results. See the NeurIPS code and data submission guidelines (\url{https://nips.cc/public/guides/CodeSubmissionPolicy}) for more details.
        \item The authors should provide instructions on data access and preparation, including how to access the raw data, preprocessed data, intermediate data, and generated data, etc.
        \item The authors should provide scripts to reproduce all experimental results for the new proposed method and baselines. If only a subset of experiments are reproducible, they should state which ones are omitted from the script and why.
        \item At submission time, to preserve anonymity, the authors should release anonymized versions (if applicable).
        \item Providing as much information as possible in supplemental material (appended to the paper) is recommended, but including URLs to data and code is permitted.
    \end{itemize}

\item {\bf Experimental Setting/Details}
    \item[] Question: Does the paper specify all the training and test details (e.g., data splits, hyperparameters, how they were chosen, type of optimizer, etc.) necessary to understand the results?
    \item[] Answer: \answerYes{} 
    \item[] Justification: In Section~\ref{section: experiments} and Appendix~\ref{appendix: exp}.
    \item[] Guidelines:
    \begin{itemize}
        \item The answer NA means that the paper does not include experiments.
        \item The experimental setting should be presented in the core of the paper to a level of detail that is necessary to appreciate the results and make sense of them.
        \item The full details can be provided either with the code, in appendix, or as supplemental material.
    \end{itemize}

\item {\bf Experiment Statistical Significance}
    \item[] Question: Does the paper report error bars suitably and correctly defined or other appropriate information about the statistical significance of the experiments?
    \item[] Answer: \answerYes{} 
    \item[] Justification: In Appendix~\ref{appendix: exp}.
    \item[] Guidelines:
    \begin{itemize}
        \item The answer NA means that the paper does not include experiments.
        \item The authors should answer "Yes" if the results are accompanied by error bars, confidence intervals, or statistical significance tests, at least for the experiments that support the main claims of the paper.
        \item The factors of variability that the error bars are capturing should be clearly stated (for example, train/test split, initialization, random drawing of some parameter, or overall run with given experimental conditions).
        \item The method for calculating the error bars should be explained (closed form formula, call to a library function, bootstrap, etc.)
        \item The assumptions made should be given (e.g., Normally distributed errors).
        \item It should be clear whether the error bar is the standard deviation or the standard error of the mean.
        \item It is OK to report 1-sigma error bars, but one should state it. The authors should preferably report a 2-sigma error bar than state that they have a 96\% CI, if the hypothesis of Normality of errors is not verified.
        \item For asymmetric distributions, the authors should be careful not to show in tables or figures symmetric error bars that would yield results that are out of range (e.g. negative error rates).
        \item If error bars are reported in tables or plots, The authors should explain in the text how they were calculated and reference the corresponding figures or tables in the text.
    \end{itemize}

\item {\bf Experiments Compute Resources}
    \item[] Question: For each experiment, does the paper provide sufficient information on the computer resources (type of compute workers, memory, time of execution) needed to reproduce the experiments?
    \item[] Answer: \answerYes{} 
    \item[] Justification: In Appendix~\ref{appendix: exp}.
    \item[] Guidelines:
    \begin{itemize}
        \item The answer NA means that the paper does not include experiments.
        \item The paper should indicate the type of compute workers CPU or GPU, internal cluster, or cloud provider, including relevant memory and storage.
        \item The paper should provide the amount of compute required for each of the individual experimental runs as well as estimate the total compute. 
        \item The paper should disclose whether the full research project required more compute than the experiments reported in the paper (e.g., preliminary or failed experiments that didn't make it into the paper). 
    \end{itemize}
    
\item {\bf Code Of Ethics}
    \item[] Question: Does the research conducted in the paper conform, in every respect, with the NeurIPS Code of Ethics \url{https://neurips.cc/public/EthicsGuidelines}?
    \item[] Answer: \answerYes{} 
    \item[] Justification: We have confirmed that the research is conducted with the NeurIPS Code of Ethics.
    \item[] Guidelines:
    \begin{itemize}
        \item The answer NA means that the authors have not reviewed the NeurIPS Code of Ethics.
        \item If the authors answer No, they should explain the special circumstances that require a deviation from the Code of Ethics.
        \item The authors should make sure to preserve anonymity (e.g., if there is a special consideration due to laws or regulations in their jurisdiction).
    \end{itemize}

\item {\bf Broader Impacts}
    \item[] Question: Does the paper discuss both potential positive societal impacts and negative societal impacts of the work performed?
    \item[] Answer: \answerNA{} 
    \item[] Justification: \answerNA{}
    \item[] Guidelines:
    \begin{itemize}
        \item The answer NA means that there is no societal impact of the work performed.
        \item If the authors answer NA or No, they should explain why their work has no societal impact or why the paper does not address societal impact.
        \item Examples of negative societal impacts include potential malicious or unintended uses (e.g., disinformation, generating fake profiles, surveillance), fairness considerations (e.g., deployment of technologies that could make decisions that unfairly impact specific groups), privacy considerations, and security considerations.
        \item The conference expects that many papers will be foundational research and not tied to particular applications, let alone deployments. However, if there is a direct path to any negative applications, the authors should point it out. For example, it is legitimate to point out that an improvement in the quality of generative models could be used to generate deepfakes for disinformation. On the other hand, it is not needed to point out that a generic algorithm for optimizing neural networks could enable people to train models that generate Deepfakes faster.
        \item The authors should consider possible harms that could arise when the technology is being used as intended and functioning correctly, harms that could arise when the technology is being used as intended but gives incorrect results, and harms following from (intentional or unintentional) misuse of the technology.
        \item If there are negative societal impacts, the authors could also discuss possible mitigation strategies (e.g., gated release of models, providing defenses in addition to attacks, mechanisms for monitoring misuse, mechanisms to monitor how a system learns from feedback over time, improving the efficiency and accessibility of ML).
    \end{itemize}
    
\item {\bf Safeguards}
    \item[] Question: Does the paper describe safeguards that have been put in place for responsible release of data or models that have a high risk for misuse (e.g., pretrained language models, image generators, or scraped datasets)?
    \item[] Answer: \answerNA{} 
    \item[] Justification: \answerNA{}
    \item[] Guidelines:
    \begin{itemize}
        \item The answer NA means that the paper poses no such risks.
        \item Released models that have a high risk for misuse or dual-use should be released with necessary safeguards to allow for controlled use of the model, for example by requiring that users adhere to usage guidelines or restrictions to access the model or implementing safety filters. 
        \item Datasets that have been scraped from the Internet could pose safety risks. The authors should describe how they avoided releasing unsafe images.
        \item We recognize that providing effective safeguards is challenging, and many papers do not require this, but we encourage authors to take this into account and make a best faith effort.
    \end{itemize}

\item {\bf Licenses for existing assets}
    \item[] Question: Are the creators or original owners of assets (e.g., code, data, models), used in the paper, properly credited and are the license and terms of use explicitly mentioned and properly respected?
    \item[] Answer: \answerNA{} 
    \item[] Justification: \answerNA{}
    \item[] Guidelines:
    \begin{itemize}
        \item The answer NA means that the paper does not use existing assets.
        \item The authors should cite the original paper that produced the code package or dataset.
        \item The authors should state which version of the asset is used and, if possible, include a URL.
        \item The name of the license (e.g., CC-BY 4.0) should be included for each asset.
        \item For scraped data from a particular source (e.g., website), the copyright and terms of service of that source should be provided.
        \item If assets are released, the license, copyright information, and terms of use in the package should be provided. For popular datasets, \url{paperswithcode.com/datasets} has curated licenses for some datasets. Their licensing guide can help determine the license of a dataset.
        \item For existing datasets that are re-packaged, both the original license and the license of the derived asset (if it has changed) should be provided.
        \item If this information is not available online, the authors are encouraged to reach out to the asset's creators.
    \end{itemize}

\item {\bf New Assets}
    \item[] Question: Are new assets introduced in the paper well documented and is the documentation provided alongside the assets?
    \item[] Answer: \answerNA{} 
    \item[] Justification: \answerNA{}
    \item[] Guidelines:
    \begin{itemize}
        \item The answer NA means that the paper does not release new assets.
        \item Researchers should communicate the details of the dataset/code/model as part of their submissions via structured templates. This includes details about training, license, limitations, etc. 
        \item The paper should discuss whether and how consent was obtained from people whose asset is used.
        \item At submission time, remember to anonymize your assets (if applicable). You can either create an anonymized URL or include an anonymized zip file.
    \end{itemize}

\item {\bf Crowdsourcing and Research with Human Subjects}
    \item[] Question: For crowdsourcing experiments and research with human subjects, does the paper include the full text of instructions given to participants and screenshots, if applicable, as well as details about compensation (if any)? 
    \item[] Answer: \answerNA{} 
    \item[] Justification: \answerNA{}
    \item[] Guidelines:
    \begin{itemize}
        \item The answer NA means that the paper does not involve crowdsourcing nor research with human subjects.
        \item Including this information in the supplemental material is fine, but if the main contribution of the paper involves human subjects, then as much detail as possible should be included in the main paper. 
        \item According to the NeurIPS Code of Ethics, workers involved in data collection, curation, or other labor should be paid at least the minimum wage in the country of the data collector. 
    \end{itemize}

\item {\bf Institutional Review Board (IRB) Approvals or Equivalent for Research with Human Subjects}
    \item[] Question: Does the paper describe potential risks incurred by study participants, whether such risks were disclosed to the subjects, and whether Institutional Review Board (IRB) approvals (or an equivalent approval/review based on the requirements of your country or institution) were obtained?
    \item[] Answer: \answerNA{} 
    \item[] Justification: \answerNA{}
    \item[] Guidelines:
    \begin{itemize}
        \item The answer NA means that the paper does not involve crowdsourcing nor research with human subjects.
        \item Depending on the country in which research is conducted, IRB approval (or equivalent) may be required for any human subjects research. If you obtained IRB approval, you should clearly state this in the paper. 
        \item We recognize that the procedures for this may vary significantly between institutions and locations, and we expect authors to adhere to the NeurIPS Code of Ethics and the guidelines for their institution. 
        \item For initial submissions, do not include any information that would break anonymity (if applicable), such as the institution conducting the review.
    \end{itemize}

\end{enumerate}


\begin{thebibliography}{107}
\providecommand{\natexlab}[1]{#1}
\providecommand{\url}[1]{\texttt{#1}}
\expandafter\ifx\csname urlstyle\endcsname\relax
  \providecommand{\doi}[1]{doi: #1}\else
  \providecommand{\doi}{doi: \begingroup \urlstyle{rm}\Url}\fi

\bibitem[Arora et~al.(2022)Arora, Li, and Panigrahi]{arora2022understanding}
Sanjeev Arora, Zhiyuan Li, and Abhishek Panigrahi.
\newblock Understanding gradient descent on the edge of stability in deep learning.
\newblock In \emph{International Conference on Machine Learning}, pages 948--1024. PMLR, 2022.

\bibitem[Blanc et~al.(2020)Blanc, Gupta, Valiant, and Valiant]{blanc2020implicit}
Guy Blanc, Neha Gupta, Gregory Valiant, and Paul Valiant.
\newblock Implicit regularization for deep neural networks driven by an {Ornstein-Uhlenbeck} like process.
\newblock In \emph{Conference on learning theory}, pages 483--513. PMLR, 2020.

\bibitem[Chen et~al.(2024)Chen, Liang, Huang, Real, Wang, Pham, Dong, Luong, Hsieh, Lu, et~al.]{chen2024symbolic}
Xiangning Chen, Chen Liang, Da~Huang, Esteban Real, Kaiyuan Wang, Hieu Pham, Xuanyi Dong, Thang Luong, Cho-Jui Hsieh, Yifeng Lu, et~al.
\newblock Symbolic discovery of optimization algorithms.
\newblock \emph{Advances in Neural Information Processing Systems}, 36, 2024.

\bibitem[Chizat and Bach(2020)]{chizat2020implicit}
Lenaic Chizat and Francis Bach.
\newblock Implicit bias of gradient descent for wide two-layer neural networks trained with the logistic loss.
\newblock In \emph{Conference on Learning Theory}, pages 1305--1338. PMLR, 2020.

\bibitem[Cohen et~al.(2021)Cohen, Kaur, Li, Kolter, and Talwalkar]{cohen2021gradient}
Jeremy~M Cohen, Simran Kaur, Yuanzhi Li, J~Zico Kolter, and Ameet Talwalkar.
\newblock Gradient descent on neural networks typically occurs at the edge of stability.
\newblock \emph{International Conference on Learning Representations}, 2021.

\bibitem[Cooper(2018)]{cooper2018loss}
Yaim Cooper.
\newblock The loss landscape of overparameterized neural networks.
\newblock \emph{arXiv preprint arXiv:1804.10200}, 2018.

\bibitem[Dai et~al.(2019)Dai, Yang, Yang, Carbonell, Le, and Salakhutdinov]{dai2019transformer}
Zihang Dai, Zhilin Yang, Yiming Yang, Jaime Carbonell, Quoc~V Le, and Ruslan Salakhutdinov.
\newblock Transformer-{XL}: Attentive language models beyond a fixed-length context.
\newblock \emph{arXiv preprint arXiv:1901.02860}, 2019.

\bibitem[Damian et~al.(2021)Damian, Ma, and Lee]{damian2021label}
Alex Damian, Tengyu Ma, and Jason~D Lee.
\newblock Label noise {SGD} provably prefers flat global minimizers.
\newblock \emph{Advances in Neural Information Processing Systems}, 34:\penalty0 27449--27461, 2021.

\bibitem[Deng et~al.(2009)Deng, Dong, Socher, Li, Li, and Fei-Fei]{deng2009imagenet}
Jia Deng, Wei Dong, Richard Socher, Li-Jia Li, Kai Li, and Li~Fei-Fei.
\newblock Imagenet: A large-scale hierarchical image database.
\newblock In \emph{2009 IEEE conference on computer vision and pattern recognition}, pages 248--255. Ieee, 2009.

\bibitem[Dosovitskiy et~al.(2020)Dosovitskiy, Beyer, Kolesnikov, Weissenborn, Zhai, Unterthiner, Dehghani, Minderer, Heigold, Gelly, et~al.]{dosovitskiy2020image}
Alexey Dosovitskiy, Lucas Beyer, Alexander Kolesnikov, Dirk Weissenborn, Xiaohua Zhai, Thomas Unterthiner, Mostafa Dehghani, Matthias Minderer, Georg Heigold, Sylvain Gelly, et~al.
\newblock An image is worth 16x16 words: Transformers for image recognition at scale.
\newblock \emph{arXiv preprint arXiv:2010.11929}, 2020.

\bibitem[Draxler et~al.(2018)Draxler, Veschgini, Salmhofer, and Hamprecht]{draxler2018essentially}
Felix Draxler, Kambis Veschgini, Manfred Salmhofer, and Fred Hamprecht.
\newblock Essentially no barriers in neural network energy landscape.
\newblock In \emph{International conference on machine learning}, pages 1309--1318. PMLR, 2018.

\bibitem[Du et~al.(2021)Du, Yan, Feng, Zhou, Zhen, Goh, and Tan]{du2021efficient}
Jiawei Du, Hanshu Yan, Jiashi Feng, Joey~Tianyi Zhou, Liangli Zhen, Rick Siow~Mong Goh, and Vincent~YF Tan.
\newblock Efficient sharpness-aware minimization for improved training of neural networks.
\newblock \emph{arXiv preprint arXiv:2110.03141}, 2021.

\bibitem[Du et~al.(2019)Du, Lee, Li, Wang, and Zhai]{du2019gradient}
Simon Du, Jason Lee, Haochuan Li, Liwei Wang, and Xiyu Zhai.
\newblock Gradient descent finds global minima of deep neural networks.
\newblock In \emph{International Conference on Machine Learning}, pages 1675--1685. PMLR, 2019.

\bibitem[Du et~al.(2018)Du, Zhai, Poczos, and Singh]{du2018gradient}
Simon~S Du, Xiyu Zhai, Barnabas Poczos, and Aarti Singh.
\newblock Gradient descent provably optimizes over-parameterized neural networks.
\newblock \emph{arXiv preprint arXiv:1810.02054}, 2018.

\bibitem[Duchi et~al.(2011)Duchi, Hazan, and Singer]{duchi2011adaptive}
John Duchi, Elad Hazan, and Yoram Singer.
\newblock Adaptive subgradient methods for online learning and stochastic optimization.
\newblock \emph{Journal of machine learning research}, 12\penalty0 (7), 2011.

\bibitem[Even et~al.(2023)Even, Pesme, Gunasekar, and Flammarion]{even2023s}
Mathieu Even, Scott Pesme, Suriya Gunasekar, and Nicolas Flammarion.
\newblock {(S)GD} over diagonal linear networks: Implicit regularisation, large stepsizes and edge of stability.
\newblock \emph{arXiv preprint arXiv:2302.08982}, 2023.

\bibitem[Even et~al.(2024)Even, Pesme, Gunasekar, and Flammarion]{even2024s}
Mathieu Even, Scott Pesme, Suriya Gunasekar, and Nicolas Flammarion.
\newblock {(S) GD} over diagonal linear networks: Implicit bias, large stepsizes and edge of stability.
\newblock \emph{Advances in Neural Information Processing Systems}, 36, 2024.

\bibitem[Fehrman et~al.(2020)Fehrman, Gess, and Jentzen]{fehrman2020convergence}
Benjamin Fehrman, Benjamin Gess, and Arnulf Jentzen.
\newblock Convergence rates for the stochastic gradient descent method for non-convex objective functions.
\newblock \emph{Journal of Machine Learning Research}, 21, 2020.

\bibitem[Foret et~al.(2021)Foret, Kleiner, Mobahi, and Neyshabur]{foret2020sharpness}
Pierre Foret, Ariel Kleiner, Hossein Mobahi, and Behnam Neyshabur.
\newblock Sharpness-aware minimization for efficiently improving generalization.
\newblock \emph{International Conference on Learning Representations}, 2021.

\bibitem[Gao et~al.(2020)Gao, Biderman, Black, Golding, Hoppe, Foster, Phang, He, Thite, Nabeshima, et~al.]{gao2020pile}
Leo Gao, Stella Biderman, Sid Black, Laurence Golding, Travis Hoppe, Charles Foster, Jason Phang, Horace He, Anish Thite, Noa Nabeshima, et~al.
\newblock The {Pile}: An 800{GB} dataset of diverse text for language modeling.
\newblock \emph{arXiv preprint arXiv:2101.00027}, 2020.

\bibitem[Garipov et~al.(2018)Garipov, Izmailov, Podoprikhin, Vetrov, and Wilson]{garipov2018loss}
Timur Garipov, Pavel Izmailov, Dmitrii Podoprikhin, Dmitry~P Vetrov, and Andrew~G Wilson.
\newblock Loss surfaces, mode connectivity, and fast ensembling of dnns.
\newblock \emph{Advances in neural information processing systems}, 31, 2018.

\bibitem[Gatmiry et~al.(2023)Gatmiry, Li, Chuang, Reddi, Ma, and Jegelka]{gatmiry2023inductive}
Khashayar Gatmiry, Zhiyuan Li, Ching-Yao Chuang, Sashank Reddi, Tengyu Ma, and Stefanie Jegelka.
\newblock The inductive bias of flatness regularization for deep matrix factorization.
\newblock \emph{arXiv preprint arXiv:2306.13239}, 2023.

\bibitem[George et~al.(2018)George, Laurent, Bouthillier, Ballas, and Vincent]{george2018fast}
Thomas George, C{\'e}sar Laurent, Xavier Bouthillier, Nicolas Ballas, and Pascal Vincent.
\newblock Fast approximate natural gradient descent in a {K}ronecker-factored eigenbasis.
\newblock \emph{Advances in Neural Information Processing Systems}, 31, 2018.

\bibitem[Gokaslan and Cohen(2019)]{Gokaslan2019OpenWeb}
Aaron Gokaslan and Vanya Cohen.
\newblock Openwebtext corpus.
\newblock \url{http://Skylion007.github.io/OpenWebTextCorpus}, 2019.

\bibitem[Goyal et~al.(2017)Goyal, Doll{\'a}r, Girshick, Noordhuis, Wesolowski, Kyrola, Tulloch, Jia, and He]{goyal2017accurate}
Priya Goyal, Piotr Doll{\'a}r, Ross Girshick, Pieter Noordhuis, Lukasz Wesolowski, Aapo Kyrola, Andrew Tulloch, Yangqing Jia, and Kaiming He.
\newblock Accurate, large minibatch {SGD}: Training {ImageNet} in 1 hour.
\newblock \emph{arXiv preprint arXiv:1706.02677}, 2017.

\bibitem[Grosse and Martens(2016)]{grosse2016kronecker}
Roger Grosse and James Martens.
\newblock A {K}ronecker-factored approximate {F}isher matrix for convolution layers.
\newblock In \emph{International Conference on Machine Learning}, pages 573--582. PMLR, 2016.

\bibitem[Gunasekar et~al.(2018)Gunasekar, Lee, Soudry, and Srebro]{gunasekar2018implicit}
Suriya Gunasekar, Jason~D Lee, Daniel Soudry, and Nati Srebro.
\newblock Implicit bias of gradient descent on linear convolutional networks.
\newblock \emph{Advances in neural information processing systems}, 31, 2018.

\bibitem[He et~al.(2016)He, Zhang, Ren, and Sun]{he2016deep}
Kaiming He, Xiangyu Zhang, Shaoqing Ren, and Jian Sun.
\newblock Deep residual learning for image recognition.
\newblock In \emph{Proceedings of the IEEE conference on computer vision and pattern recognition}, pages 770--778, 2016.

\bibitem[Heo et~al.(2020)Heo, Chun, Oh, Han, Yun, Kim, Uh, and Ha]{heo2020adamp}
Byeongho Heo, Sanghyuk Chun, Seong~Joon Oh, Dongyoon Han, Sangdoo Yun, Gyuwan Kim, Youngjung Uh, and Jung-Woo Ha.
\newblock Adamp: Slowing down the slowdown for momentum optimizers on scale-invariant weights.
\newblock \emph{arXiv preprint arXiv:2006.08217}, 2020.

\bibitem[Hochreiter and Schmidhuber(1997)]{hochreiter1997flat}
Sepp Hochreiter and J{\"u}rgen Schmidhuber.
\newblock Flat minima.
\newblock \emph{Neural computation}, 9\penalty0 (1):\penalty0 1--42, 1997.

\bibitem[Hoffer et~al.(2017)Hoffer, Hubara, and Soudry]{hoffer2017train}
Elad Hoffer, Itay Hubara, and Daniel Soudry.
\newblock Train longer, generalize better: closing the generalization gap in large batch training of neural networks.
\newblock \emph{arXiv preprint arXiv:1705.08741}, 2017.

\bibitem[Jastrz{\k{e}}bski et~al.(2017)Jastrz{\k{e}}bski, Kenton, Arpit, Ballas, Fischer, Bengio, and Storkey]{jastrzkebski2017three}
Stanis{\l}aw Jastrz{\k{e}}bski, Zachary Kenton, Devansh Arpit, Nicolas Ballas, Asja Fischer, Yoshua Bengio, and Amos Storkey.
\newblock Three factors influencing minima in {SGD}.
\newblock \emph{arXiv preprint arXiv:1711.04623}, 2017.

\bibitem[Ji and Telgarsky(2019{\natexlab{a}})]{ji2018gradient}
Ziwei Ji and Matus Telgarsky.
\newblock Gradient descent aligns the layers of deep linear networks.
\newblock \emph{International Conference on Learning Representations}, 2019{\natexlab{a}}.

\bibitem[Ji and Telgarsky(2019{\natexlab{b}})]{ji2018risk}
Ziwei Ji and Matus Telgarsky.
\newblock Risk and parameter convergence of logistic regression.
\newblock \emph{Conference on Learning Theory}, 2019{\natexlab{b}}.

\bibitem[Ji and Telgarsky(2020)]{ji2020directional}
Ziwei Ji and Matus Telgarsky.
\newblock Directional convergence and alignment in deep learning.
\newblock \emph{Advances in Neural Information Processing Systems}, 33:\penalty0 17176--17186, 2020.

\bibitem[Ji and Telgarsky(2021)]{ji2021characterizing}
Ziwei Ji and Matus Telgarsky.
\newblock Characterizing the implicit bias via a primal-dual analysis.
\newblock In \emph{Algorithmic Learning Theory}, pages 772--804. PMLR, 2021.

\bibitem[Ji et~al.(2020)Ji, Dud{\'\i}k, Schapire, and Telgarsky]{ji2020gradient}
Ziwei Ji, Miroslav Dud{\'\i}k, Robert~E Schapire, and Matus Telgarsky.
\newblock Gradient descent follows the regularization path for general losses.
\newblock In \emph{Conference on Learning Theory}, pages 2109--2136. PMLR, 2020.

\bibitem[Ji et~al.(2021)Ji, Srebro, and Telgarsky]{ji2021fast}
Ziwei Ji, Nathan Srebro, and Matus Telgarsky.
\newblock Fast margin maximization via dual acceleration.
\newblock In \emph{International Conference on Machine Learning}, pages 4860--4869. PMLR, 2021.

\bibitem[Jiang et~al.(2019)Jiang, Neyshabur, Mobahi, Krishnan, and Bengio]{jiang2019fantastic}
Yiding Jiang, Behnam Neyshabur, Hossein Mobahi, Dilip Krishnan, and Samy Bengio.
\newblock Fantastic generalization measures and where to find them.
\newblock In \emph{International Conference on Learning Representations}, 2019.

\bibitem[Kaddour(2023)]{kaddour2023minipile}
Jean Kaddour.
\newblock The {MiniPile} challenge for data-efficient language models.
\newblock \emph{arXiv preprint arXiv:2304.08442}, 2023.

\bibitem[Keskar et~al.(2016)Keskar, Mudigere, Nocedal, Smelyanskiy, and Tang]{keskar2016large}
Nitish~Shirish Keskar, Dheevatsa Mudigere, Jorge Nocedal, Mikhail Smelyanskiy, and Ping Tak~Peter Tang.
\newblock On large-batch training for deep learning: Generalization gap and sharp minima.
\newblock In \emph{International Conference on Learning Representations}, 2016.

\bibitem[Kingma and Ba(2014)]{kingma2014adam}
Diederik~P Kingma and Jimmy Ba.
\newblock {Adam}: A method for stochastic optimization.
\newblock \emph{arXiv preprint arXiv:1412.6980}, 2014.

\bibitem[Krizhevsky and Hinton(2009)]{krizhevsky2009learning}
Alex Krizhevsky and Geoffrey Hinton.
\newblock Learning multiple layers of features from tiny images, 2009.
\newblock URL \url{https://www.cs.toronto.edu/~kriz/cifar.html}.

\bibitem[Kunin et~al.(2023)Kunin, Yamamura, Ma, and Ganguli]{kunin2022asymmetric}
Daniel Kunin, Atsushi Yamamura, Chao Ma, and Surya Ganguli.
\newblock The asymmetric maximum margin bias of quasi-homogeneous neural networks.
\newblock \emph{International Conference on Learning Representations}, 2023.

\bibitem[Kwon et~al.(2021)Kwon, Kim, Park, and Choi]{kwon2021asam}
Jungmin Kwon, Jeongseop Kim, Hyunseo Park, and In~Kwon Choi.
\newblock {ASAM}: Adaptive sharpness-aware minimization for scale-invariant learning of deep neural networks.
\newblock In \emph{International Conference on Machine Learning}, pages 5905--5914. PMLR, 2021.

\bibitem[Li et~al.(2017)Li, Tai, and Weinan]{li2017stochastic}
Qianxiao Li, Cheng Tai, and E~Weinan.
\newblock Stochastic modified equations and adaptive stochastic gradient algorithms.
\newblock In \emph{International Conference on Machine Learning}, pages 2101--2110. PMLR, 2017.

\bibitem[Li et~al.(2019)Li, Tai, and Weinan]{li2019stochastic}
Qianxiao Li, Cheng Tai, and E~Weinan.
\newblock Stochastic modified equations and dynamics of stochastic gradient algorithms {I}: Mathematical foundations.
\newblock \emph{The Journal of Machine Learning Research}, 20\penalty0 (1):\penalty0 1474--1520, 2019.

\bibitem[Li et~al.(2022)Li, Wang, and Arora]{li2021happens}
Zhiyuan Li, Tianhao Wang, and Sanjeev Arora.
\newblock What happens after {SGD} reaches zero loss?--a mathematical framework.
\newblock \emph{International Conference on Learning Representations}, 2022.

\bibitem[Liu et~al.(2023)Liu, Xie, Li, and Ma]{liu2023same}
Hong Liu, Sang~Michael Xie, Zhiyuan Li, and Tengyu Ma.
\newblock Same pre-training loss, better downstream: Implicit bias matters for language models.
\newblock In \emph{International Conference on Machine Learning}, pages 22188--22214. PMLR, 2023.

\bibitem[Liu et~al.(2024)Liu, Li, Hall, Liang, and Ma]{liu2023sophia}
Hong Liu, Zhiyuan Li, David Hall, Percy Liang, and Tengyu Ma.
\newblock Sophia: A scalable stochastic second-order optimizer for language model pre-training.
\newblock \emph{International Conference on Learning Representations}, 2024.

\bibitem[Liu et~al.(2019{\natexlab{a}})Liu, Jiang, He, Chen, Liu, Gao, and Han]{liu2019variance}
Liyuan Liu, Haoming Jiang, Pengcheng He, Weizhu Chen, Xiaodong Liu, Jianfeng Gao, and Jiawei Han.
\newblock On the variance of the adaptive learning rate and beyond.
\newblock \emph{arXiv preprint arXiv:1908.03265}, 2019{\natexlab{a}}.

\bibitem[Liu et~al.(2019{\natexlab{b}})Liu, Ott, Goyal, Du, Joshi, Chen, Levy, Lewis, Zettlemoyer, and Stoyanov]{liu2019roberta}
Yinhan Liu, Myle Ott, Naman Goyal, Jingfei Du, Mandar Joshi, Danqi Chen, Omer Levy, Mike Lewis, Luke Zettlemoyer, and Veselin Stoyanov.
\newblock Roberta: A robustly optimized bert pretraining approach.
\newblock \emph{arXiv preprint arXiv:1907.11692}, 2019{\natexlab{b}}.

\bibitem[Liu et~al.(2022)Liu, Mai, Chen, Hsieh, and You]{liu2022towards}
Yong Liu, Siqi Mai, Xiangning Chen, Cho-Jui Hsieh, and Yang You.
\newblock Towards efficient and scalable sharpness-aware minimization.
\newblock In \emph{Proceedings of the IEEE/CVF Conference on Computer Vision and Pattern Recognition}, pages 12360--12370, 2022.

\bibitem[Loshchilov and Hutter(2017)]{loshchilov2017decoupled}
Ilya Loshchilov and Frank Hutter.
\newblock Decoupled weight decay regularization.
\newblock \emph{arXiv preprint arXiv:1711.05101}, 2017.

\bibitem[Luo et~al.(2019)Luo, Xiong, Liu, and Sun]{luo2019adaptive}
Liangchen Luo, Yuanhao Xiong, Yan Liu, and Xu~Sun.
\newblock Adaptive gradient methods with dynamic bound of learning rate.
\newblock \emph{arXiv preprint arXiv:1902.09843}, 2019.

\bibitem[Lyu and Li(2019)]{lyu2019gradient}
Kaifeng Lyu and Jian Li.
\newblock Gradient descent maximizes the margin of homogeneous neural networks.
\newblock \emph{arXiv preprint arXiv:1906.05890}, 2019.

\bibitem[Lyu et~al.(2021)Lyu, Li, Wang, and Arora]{lyu2021gradient}
Kaifeng Lyu, Zhiyuan Li, Runzhe Wang, and Sanjeev Arora.
\newblock Gradient descent on two-layer nets: Margin maximization and simplicity bias.
\newblock \emph{Advances in Neural Information Processing Systems}, 34, 2021.

\bibitem[Lyu et~al.(2022)Lyu, Li, and Arora]{lyu2022understanding}
Kaifeng Lyu, Zhiyuan Li, and Sanjeev Arora.
\newblock Understanding the generalization benefit of normalization layers: Sharpness reduction.
\newblock \emph{Advances in Neural Information Processing Systems}, 35:\penalty0 34689--34708, 2022.

\bibitem[Ma and Ying(2021)]{ma2021linear}
Chao Ma and Lexing Ying.
\newblock On linear stability of {SGD} and input-smoothness of neural networks.
\newblock \emph{Advances in Neural Information Processing Systems}, 34:\penalty0 16805--16817, 2021.

\bibitem[Ma et~al.(2022)Ma, Kunin, Wu, and Ying]{ma2022beyond}
Chao Ma, Daniel Kunin, Lei Wu, and Lexing Ying.
\newblock Beyond the quadratic approximation: The multiscale structure of neural network loss landscapes.
\newblock \emph{Journal of Machine Learning}, 1\penalty0 (3):\penalty0 247--267, 2022.

\bibitem[Martens and Grosse(2015)]{martens2015optimizing}
James Martens and Roger Grosse.
\newblock Optimizing neural networks with {K}ronecker-factored approximate curvature.
\newblock In \emph{International conference on machine learning}, pages 2408--2417. PMLR, 2015.

\bibitem[Merity et~al.(2016)Merity, Xiong, Bradbury, and Socher]{merity2016pointer}
Stephen Merity, Caiming Xiong, James Bradbury, and Richard Socher.
\newblock Pointer sentinel mixture models.
\newblock \emph{arXiv preprint arXiv:1609.07843}, 2016.

\bibitem[Mi et~al.(2022)Mi, Shen, Ren, Zhou, Sun, Ji, and Tao]{mi2022make}
Peng Mi, Li~Shen, Tianhe Ren, Yiyi Zhou, Xiaoshuai Sun, Rongrong Ji, and Dacheng Tao.
\newblock Make sharpness-aware minimization stronger: A sparsified perturbation approach.
\newblock \emph{Advances in Neural Information Processing Systems}, 35:\penalty0 30950--30962, 2022.

\bibitem[Mueller et~al.(2024)Mueller, Vlaar, Rolnick, and Hein]{mueller2024normalization}
Maximilian Mueller, Tiffany Vlaar, David Rolnick, and Matthias Hein.
\newblock Normalization layers are all that sharpness-aware minimization needs.
\newblock \emph{Advances in Neural Information Processing Systems}, 36, 2024.

\bibitem[Mulayoff et~al.(2021)Mulayoff, Michaeli, and Soudry]{mulayoff2021implicit}
Rotem Mulayoff, Tomer Michaeli, and Daniel Soudry.
\newblock The implicit bias of minima stability: A view from function space.
\newblock \emph{Advances in Neural Information Processing Systems}, 34:\penalty0 17749--17761, 2021.

\bibitem[Nacson et~al.(2019{\natexlab{a}})Nacson, Gunasekar, Lee, Srebro, and Soudry]{nacson2019lexicographic}
Mor~Shpigel Nacson, Suriya Gunasekar, Jason Lee, Nathan Srebro, and Daniel Soudry.
\newblock Lexicographic and depth-sensitive margins in homogeneous and non-homogeneous deep models.
\newblock In \emph{International Conference on Machine Learning}, pages 4683--4692. PMLR, 2019{\natexlab{a}}.

\bibitem[Nacson et~al.(2019{\natexlab{b}})Nacson, Lee, Gunasekar, Savarese, Srebro, and Soudry]{nacson2019convergence}
Mor~Shpigel Nacson, Jason Lee, Suriya Gunasekar, Pedro Henrique~Pamplona Savarese, Nathan Srebro, and Daniel Soudry.
\newblock Convergence of gradient descent on separable data.
\newblock In \emph{The 22nd International Conference on Artificial Intelligence and Statistics}, pages 3420--3428. PMLR, 2019{\natexlab{b}}.

\bibitem[Nacson et~al.(2019{\natexlab{c}})Nacson, Srebro, and Soudry]{nacson2019stochastic}
Mor~Shpigel Nacson, Nathan Srebro, and Daniel Soudry.
\newblock Stochastic gradient descent on separable data: Exact convergence with a fixed learning rate.
\newblock In \emph{The 22nd International Conference on Artificial Intelligence and Statistics}, pages 3051--3059. PMLR, 2019{\natexlab{c}}.

\bibitem[Nacson et~al.(2022)Nacson, Ravichandran, Srebro, and Soudry]{nacson2022implicit}
Mor~Shpigel Nacson, Kavya Ravichandran, Nathan Srebro, and Daniel Soudry.
\newblock Implicit bias of the step size in linear diagonal neural networks.
\newblock In \emph{International Conference on Machine Learning}, pages 16270--16295. PMLR, 2022.

\bibitem[Nesterov(1983)]{nesterov1983method}
Yurii Nesterov.
\newblock A method of solving a convex programming problem with convergence rate $o(1/k^2)$.
\newblock \emph{Doklady Akademii Nauk SSSR}, 269\penalty0 (3):\penalty0 543, 1983.

\bibitem[Neyshabur et~al.(2014)Neyshabur, Tomioka, and Srebro]{neyshabur2014search}
Behnam Neyshabur, Ryota Tomioka, and Nathan Srebro.
\newblock In search of the real inductive bias: On the role of implicit regularization in deep learning.
\newblock \emph{arXiv preprint arXiv:1412.6614}, 2014.

\bibitem[Pascanu et~al.(2012)Pascanu, Mikolov, and Bengio]{pascanu2012understanding}
Razvan Pascanu, Tomas Mikolov, and Yoshua Bengio.
\newblock Understanding the exploding gradient problem.
\newblock \emph{CoRR, abs/1211.5063}, 2\penalty0 (417):\penalty0 1, 2012.

\bibitem[Pesme and Flammarion(2023)]{pesme2023saddle}
Scott Pesme and Nicolas Flammarion.
\newblock Saddle-to-saddle dynamics in diagonal linear networks.
\newblock \emph{Advances in Neural Information Processing Systems}, 2023.

\bibitem[Pesme et~al.(2021)Pesme, Pillaud-Vivien, and Flammarion]{pesme2021implicit}
Scott Pesme, Loucas Pillaud-Vivien, and Nicolas Flammarion.
\newblock Implicit bias of {SGD} for diagonal linear networks: a provable benefit of stochasticity.
\newblock \emph{Advances in Neural Information Processing Systems}, 34:\penalty0 29218--29230, 2021.

\bibitem[Radford et~al.(2019)Radford, Wu, Child, Luan, Amodei, and Sutskever]{radford2019language}
Alec Radford, Jeffrey Wu, Rewon Child, David Luan, Dario Amodei, and Ilya Sutskever.
\newblock Language models are unsupervised multitask learners.
\newblock \emph{OpenAI blog}, 1\penalty0 (8):\penalty0 9, 2019.

\bibitem[Robbins and Monro(1951)]{robbins1951stochastic}
Herbert Robbins and Sutton Monro.
\newblock A stochastic approximation method.
\newblock \emph{The annals of mathematical statistics}, pages 400--407, 1951.

\bibitem[Rumelhart et~al.(1986)Rumelhart, Hinton, and Williams]{rumelhart1986learning}
David~E Rumelhart, Geoffrey~E Hinton, and Ronald~J Williams.
\newblock Learning representations by back-propagating errors.
\newblock \emph{Nature}, 323\penalty0 (6088):\penalty0 533--536, 1986.

\bibitem[Shazeer(2020)]{shazeer2020glu}
Noam Shazeer.
\newblock {GLU} variants improve transformer.
\newblock \emph{arXiv preprint arXiv:2002.05202}, 2020.

\bibitem[Soudry et~al.(2018)Soudry, Hoffer, Nacson, Gunasekar, and Srebro]{soudry2018implicit}
Daniel Soudry, Elad Hoffer, Mor~Shpigel Nacson, Suriya Gunasekar, and Nathan Srebro.
\newblock The implicit bias of gradient descent on separable data.
\newblock \emph{The Journal of Machine Learning Research}, 19\penalty0 (1):\penalty0 2822--2878, 2018.

\bibitem[Su et~al.(2024)Su, Ahmed, Lu, Pan, Bo, and Liu]{su2024roformer}
Jianlin Su, Murtadha Ahmed, Yu~Lu, Shengfeng Pan, Wen Bo, and Yunfeng Liu.
\newblock Roformer: Enhanced transformer with rotary position embedding.
\newblock \emph{Neurocomputing}, 568:\penalty0 127063, 2024.

\bibitem[Sutskever et~al.(2013)Sutskever, Martens, Dahl, and Hinton]{sutskever2013importance}
Ilya Sutskever, James Martens, George Dahl, and Geoffrey Hinton.
\newblock On the importance of initialization and momentum in deep learning.
\newblock In \emph{International conference on machine learning}, pages 1139--1147. PMLR, 2013.

\bibitem[Tieleman(2012)]{tieleman2012lecture}
Tijmen Tieleman.
\newblock Lecture 6.5-rmsprop: Divide the gradient by a running average of its recent magnitude.
\newblock \emph{COURSERA: Neural networks for machine learning}, 4\penalty0 (2):\penalty0 26, 2012.

\bibitem[Touvron et~al.(2023)Touvron, Lavril, Izacard, Martinet, Lachaux, Lacroix, Rozi{\`e}re, Goyal, Hambro, Azhar, et~al.]{touvron2023llama}
Hugo Touvron, Thibaut Lavril, Gautier Izacard, Xavier Martinet, Marie-Anne Lachaux, Timoth{\'e}e Lacroix, Baptiste Rozi{\`e}re, Naman Goyal, Eric Hambro, Faisal Azhar, et~al.
\newblock {LLaMA}: Open and efficient foundation language models.
\newblock \emph{arXiv preprint arXiv:2302.13971}, 2023.

\bibitem[Ujv{\'a}ry et~al.(2022)Ujv{\'a}ry, Telek, Kerekes, M{\'e}sz{\'a}ros, and Husz{\'a}r]{ujvary2022rethinking}
Szilvia Ujv{\'a}ry, Zsigmond Telek, Anna Kerekes, Anna M{\'e}sz{\'a}ros, and Ferenc Husz{\'a}r.
\newblock Rethinking sharpness-aware minimization as variational inference.
\newblock \emph{arXiv preprint arXiv:2210.10452}, 2022.

\bibitem[Vardi(2023)]{vardi2023implicit}
Gal Vardi.
\newblock On the implicit bias in deep-learning algorithms.
\newblock \emph{Communications of the ACM}, 66\penalty0 (6):\penalty0 86--93, 2023.

\bibitem[Vardi et~al.(2022)Vardi, Shamir, and Srebro]{vardi2022margin}
Gal Vardi, Ohad Shamir, and Nati Srebro.
\newblock On margin maximization in linear and {ReLU} networks.
\newblock \emph{Advances in Neural Information Processing Systems}, 35:\penalty0 37024--37036, 2022.

\bibitem[Vaswani et~al.(2017)Vaswani, Shazeer, Parmar, Uszkoreit, Jones, Gomez, Kaiser, and Polosukhin]{vaswani2017attention}
Ashish Vaswani, Noam Shazeer, Niki Parmar, Jakob Uszkoreit, Llion Jones, Aidan~N Gomez, {\L}ukasz Kaiser, and Illia Polosukhin.
\newblock Attention is all you need.
\newblock \emph{Advances in neural information processing systems}, 30, 2017.

\bibitem[Vershynin(2018)]{vershynin2018high}
Roman Vershynin.
\newblock \emph{High-dimensional probability: An introduction with applications in data science}, volume~47.
\newblock Cambridge university press, 2018.

\bibitem[Wang et~al.(2022)Wang, Hanashiro, Guha, and Abernethy]{wang2022accelerated}
Guanghui Wang, Rafael Hanashiro, Etash Guha, and Jacob Abernethy.
\newblock On accelerated perceptrons and beyond.
\newblock \emph{arXiv preprint arXiv:2210.09371}, 2022.

\bibitem[Wang and Ma(2023)]{wang2023understanding}
Mingze Wang and Chao Ma.
\newblock Understanding multi-phase optimization dynamics and rich nonlinear behaviors of {ReLU} networks.
\newblock \emph{Advances in Neural Information Processing Systems}, 2023.

\bibitem[Wang and Wu(2023)]{wang2023noise}
Mingze Wang and Lei Wu.
\newblock The noise geometry of stochastic gradient descent: A quantitative and analytical characterization.
\newblock \emph{arXiv preprint arXiv:2310.00692}, 2023.

\bibitem[Wang et~al.(2024)Wang, Min, and Wu]{wang2023achieving}
Mingze Wang, Zeping Min, and Lei Wu.
\newblock Achieving margin maximization exponentially fast via progressive norm rescaling.
\newblock \emph{International Conference on Machine Learning}, 2024.

\bibitem[Wen et~al.(2023{\natexlab{a}})Wen, Ma, and Li]{wen2023how}
Kaiyue Wen, Tengyu Ma, and Zhiyuan Li.
\newblock How sharpness-aware minimization minimizes sharpness?
\newblock In \emph{The Eleventh International Conference on Learning Representations}, 2023{\natexlab{a}}.

\bibitem[Wen et~al.(2023{\natexlab{b}})Wen, Ma, and Li]{wen2023sharpness}
Kaiyue Wen, Tengyu Ma, and Zhiyuan Li.
\newblock Sharpness minimization algorithms do not only minimize sharpness to achieve better generalization.
\newblock \emph{Advances in Neural Information Processing Systems}, 2023{\natexlab{b}}.

\bibitem[Woodworth et~al.(2020)Woodworth, Gunasekar, Lee, Moroshko, Savarese, Golan, Soudry, and Srebro]{woodworth2020kernel}
Blake Woodworth, Suriya Gunasekar, Jason~D Lee, Edward Moroshko, Pedro Savarese, Itay Golan, Daniel Soudry, and Nathan Srebro.
\newblock Kernel and rich regimes in overparametrized models.
\newblock In \emph{Conference on Learning Theory}, pages 3635--3673. PMLR, 2020.

\bibitem[Wu and Su(2023)]{wu2023implicitstability}
Lei Wu and Weijie~J Su.
\newblock The implicit regularization of dynamical stability in stochastic gradient descent.
\newblock In \emph{The 40th International Conference on Machine Learning}, volume 202 of \emph{Proceedings of Machine Learning Research}, pages 37656--37684. PMLR, 2023.

\bibitem[Wu et~al.(2017)Wu, Zhu, and E]{wu2017towards}
Lei Wu, Zhanxing Zhu, and Weinan E.
\newblock Towards understanding generalization of deep learning: Perspective of loss landscapes.
\newblock \emph{arXiv preprint arXiv:1706.10239}, 2017.

\bibitem[Wu et~al.(2018)Wu, Ma, and E]{wu2018sgd}
Lei Wu, Chao Ma, and Weinan E.
\newblock How {SGD} selects the global minima in over-parameterized learning: A dynamical stability perspective.
\newblock \emph{Advances in Neural Information Processing Systems}, 31:\penalty0 8279--8288, 2018.

\bibitem[Wu et~al.(2022)Wu, Wang, and Su]{wu2022does}
Lei Wu, Mingze Wang, and Weijie~J Su.
\newblock The alignment property of {SGD} noise and how it helps select flat minima: A stability analysis.
\newblock \emph{Advances in Neural Information Processing Systems}, 35:\penalty0 4680--4693, 2022.

\bibitem[Xie et~al.(2022{\natexlab{a}})Xie, Zhou, Li, Lin, and Yan]{xie2022adan}
Xingyu Xie, Pan Zhou, Huan Li, Zhouchen Lin, and Shuicheng Yan.
\newblock Adan: Adaptive nesterov momentum algorithm for faster optimizing deep models.
\newblock \emph{arXiv preprint arXiv:2208.06677}, 2022{\natexlab{a}}.

\bibitem[Xie et~al.(2022{\natexlab{b}})Xie, Wang, Zhang, Sato, and Sugiyama]{xie2022adaptive}
Zeke Xie, Xinrui Wang, Huishuai Zhang, Issei Sato, and Masashi Sugiyama.
\newblock Adaptive inertia: Disentangling the effects of adaptive learning rate and momentum.
\newblock In \emph{International conference on machine learning}, pages 24430--24459. PMLR, 2022{\natexlab{b}}.

\bibitem[Yao et~al.(2020)Yao, Gholami, Keutzer, and Mahoney]{yao2020pyhessian}
Zhewei Yao, Amir Gholami, Kurt Keutzer, and Michael~W Mahoney.
\newblock Pyhessian: Neural networks through the lens of the {Hessian}.
\newblock In \emph{2020 IEEE international conference on big data (Big data)}, pages 581--590. IEEE, 2020.

\bibitem[Zagoruyko and Komodakis(2016)]{zagoruyko2016wide}
Sergey Zagoruyko and Nikos Komodakis.
\newblock Wide residual networks.
\newblock \emph{arXiv preprint arXiv:1605.07146}, 2016.

\bibitem[Zhang and Sennrich(2019)]{zhang2019root}
Biao Zhang and Rico Sennrich.
\newblock Root mean square layer normalization.
\newblock \emph{Advances in Neural Information Processing Systems}, 32, 2019.

\bibitem[Zhang et~al.(2017)Zhang, Bengio, Hardt, Recht, and Vinyals]{zhang2017understanding}
Chiyuan Zhang, Samy Bengio, Moritz Hardt, Benjamin Recht, and Oriol Vinyals.
\newblock Understanding deep learning requires rethinking generalization.
\newblock In \emph{International Conference on Learning Representations}, 2017.

\bibitem[Zhu et~al.(2023)Zhu, He, Chen, Song, and Tao]{zhu2023decentralized}
Tongtian Zhu, Fengxiang He, Kaixuan Chen, Mingli Song, and Dacheng Tao.
\newblock Decentralized {SGD} and average-direction {SAM} are asymptotically equivalent.
\newblock In \emph{International Conference on Machine Learning}, pages 43005--43036. PMLR, 2023.

\bibitem[Zhuang et~al.(2020)Zhuang, Tang, Ding, Tatikonda, Dvornek, Papademetris, and Duncan]{zhuang2020adabelief}
Juntang Zhuang, Tommy Tang, Yifan Ding, Sekhar~C Tatikonda, Nicha Dvornek, Xenophon Papademetris, and James Duncan.
\newblock Adabelief optimizer: Adapting stepsizes by the belief in observed gradients.
\newblock \emph{Advances in neural information processing systems}, 33:\penalty0 18795--18806, 2020.

\end{thebibliography}
\end{document}